\DeclareRobustCommand{\KL}[2]{\ensuremath{\mathrm{KL}\left(#1\;\|\;#2\right)}}
\DeclareRobustCommand{\Ep}[2]{\ensuremath{\mathds{E}_{#1}\left[#2\right]}}
\DeclareRobustCommand{\E}[1]{\ensuremath{\mathds{E}\left[#1\right]}}
\DeclareRobustCommand{\varp}[2]{\ensuremath{\mathrm{var}_{#1}\left[#2\right]}}
\newcommand{\Norm}{\mathcal{N}}
\newcommand{\veps}{\varepsilon}
\newcommand{\deq}{\triangleq}  %
\DeclareRobustCommand{\sd}[1]{\color{black!80!white}\scriptstyle #1}
\newcommand{\mcA}{\mathcal{A}}
\newcommand{\mcD}{\mathcal{D}}
\newcommand{\mcF}{\mathcal{F}}
\newcommand{\mcH}{\mathcal{H}}
\newcommand{\mcL}{\mathcal{L}}
\newcommand{\mcO}{\mathcal{O}}
\newcommand{\mcS}{\mathcal{S}}
\newcommand{\mcX}{\mathcal{X}}
\newcommand{\mcY}{\mathcal{Y}}
\newcommand{\mcZ}{\mathcal{Z}}
\newcommand{\mdE}{\mathds{E}}
\newcommand{\mdN}{\mathds{N}}
\newcommand{\mdP}{\mathds{P}}
\newcommand{\mdR}{\mathds{R}}
\tikzstyle{latent} = [circle,fill=white,draw=black,inner sep=1pt,
\tikzstyle{obs} = [latent,fill=gray!25]
\tikzstyle{const} = [rectangle, inner sep=0pt, node distance=1]
\tikzstyle{factor} = [rectangle, fill=black,minimum size=5pt, inner
\tikzstyle{det} = [latent, diamond]
\tikzstyle{plate} = [draw, rectangle, rounded corners, fit=#1]
\tikzstyle{wrap} = [inner sep=0pt, fit=#1]
\tikzstyle{gate} = [draw, rectangle, dashed, fit=#1]
\tikzstyle{caption} = [font=\footnotesize, node distance=0] %
\tikzstyle{plate caption} = [caption, node distance=0, inner sep=0pt,
\tikzstyle{factor caption} = [caption] %
\tikzstyle{every label} += [caption] %
\definecolor{hexcolor0xbfbfbf}{rgb}{0.749,0.749,0.749}
\tikzset{>=latex}
\tikzstyle{none}   = [inner sep=0pt]
\tikzstyle{line}   = [ -, thick, shorten <=1pt, shorten >=1pt ]
\tikzstyle{arrow}  = [ ->, thick, shorten <=1pt, shorten >=1pt ]
\tikzstyle{ardash} = [ dashed, ->, thick, shorten <=1pt, shorten >=1pt ]
\tikzstyle{box} = [rectangle, minimum width=1.5cm, minimum height=1.5cm,text centered, draw=black, inner sep=7pt]
\tikzstyle{neuron} = [circle, minimum width=4mm, very thick, draw=blue!80!black]
\tikzstyle{empty}=[circle,opacity=0.0,text opacity=1.0,inner sep=0pt]
\tikzstyle{box}=[rectangle,fill=White,draw=Black]
\tikzstyle{filled}=[circle,thick,fill=hexcolor0xbfbfbf,draw=Black]
\tikzstyle{hollow}=[circle,thick,fill=White,draw=Black]
\tikzstyle{param}=[rectangle,fill=Black,draw=Black,inner sep=0pt,minimum width=4pt,minimum height=4pt]
\tikzstyle{paramhollow}=[rectangle,thick,fill=White,draw=Black,inner sep=0pt,minimum
\pgfplotsset{compat=newest}
\pgfplotsset{plot coordinates/math parser=false}
\newlength\figureheight
\newlength\figurewidth
\newlength\figureheightsmall
\newlength\figurewidthsmall
\definecolor{POSTcolor}{rgb}{0.48, 0.20, 0.58} %
\definecolor{Qcolor}{rgb}{0.00, 0.53, 0.22} %
\tikzset{
  prefix after node/.style={
    prefix after command={\pgfextra{#1}}
  },
  /semifill/ang/.store in=\semi@ang,
  /semifill/ang=0,
  semifill/.style={
    circle, draw,
    prefix after node={
      \typeout{aaa \semi@ang}
      \let\nodename\tikz@last@fig@name
      \fill[/semifill/.cd, /semifill/.search also={/tikz}, #1]
        let \p1 = (\nodename.north), \p2 = (\nodename.center) in
        let \n1 = {\y1 - \y2} in
        (\nodename.\semi@ang) arc [radius=\n1, start angle=\semi@ang, delta angle=180];
    },
  }
}
\tikzstyle{box} = [rectangle, rounded corners, minimum width=3cm, minimum height=1cm,text centered, draw=black, inner sep=7pt]
\tikzset{
  plate/.style={draw, shape=rectangle, rounded corners=0.5ex, thick,
    minimum width=3.1cm, text width=3.1cm, align=right, inner sep=10pt, inner ysep=10pt, 
    append after command={node[above left= 1pt of \tikzlastnode.south east] {#1}}}
}
\DeclareRobustCommand{\sd}[1]{\color{black!80!white}\scriptstyle #1}
\DeclareMathOperator*{\argmin}{arg\,min}
\DeclareMathOperator*{\argmax}{arg\,max}
\newcommand{\lr}[1]{\left (#1\right)}
\let\P\undefined
\NewDocumentCommand{\P}{o}{\mathbb P{\IfValueT{#1}{\lr{#1}}}}
\newcommand{\y}{\mathbf y}
\newcommand{\p}{\mathbf p}
\let\emptyset\varnothing
\newtheorem{theorem}{Theorem}%
\newtheorem{lemma}{Lemma}
\newcommand{\St}{\mathcal{S}}
\newcommand{\Ac}{\mathcal{A}}
\newcommand{\BibTeX}{B\kern-.05em{\sc i\kern-.025em b}\kern-.08em\TeX}
\newcommand\blfootnote[1]{%
    \begingroup
    \renewcommand{\thefootnote}{}%
    \renewcommand{\@makefnmark}{}%
    \footnote{#1}%
    \addtocounter{footnote}{-1}%
    \endgroup
}
\crefname{equation}{}{}
\begin{document}

\paperid{8131}

\title{Deep Exploration with PAC-Bayes}

\author[A]{\fnms{Bahareh}~\snm{Tasdighi}\thanks{Corresponding author: \texttt{tasdighi@imada.sdu.dk}\\ Full paper including appendix at \url{https://arxiv.org/abs/2402.03055}}}
\author[A]{\fnms{Manuel}~\snm{Haussmann}}
\author[A]{\fnms{Nicklas}~\snm{Werge}}
\author[A]{\fnms{Yi-Shan}~\snm{Wu}}
\author[A]{\fnms{Melih}~\snm{Kandemir}}

\address[A]{Department of Mathematics and Computer Science\\ University of Southern Denmark}

\def\promise#1{{\color{green!40!black}#1}}

\begin{frontmatter}

\begin{abstract}
Reinforcement learning (RL) for continuous control under delayed rewards is an under-explored problem despite its significance in real-world applications. Many complex skills are based on intermediate ones as prerequisites. For instance, a humanoid locomotor must learn how to stand before it can learn to walk. To cope with delayed reward, an agent must perform deep exploration. However, existing deep exploration methods are designed for small discrete action spaces, and their generalization to state-of-the-art continuous control remains unproven. 
We address the deep exploration problem for the first time from a PAC-Bayesian perspective in the context of actor-critic learning. 
To do this, we quantify the error of the Bellman operator through a PAC-Bayes bound, where a bootstrapped ensemble of critic networks represents the posterior distribution, and their targets serve as a data-informed function-space prior. 
We derive an objective function from this bound and use it to train the critic ensemble. Each critic trains an individual soft actor network, implemented as a shared trunk and critic-specific heads.
The agent performs deep exploration by acting epsilon-softly on a randomly chosen actor head. Our proposed algorithm, named {\it PAC-Bayesian Actor-Critic (PBAC)}, is the only algorithm to consistently discover delayed rewards on continuous control tasks with varying difficulty.
\end{abstract}

\end{frontmatter}

\section{Introduction}\label{sec:intro}

Many autonomous agents of the future will be expected to solve broadly defined continuous control tasks. For example, humanoid robots are populating factories and warehouses. These versatile platforms will be trained to automate increasingly cognitively demanding processes with sparse or delayed rewards. Their physical capabilities are at present far from optimal because existing reinforcement learning practices do not support solving difficult continuous control tasks in such reward situations. The best practices of model-based reinforcement learning only generalize across a large set of tasks when rewards are dense \citep{Hafner_2025,scannell2025discrete,zhang2021made}. Their dependency on multi-stage latent planning on powerful world models inflates computational requirements and limits deployment in low-energy use cases, such as soft robotics.

Solving control tasks with delayed rewards without an environment model requires smart exploration heuristics. Common algorithms like Soft Actor-Critic \citep{haarnoja2018soft} rely on Boltzmann exploration, while TD3 \citep{fujimoto2018addressing} uses Gaussian perturbation. Such random and undirected exploration methods are prohibitively sample-inefficient in delayed reward scenarios. \emph{Deep exploration} \cite{osband2016deep}  suggests directing the search towards under-explored areas of the state-action space proportionally to the uncertainty of an agent’s predictions regarding these areas. Several approaches formulate deep exploration as approximate Bayesian inference of the action-value estimate of the next state, and use the resulting probabilistic representation for search \citep{fellows2024ben,osband2016deep,osband2018randomized,touati2020randomized}. Pseudo-count approaches estimate the visitation count of state-action pairs by distilling a randomly initialized predictor into a parametric auxiliary network \citep{burda2018exploration,nikulin2023anti,yang2024exploration}. These strategies do not work in continuous control scenarios, mainly due to the difficulty in quantifying the uncertainty of a high-capacity critic network necessary to model a highly non-linear and continuous state-action space.

\begin{figure}
\centering
\pgfdeclarelayer{background}
\pgfsetlayers{background,main}
\adjustbox{max width=0.99\columnwidth}{
\begin{tikzpicture}[
    concept node/.style={draw=teal!80!black, thick, fill=teal!20!white, rounded corners, minimum width=8em, minimum height=2em, align=center, font=\sffamily},
    method node/.style={draw=brown!80!black, thick, fill=brown!40!white, rounded corners, minimum width=8em, minimum height=4em, align=center, font=\sffamily},
    key concept/.style={draw=blue!60!black, thick, fill=blue!15!white, rounded corners, minimum width=10em, minimum height=3.5em, align=center, font=\sffamily\bfseries},
    model node/.style={draw=orange!90!black, ultra thick, fill=orange!20, rounded corners, minimum width=14em, minimum height=3em, font=\bfseries\sffamily, align=center},
    approach node/.style={draw=purple!40!black, thick, fill=purple!10!white, rounded corners, minimum width=11em, minimum height=3.5em, align=center, font=\sffamily},
    group/.style={draw=gray!50, rounded corners, thick, inner sep=1.0em},
    arrow/.style={-{Stealth[scale=1.2]}, thick, draw=gray!90},
    node distance=1.7em
]
    \node[concept node] (modelfree) {Model-free Learning};
    \node[concept node, left=4em of modelfree] (delayed) {Delayed Reward};
    \node[concept node, right=4em  of modelfree] (control) {Continuous Control};
 
    \node[group, fit=(delayed) (modelfree) (control), label={[anchor=north west,font=\bfseries\itshape]north west:{\color{white!30!black} Our Scope}}, yshift=0.3em, dashed] {};

    \node[approach node, below=3em of delayed] (exploration) {\textsc{Deep exploration}\\ \scriptsize{via uncertainty quantification}};
    \node[approach node, below=3em of modelfree] (selfreference) {\textsc{Self-referencing}\\\scriptsize{via bootstrapping}};
    \node[approach node, below=3em of control] (deeplearning) {\textsc{Non-linear dynamics}\\\scriptsize{via deep learning}};

    \node[group, fit=(exploration) (selfreference) (deeplearning), label={[anchor=north west,font=\bfseries\itshape]north west:{\color{white!30!black} Approaches}}, yshift=0.3em, dashed] {};

    \node[method node, below left=3.5em and -4em of selfreference] (critic) {\textsc{\bf Critic}\\ \footnotesize{PAC-Bayesian}\\\footnotesize{Policy Evaluation}};
    \node[method node, below right=3.5em and -4em of selfreference] (actor) {\textsc{\bf Actor}\\ \footnotesize{Posterior Sampling}};

    \begin{pgfonlayer}{background}
    \node[group, fit=(critic) (actor), label={[anchor=north,font=\bfseries]north:{\color{orange!82!black}{\Large\textsc{PAC-Bayesian Actor Critic (PBAC)}}}}, yshift=0.7em, fill=orange!10!white,draw=orange!80!black, minimum height=7em, minimum width=25em] (pbac) {};
    \end{pgfonlayer}
    
    \draw[arrow] (delayed) -- (exploration);
    \draw[arrow] (modelfree) -- (selfreference);
    \draw[arrow] (control) -- (deeplearning);

    \draw[arrow,out=270, in=180] ($(exploration.south) -(1,0)$) to ($(pbac.west) +(0,0.5)$);
    \draw[arrow] (selfreference) -- (pbac);
    \draw[arrow,out=270, in=0] ($(deeplearning.south) + (1,0)$) to ($(pbac.east) +(0,0.5)$);
    
\end{tikzpicture}}
\caption{From model-free continuous control from delayed rewards to our proposed approach. PAC-Bayesian analysis enables the training of probabilistic action-value functions with bootstrapping. Posterior sampling is a theoretically grounded approach to train an actor network using a probabilistic critic. Together, these components yield \emph{PAC-Bayesian Actor Critic (PBAC)}.}
\label{fig:concept}
\end{figure}
\vspace{0.5cm}
We claim that model-free continuous control from delayed rewards cannot be solved by handcrafted adaptations of existing approaches. It requires an original combination of subject areas not yet in strong connection yet. We suggest the deductive reasoning in Figure \ref{fig:concept} to identify principles of successful algorithm design for this challenging scenario. Handling delayed rewards requires deep exploration, a model-free approach necessitates bootstrapping, and capturing non-linear continuous dynamics needs deep learning. Deep exploration is only possible if action-value uncertainties are accurately quantified. 
Bootstrapping is a \emph{self-referencing} method to predict future rewards. Bayesian inference can quantify posterior probabilities relative to a prior distribution. Using these two relativist approaches---bootstrapping with Bayesian methods---together may have detrimental side effects. While bootstrapping errors on expected returns can be corrected by observed rewards, the same does not happen for bootstrapped uncertainty estimates, as these are not directly observable quantities.  

\begin{figure*}%
    \centering
    \includegraphics[width=0.65\linewidth]{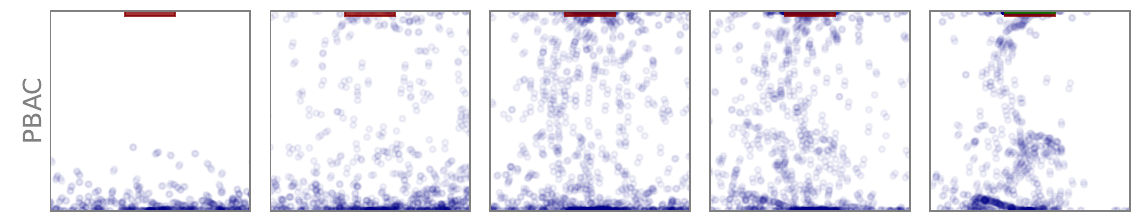}
    \includegraphics[width=0.65\linewidth]{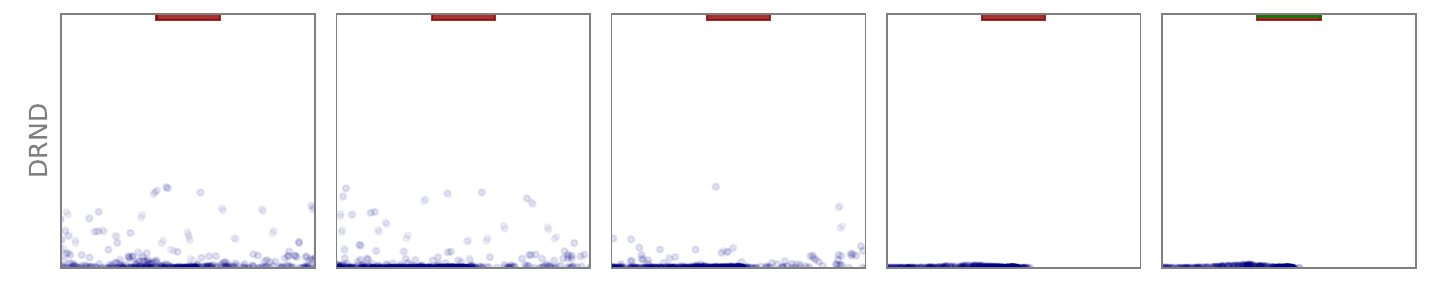}
    \includegraphics[width=0.65\linewidth]{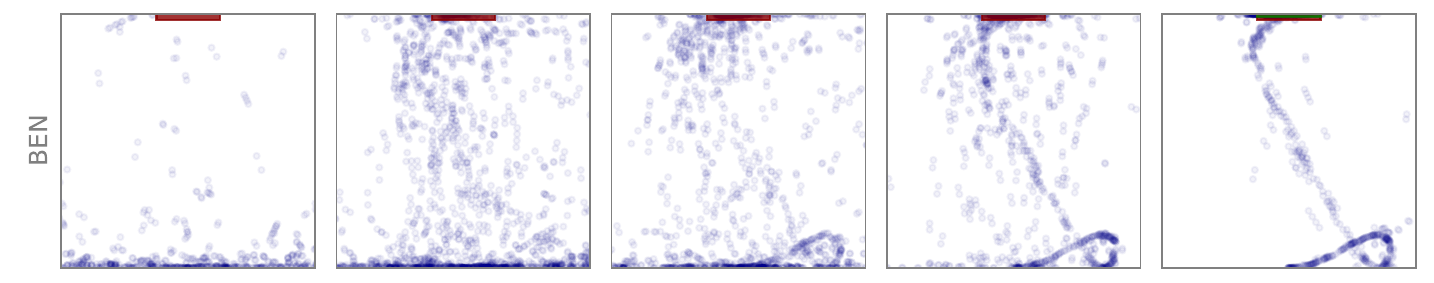}
    \caption{\emph{Deep exploration.} Cartpole state and angle dimension exploration patterns visualized across training.
    Random trajectories are displayed at \num{20000} gradient step intervals. 
    DRND~\citep{yang2024exploration} fails to explore sufficiently.
    Both BEN~\citep{fellows2024ben} and PBAC (ours) quickly identify the target region (red box). While BEN fully exploits, our model achieves the same performance (see \cref{tab:result_table}), while remaining flexible enough to explore.
    }
    \label{fig:teaser_exploration}
    \vspace{1.0em}
\end{figure*}
Our hypothesis is that Probably Approximately Correct (PAC)-Bayesian \citep{alquier2024user,mcallester1999pac} analysis effectively derives principled objective functions for training probabilistic action-value functions with bootstrapping. It provides frequentist uncertainty estimates of generalization performance based on an objective notion of uncertainty, unlike the relativist Bayesian approach.
PAC-Bayes admits data-informed priors for model capacity estimation \cite{ambroladze2006tighter}, creating practical bounds even for deep learning \cite{dziugaite2017computing,perez2021tighter,wu2024recursive}. We quantify the Bellman operator error using a Catoni-type bound \citep{catoni2007pac} for the first time in a policy evaluation context and in an original form. We use a bootstrapped ensemble of critic networks as an empirical posterior distribution and build a data-informed function-space prior from their target networks. To train this ensemble, we derive a simple and intuitive objective function. 

We adopt the PAC-Bayes trained policy network ensemble as a critic to perform actor training. Our algorithm design builds on posterior sampling, a theoretically justified methods for balancing exploration and exploitation using probabilistic action-value estimates \cite{fellows2024ben,osband2013more,osband2019deep}. As the actor, we employ a neural network with a shared trunk and multiple squashed Gaussian heads, each assigned to a separate critic. (See \cref{appsec:hyper} for details on this architecture.)

We conduct experiments on a diverse set of continuous control tasks, both established and novel, some of which features reward structures significantly more challenging than those found in prior art. Our \emph{PAC-Bayesian Actor-Critic (PBAC)} algorithm is the only model capable of solving most of these tasks, whereas both state-of-the-art and well-established methods fail in several. \cref{fig:teaser_exploration} visualizes PBAC's deep exploration, effective exploitation, and maintained flexibility for state space exploration and reward region adaptation.

\section{Background and prior art}\label{sec:background}

\subsection{Reinforcement learning}
Consider a measurable space $(\St, \sigma(\St))$ consisting of a set of states~$\St$ an agent may be in and its corresponding $\sigma$-algebra~$\sigma(\St)$ comprising all measurable subsets of $\St$ and an action space~$\Ac$ from which an agent selects an action to interact with its environment. 
A Markov Decision Process (MDP) \citep{puterman2014markov} is a tuple ${M=\langle\St, \Ac, r, P, P_0, \gamma\rangle}$, where 
$r: \St \times \Ac \rightarrow [0,R]$ is a bounded reward function, 
${P: \St \times \Ac  \times \sigma(\St) \rightarrow [0,1]}$ is a state transition kernel providing the probability of the next state $s' \in \St$ for the current state-action pair $(s,a)$,   
$P_0: \sigma(\St) \rightarrow [0,1]$ is an initial state distribution, and $\gamma \in [0,1)$ is a discount factor. 
The MDP is assumed to be communicating with a unique stationary policy. 
For a policy $\pi: \mcS \to \mcA$, RL aims to maximize the expected sum of discounted rewards
\begin{equation*}
\pi^* \in \argmax_{\pi \in \Pi} \mdE_{\tau_{\pi}}\Big[\sum_{t=0}^\infty \gamma^t r(s_t) \Big],
\end{equation*}
where the expectation is taken with respect to a trajectory ${\tau_{\pi} := (s_0, a_0, s_1, a_1, s_2, a_2, \ldots )}$ of states and actions that occur when a policy $\pi$ selected from a feasible set $\Pi$ is employed.
The exact Bellman operator for a policy $\pi$ is 
\begin{equation*}
    (T_\pi Q)(s,a) = r(s,a) + \gamma\Ep{s' \sim P(\cdot|s,a)}{Q(s',\pi(s'))},
\end{equation*}
where $Q: \St \times \Ac \rightarrow \mdR$.\footnote{Note the parentheses around $T_\pi Q$. $Q$ is an argument to the operator $T_\pi$.} 
The operator's unique fixed point is the true action-value function~$Q_\pi$ which maps a state-action pair $(s,a)$ to the discounted reward under $\pi$. Formally ${(T_\pi Q)(s,a) = Q(s,a)}$ if and only if ${Q(s,a)=Q_\pi(s,a), \forall (s,a)}$. Any other $Q$ incurs a \emph{Bellman error} ${(T_\pi Q)(s,a) - Q(s,a)}> 0$ for some $(s,a)$. We define the squared ${P_\pi\text{-weighted}}$ ${L_2\text{-norm}}$ as ${\lVert f \rVert_{P_\pi}^2=\int_{s \in \St} \lvert f(s) \rvert^2 d P_\pi(s)}$. We approximate the true action-value function $Q_\pi$ by one-step Temporal Difference (TD) learning by minimizing
\begin{align}
    L(Q) = ||T_\pi Q - Q ||_{P_\pi}^2 \label{eq:td0}
\end{align}
with respect to $Q$ given a data set $\mathcal{D}$. 
As the transition kernel is typically unknown, $L(Q)$ is estimated using Monte Carlo samples
\begin{equation*}
    \widetilde{L}(Q)\!=\!\Ep{s\sim P_\pi,s'\sim P(\cdot |s,\pi(s))}{((\widetilde{T}_\pi Q)(s,\pi(s),s')\!-\!Q(s,\pi(s)))^2}\!\!,
\end{equation*}
with the \emph{sample Bellman operator} \citep{fellows2021bayesian}, defined as 
\begin{equation*}
    (\widetilde{T}_\pi Q)(s,a,s') = r(s,a) + \gamma Q(s',\pi(s')).
\end{equation*}
An approximator $Q$ is inferred with respect to tuples~${(s,a,s')}$ stored in a replay buffer~$\mathcal{D}$ by minimizing an empirical estimate of $\widetilde{L}(Q)$:
\begin{align*}
    \widehat{L}_{\mathcal{D}}(Q) &:= \frac{1}{|\mathcal{D}|} \sum_{(s,a,s') \in \mathcal{D}} \left((\widetilde{T}_\pi Q)(s,a,s') -Q(s,a)\right)^2.
\end{align*}
The approximated $Q$ function is used as a training objective in a subsequent policy optimization step. 
Temporal difference learning identifies the globally optimal policy when used for discrete MDPs. In the continuous case, it finds a local optimum \citep{bertsekas1996neuro} and quite often in a sample-inefficient way in real-world applications.

\subsection{Deep exploration}
The classical optimal control paradigm prescribes a rigorous design of cost functions, i.e., negative rewards, that provide a dense signal about model performance at each stage of model fitting \citep{kirk2004optimal,stengel1994optimal}. This approach is akin to detailed loss designs, a common practice in deep learning. However, next-generation AI problems demand agents with high cognitive capabilities that autonomously learn smart strategies to reach distant rewards without being trapped by intermediate small rewards or prematurely abandoning the search. These approaches perform \emph{deep exploration} \citep{osband2016deep}, which means case-specific search based on uncertainty regarding the knowledge of a particular environmental situation, e.g., the value of a state-action pair.
Deep exploration approaches can be classified into four categories based on their assumed knowledge of environment dynamics for making this estimation:

\textbf{(i) Model-based approaches} 
learn to infer the state transition kernel by a function approximator ${P_\phi(\cdot|s,a) \approx P(\cdot|s,a)}$ and use its uncertainty to generate {\it intrinsic rewards} \citep{singh2004intrinsic} to perform curiosity-driven exploration, which has remarkable roots in biological intelligence \citep{modirshanechi2023curiosity,schmidhuber2010formal}. VIME \citep{houthooft2016vime} incentivizes exploration of states that bring more information gain on the estimated kernel, while BYOL-Explore \citep{guo2022byol} and BYOL-Hindsight \citep{jarrett2022curiosity} choose states with higher prediction error. Approaches such as UBE \citep{odonoghue2018uncertainty}, Exact UBE \citep{luis2023model}, and QU-SAC \citep{luis2023value} choose the target quantity as the sample Bellman operator $\widetilde{T}_\pi Q$. 

\textbf{(ii) Pseudo-count approaches}
learn to predict the relative visitation frequencies of state-action pairs without needing to predict state transitions \citep{bellemare2016unifying}. These frequencies are then used to generate intrinsic rewards  directing the exploration towards less frequent state-action pairs. The commonplace approach, called {\it Random Network Distillation (RND)} \citep{burda2018exploration}, distills a fixed network with randomly initialized weights into another one with learned parameters. The distillation serves as a pseudo-count to generate intrinsic rewards. State-of-the-art variants include SAC-RND \citep{nikulin2023anti} and SAC-DRND \citep{yang2024exploration}.

\textbf{(iii) Randomized value iteration approaches}
estimate the uncertainty around the approximate $Q$ function by modeling it as a random variable and performing posterior inference on observed Bellman errors. Proposed inference methods include Bayesian linear regression \citep{osband2016generalization}, mean-field  \citep{lipton2018bbq} or structured \citep{touati2020randomized} variational inference on Bayesian neural networks, and Bayesian deep bootstrap ensembles with various priors \citep{osband2016deep,osband2018randomized}. \citet{fellows2021bayesian} demonstrated that these approaches infer a posterior on the Bellman target $T_\pi Q$ rather than $Q$, proposing a solution via separate $Q$ networks on the posterior predictive mean of $T_\pi Q$. Their follow-up, \emph{Bayesian Exploration Networks (BEN)} \citep{fellows2024ben}, applies structured variational inference with Bayesian heteroscedastic neural  on the Bellman target.
Other ensemble-based approaches aim to estimate epistemic uncertainty \citep{bai2021principled} or account for environmental noise \citep{liu2024ovd} without a Bayesian interpretation.

\textbf{(iv) Policy randomization approaches}
propose different perturbations on the learned models to randomize the employed policy without aiming to quantify the uncertainty of a quantity. Proposed solutions include perturbing the $Q$-network weights for discrete action spaces \citep{fortunato2018noisy}, actor-network weights \citep{plappert2018parameter} for continuous control, and transforming the policy distribution by normalizing flows \citep{mazoure19leveraging}.

In this work, we follow the randomized value iteration approach due to its balance between computational feasibility and theoretical soundness.  Bayesian model averaging provides the optimal decision rule for an approximate $Q$ function with  partial observation on an MDP with reliable uncertainty estimates \citep{CoxHink74}. 

Bootstrapping is a self-referential process: predictions are used to create pseudo-labels that are in turn used to refine predictions. Bayesian inference is a relativist interpretation of uncertainty: the reliability of the posterior distribution is with reference to the reliability of the prior. Our central hypothesis is that using a relativist uncertainty quantification approach to perform a self-referential learning scheme will accumulate miscalibrated uncertainties. Acting based on such uncertainties will limit the performance of an online learning agent. Both BEN and BootDQN-P perform bootstrapping using a Bayesian account of the uncertainty around the Bellman target. A theoretical explanation of why a Bayesian Bellman operator may lead to training artifacts can be found in \cite{fellows2021bayesian}.
Because BootDQN-P suffers from this weakness, it is overshadowed by distributional approaches in discrete control \citep{dabney2018implicit}. We are the first to rigorously test it in continuous control, and its performance is not competitive. BEN mitigates this problem only partially through a long array of complicated downstream approximations during posterior computation. The effects of the resulting approximation errors on the quality of the uncertainty estimates are unknown. As we demonstrate in our experiments, its empirical performance is suboptimal, which highlights how deeply rooted the problem with using a Bayesian Bellman operator is. Our PBAC is built on a frequentist approach to uncertainty quantification, which boosts deep exploration performance. 
The use of PAC-Bayes in reinforcement learning is limited to theoretical analysis \citep{fard2012pac} and preliminary work aimed at improving training robustness \citep{tasdighi2023pac}.

\subsection{PAC-Bayesian analysis and learning} 

Given input $\mcZ$, output $\mathcal{Y}$, and hypothesis spaces ${\mathcal{H}=\{h: \mcZ \rightarrow \mathcal{Y} \}}$, and loss function ${\ell: \mcY \times \mathcal{Y} \rightarrow [0,B]}$, we define the empirical risk $\widehat{L}$ on observations ${\mcO=\{(z_i,y_i): (z_i,y_i) \sim P_D, i \in [n]\}}$\footnote{Throughout the paper $[m]$ denotes the set $\{1, \ldots, m\}$ for $m \in \mdN$.} containing $n$ samples from a distribution~$P_D$, 
and the true risk $L$ as
\begin{align*}
    \widehat{L}_\mcO(h) = \frac{1}{n} \sum_{i=1}^n \ell(h(z_i), y_i), \quad
    L(h) &= \Ep{(z,y) \sim P_D}{\ell(h(z),y)}.
\end{align*}
PAC-Bayesian analysis \citep{shawe1997pac, alquier2024user} provides an upper bound $C$ of the form $d(\Ep{h \sim \rho}{L(h)}, \mdE_{h \sim \rho}[\widehat{L}_\mcO(h)])  \leq C( \rho, \rho_0, \delta)$ that holds with probability at least $1-\delta$ for any error tolerance $\delta \in (0,1]$, any prior probability measure~$\rho_0$, and any posterior probability measure~$\rho$ chosen from the set of all measures $\mathcal{P}$ feasible on the hypothesis space $\mathcal{H}$ for a given convex distance metric $d(\cdot,\cdot)$. The choice of the posterior measure $\rho$ can depend on the data~$\mcO$, while the prior measure $\rho_0$ cannot, hence their names. 
Unlike in Bayesian inference, the posterior and the prior are not necessarily related to each other via a likelihood function. The prior serves as a reference with respect to which a generalization statement is made.  
Of the various well-known PAC-Bayes bounds (such as \citet{germain2009pac,mcallester1999pac, seeger2002pac}), we build on \citet{catoni2007pac}'s bound in this work.
\begin{theorem}[\citet{catoni2007pac}]\label{thm:catoni_generic}
For distributions $\rho, \rho_0$ measurable on hypothesis space $\mcH$, $\nu > 0$, and error tolerance $\delta \in (0,1)$, the following inequality holds with probability greater than $1 - \delta$:
\begin{align*}
    \Ep{h\sim \rho}{L(h)} &\leq \Ep{\rho}{\widehat L(h)} + \frac{\nu C^2}{8n} + \frac{\KL{\rho}{\rho_0} - \log\delta}{n},
\end{align*}
where $C$ is a upper bound on $L$ and ${\KL{\rho}{\rho_0} = \Ep{h \sim \rho}{\log \rho(h) - \log \rho_0(h)}}$ is the Kullback-Leibler divergence between $\rho$ and $\rho_0$.
\end{theorem}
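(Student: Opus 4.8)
The plan is to prove this as the standard three-ingredient PAC-Bayes argument: an exponential change of measure, Markov's inequality applied to a data-averaged exponential moment, and a Hoeffding-type control of that moment. Throughout I exploit that the prior $\rho_0$ is data-independent while the posterior $\rho$ may depend on $\mcO$.

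First I would invoke the Donsker--Varadhan (exponential-tilt) change-of-measure inequality: for any measurable $\phi: \mcH \to \mdR$ and any $\rho$,
\[ \Ep{h\sim\rho}{\phi(h)} \le \KL{\rho}{\rho_0} + \log \Ep{h\sim\rho_0}{e^{\phi(h)}}. \]
Applying it with the tilt $\phi(h) = \nu\lr{L(h) - \widehat{L}(h)}$ turns the left side into $\nu\,\Ep{h\sim\rho}{L(h)-\widehat{L}(h)}$ and splits the right side into the $\KL{\rho}{\rho_0}$ term and a log-moment term $\log \Ep{h\sim\rho_0}{e^{\nu(L(h)-\widehat{L}(h))}}$. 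The key structural point is that this holds \emph{simultaneously for every} $\rho$, and the only data-dependent quantity on the right is the exponential moment taken under the fixed prior $\rho_0$.

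Second, I would convert that exponential moment into the $-\log\delta$ term. Set $\xi_\mcO \deq \Ep{h\sim\rho_0}{e^{\nu(L(h)-\widehat{L}(h))}}$, a nonnegative random variable in $\mcO$. Markov's inequality gives $\P[\xi_\mcO \geq \delta^{-1}\Ep{\mcO}{\xi_\mcO}] \leq \delta$, so with probability at least $1-\delta$ we have $\xi_\mcO \le \delta^{-1}\Ep{\mcO}{\xi_\mcO}$. Since $\rho_0$ does not depend on the data, Fubini lets me swap expectations, $\Ep{\mcO}{\xi_\mcO} = \Ep{h\sim\rho_0}{\Ep{\mcO}{e^{\nu(L(h)-\widehat{L}(h))}}}$, reducing everything to a per-hypothesis moment-generating bound. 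For fixed $h$, note $L(h)-\widehat{L}(h) = \tfrac1n\sum_{i=1}^n \lr{L(h)-\ell(h(z_i),y_i)}$ is an average of $n$ i.i.d.\ mean-zero terms, each lying in an interval of width at most $C$ (using that the loss, hence $L$, is bounded by $C$). Hoeffding's lemma applied factor by factor yields $\Ep{\mcO}{e^{\nu(L(h)-\widehat{L}(h))}} \le e^{\nu^2 C^2/(8n)}$ uniformly in $h$, hence $\Ep{\mcO}{\xi_\mcO}\le e^{\nu^2 C^2/(8n)}$. Substituting back, taking logarithms, dividing through by the tilt parameter, and fixing the free parameter to the normalization in the statement collects the empirical risk $\Ep{\rho}{\widehat{L}(h)}$, the range term $\tfrac{\nu C^2}{8n}$, and the complexity term $\tfrac{\KL{\rho}{\rho_0}-\log\delta}{n}$, giving the claimed inequality.

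The main obstacle is not any single estimate but the \emph{ordering} of the probabilistic steps: Markov's inequality must be applied to $\xi_\mcO$, which is an expectation under the data-\emph{independent} prior, \emph{before} the change of measure transfers the guarantee to an arbitrary, possibly data-dependent posterior $\rho$. This is precisely what makes a single high-probability event valid uniformly over all $\rho\in\PP$. The Hoeffding step is then routine given boundedness, and the remaining work is bookkeeping of the constants to land on the displayed form.
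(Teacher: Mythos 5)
Your proposal is correct and follows essentially the same route the paper itself takes when it re-derives this bound for the Markov-chain setting in the proof of Theorem~\ref{prop:main_result}: a per-hypothesis Hoeffding/sub-Gaussian moment bound, integration against the data-independent prior with Fubini, Donsker--Varadhan change of measure, and a Markov/Chernoff step applied to the prior-integrated exponential moment so that the high-probability event is uniform over all posteriors (the only cosmetic difference is that you apply Donsker--Varadhan before rather than after the Markov step). One bookkeeping remark: your derivation lands on $\frac{\KL{\rho}{\rho_0}-\log\delta}{\nu}$, which matches the form used in Theorem~\ref{prop:main_result} and the standard Catoni bound, so the denominator $n$ in the displayed statement appears to be a typo for $\nu$ rather than something your argument fails to produce.
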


Since a PAC-Bayes bound holds for any posterior, a parametric family of posteriors can be chosen and its parameters can be fit to data by minimizing the right-hand side of the bound. This approach, called \emph{PAC-Bayesian Learning}, has demonstrated remarkable success in image classification \citep{dziugaite2017computing, wu2024recursive} and regression \citep{reeb2018learning}.  Only preliminary results such as \citet{tasdighi2023pac} exist that use this approach in deep reinforcement learning. The tightness of the bound is determined by the choice of $d$, as well as additional assumptions and algebraic manipulations on the second term on the right-hand side of the inequality. Remarkably, this term does not depend on $\rho$ and hence does not play any role in a PAC-Bayesian learning algorithm. Our solution relies on this simple but often overlooked observation.

\section{Deep exploration with PAC-Bayes}\label{sec:deepexp_pac_ac}

\subsection{A new PAC-Bayesian bound for policy evaluation}
We build an actor-critic algorithm able to perform deep exploration in continuous control setups with delayed rewards. We consider a model-free and off-policy approximate temporal difference training setup. 
The only existing prior work at the time of writing that addresses this setup with a PAC-Bayesian analysis is by \citet{fard2012pac}. 
Their work uses PAC-Bayes to directly bound the value of a policy from a single chain of observations. 
The Markovian dependencies of these observations necessitate building on a Bernstein-like concentration inequality that works only for extremely long episodes, prohibiting its applicability to PAC-Bayesian learning \citep{tasdighi2023pac}. We overcome this limitation by instead developing a PAC-Bayes bound on the Bellman operator error. 
See \cref{appsec:proofs} for proofs of all results.

We assume the existence of a replay buffer $\mathcal{D}$ containing samples obtained from real environment interactions.
Following \citet{fard2012pac}, we assume for convenience that the evaluated policy $\pi$ always induces a stationary state transition kernel, that is, ${P_\pi(s') = \int P(s' | s, \pi(s)) P_\pi(s) ds}$ for every $s \in \mathcal{S}$ in all training-time environment interactions, where $P_\pi$ denotes the state visitation distribution for policy $\pi$. 
This is essentially a hidden assumption made by typical off-policy deep reinforcement learning algorithms that sample uniformly from their replay buffers. 
The difference between the sample Bellman operator and the Bellman operator, $(\widetilde{T}_\pi Q) (s,a,s') - (T_\pi Q) (s,\pi(s))$, tends to accumulate a positive value when $Q$ is concurrently used as a training objective for policy improvement, leading to the well-known  \emph{overestimation bias} \citep{thrun1993issues}. The problem is tackled by approaches such as double $Q$-learning~\citep{hasselt2010double} and min-clipping \citep{fujimoto2018addressing}. 
We define a new random variable as 
\begin{equation*}
    X(s,a) \overset{d}{=} Q(s,a) + (\widetilde{T}_\pi Q)(s,a,s') - (T_\pi Q)(s,a), %
\end{equation*}
for all $(s,a)\in \St \times A$, where~$\overset{d}{=}$ denotes equality in distribution. It is a hypothetical variable that predicts the value of the observed Bellman target caused by the randomness of $s'$ given $(s,a)$.  It is related to the action-value function approximator as 
\begin{align*}
    Q(s,a) &=   Q(s,a) + \Ep{s' \sim P(\cdot | s,a)}{(\widetilde{T}_\pi Q)(s,a,s')}\!-\! (T_\pi Q)(s,a)\\
         &= \mdE_{s' \sim P(\cdot | s,a)}\Big[ Q(s,a) + (\widetilde{T}_\pi Q)(s,a,s') 
         - T_\pi Q(s,a)\Big] \\
         &= \E{X(s,a)}.
\end{align*}
Let $\rho$ denote the distribution of $X$. The one-step TD loss is
\begin{align*}
 L(Q) &= ||T_\pi Q - Q ||_{P_\pi}^2= ||T_\pi Q - \E{X} ||_{P_\pi}^2\\
 &=\mdE_{s \sim P_\pi}\Big[ \Big (\mdE_{s'\sim P(\cdot | s,\pi(s))}\big[ r(s,\pi(s)) \\
&\quad+\gamma \E{X(s',\pi(s'))}\big] - \E{X(s,\pi(s))}\Big  )^2\Big],
\end{align*}
with the corresponding stochastic variant
\begin{align*}
\widetilde{L}(\rho) =  \E { \left((\widetilde{T}_\pi X)\left(s,\pi(s),s'\right) - X(s,\pi(s))\right)^2 }
\end{align*}
where the expectation is with respect to $s \sim P_\pi$, $s' \sim P(\cdot|s,\pi(s))$, and $X \sim \rho$. A realization of $X$ shares the same domain and range as $Q$, hence, it can be passed through the sample Bellman operator $\widetilde{T}$. 

This optimization problem no longer fits a deterministic map $Q$ but instead infers a distribution $\rho$ describing the action-value function perturbed by the Bellman operator error. The loss represents the true risk of a Gibbs predictor $X\sim \rho$.  The corresponding empirical risk is
\begin{align*}
\widehat{L}_\mathcal{D}(\rho)=  \frac1{|D|} \sum_{\mathclap{(s,a,s') \in \mathcal{D}}} \Ep{X \sim\rho} {\left( (\widetilde{T}_\pi X)(s,a,s') - X(s,\pi(s)) \right)^2}.   
\end{align*}
Our learning problem selects $\rho$ from a feasible set $\mathcal{P}$ to achieve the tightest PAC-Bayes bound on $\widetilde{L}(\rho)$  with respect to data $\mathcal{D}$ and prior $\rho_0 \in \mathcal{P}$ over the distribution of $X$. This bound provides generalization guarantees on $X$'s prediction of noisy Bellman operator outputs. Our primary objective is to minimize the value estimation error $\lVert Q_\pi - Q\rVert_{P_\pi}=\lVert Q_\pi - \E{X}\rVert_{P_\pi}$, upper bounded via Jensen's inequality by $\mdE_{X \sim \rho} \lVert Q_\pi - X \rVert_{P_\pi}$.  \citet{fellows2021bayesian} addressed the mismatch between target and inferred quantities in probabilistic reinforcement learning. We propose an alternative solution by leveraging the contraction property of Bellman operators and the following result.
\begin{lemma} \label{prop:mse2value}
For any $\rho$ defined on $X$, we have 
\begin{align*}
\widetilde{L}(\rho)= \mdE_{X \sim\rho}||T_\pi X- X||_{P_\pi}^2+ \gamma^2 \mdE_{X \sim\rho} \varp{s \sim P_\pi}{ X(s,\pi(s))}.
\end{align*}
\end{lemma}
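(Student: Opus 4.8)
The plan is to perform a bias--variance decomposition of the squared residual inside $\widetilde{L}(\rho)$, built on the observation that, conditionally on $s$, the sample Bellman operator is an unbiased estimate of the exact one: $\Ep{s'\sim P(\cdot|s,\pi(s))}{(\widetilde{T}_\pi X)(s,\pi(s),s')} = (T_\pi X)(s,\pi(s))$. For a fixed draw $X\sim\rho$ and a fixed $s$, I would add and subtract $(T_\pi X)(s,\pi(s))$ inside the residual,
\begin{align*}
(\widetilde{T}_\pi X)(s,\pi(s),s') - X(s,\pi(s)) &= \big[(T_\pi X)(s,\pi(s)) - X(s,\pi(s))\big]\\
&\quad + \gamma\big(X(s',\pi(s')) - \Ep{s'}{X(s',\pi(s'))}\big),
\end{align*}
so that the first bracket is $s'$-independent and the second has conditional mean zero.

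Squaring this sum and taking the conditional expectation $\Ep{s'\sim P(\cdot|s,\pi(s))}{\cdot}$, the cross term vanishes (the constant first bracket multiplies a mean-zero quantity), leaving the squared exact Bellman residual $((T_\pi X)(s,\pi(s)) - X(s,\pi(s)))^2$ plus $\gamma^2\,\varp{s'\sim P(\cdot|s,\pi(s))}{X(s',\pi(s'))}$. I would then integrate over $s\sim P_\pi$ and over $X\sim\rho$. By the definition of the $P_\pi$-weighted norm the first term becomes $\Ep{X\sim\rho}{\lVert T_\pi X - X\rVert_{P_\pi}^2}$, matching the first summand of the claim. These manipulations are routine.

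The main obstacle is the second term, which after integration reads $\gamma^2\,\Ep{X\sim\rho}{\Ep{s\sim P_\pi}{\varp{s'}{X(s',\pi(s'))}}}$, i.e. an expected conditional transition variance, whereas the statement asks for the marginal variance $\varp{s\sim P_\pi}{X(s,\pi(s))}$. Reconciling the two is exactly where the stationarity hypothesis enters. Invoking $P_\pi(s') = \int P(s'\,|\,s,\pi(s))\,P_\pi(s)\,ds$, the next state $s'$ drawn from $s\sim P_\pi$ is itself $P_\pi$-distributed, and---crucially---the conditional mean $\Ep{s'}{X(s',\pi(s'))}$ must be $s$-independent so that the law of total variance contributes no conditional-mean variance term; the expected conditional variance then collapses to $\varp{s'\sim P_\pi}{X(s',\pi(s'))}$, which equals $\varp{s\sim P_\pi}{X(s,\pi(s))}$ after relabelling $s'\mapsto s$. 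I would state this identification explicitly, since it is the only non-mechanical step and the point where the stationarity assumption does all the work. Substituting back and factoring out $\gamma^2$ gives the claimed equality.
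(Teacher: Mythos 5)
Your proposal is correct and follows essentially the same route as the paper: a conditional bias--variance decomposition of the squared residual (the paper expands the square and completes it via $\mdE[Y^2]=\mdE[Y]^2+\mathrm{var}(Y)$, while you add and subtract $(T_\pi X)(s,\pi(s))$ and kill the cross term --- the same computation), followed by the stationarity argument converting the expected conditional transition variance into the marginal variance. Your explicit appeal to the law of total variance is in fact slightly more careful than the paper's own derivation, which silently replaces $\mdE_{s\sim P_\pi}\big[\big(\mdE_{s'}[X(s',\pi(s'))\mid s]\big)^2\big]$ by $\big(\mdE_{s\sim P_\pi}[X(s,\pi(s))]\big)^2$ --- precisely the step that, as you note, requires the conditional mean to be $s$-independent and not merely the chain to be stationary.
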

Applying the contraction property of Bellman operators to ${||T_\pi X- X||_{P_\pi}^2}$ (see Lemma \ref{lem:contraction}) and adding all quantities into \cref{thm:catoni_generic}'s generic  bound, we arrive at our main result.
\begin{theorem} \label{prop:main_result}
For any posterior and prior measures ${\rho, \rho_0 \in \mathcal{P}}$, error tolerance $\delta \in (0,1]$, and policy $\pi$, the following inequality holds with probability at least $1-\delta$:
\begin{align}
&\mdE_{X \sim \rho} ||Q_\pi - X||_{P_\pi}^2 \leq  \Big ( \widehat{L}_\mathcal{D}(\rho)  + \frac{\nu \bar \lambda B^2}{8n} + \frac{\KL{\rho}{\rho_0} - \log\delta}{\nu}\nonumber\\
&\qquad - \gamma^2 \mdE_{X \sim \rho} \varp{s \sim P_\pi}{X(s,\pi(s))}  \Big )\big/(1 - \gamma)^2, \label{eq:main-result}
\end{align}
where $\bar \lambda \deq  \frac{1 + \max_{t\in [n]}\lambda_t}{1 - \max_{t \in [n]}\lambda_t}$
with ${\lambda_t \in [0,1)}$ as the operator norm of the transition kernel of the time-inhomogeneous Bellman error Markov chain at time $t$, $B=R^2/(1 - \gamma)^2$ and $\nu > 0$ an arbitrary constant.
\end{theorem}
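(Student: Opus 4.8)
The plan is to assemble \cref{eq:main-result} from three ingredients that are already in place: the contraction property of $T_\pi$ (Lemma~\ref{lem:contraction}), the variance decomposition of the stochastic TD loss (Lemma~\ref{prop:mse2value}), and Catoni's PAC-Bayes inequality (\cref{thm:catoni_generic}). I would start from the left-hand side $\mdE_{X\sim\rho}||Q_\pi - X||_{P_\pi}^2$, introduce the $(1-\gamma)^{-2}$ prefactor via the contraction bound, rewrite the resulting expected Bellman residual with Lemma~\ref{prop:mse2value}, and finally trade the true stochastic risk $\widetilde{L}(\rho)$ for the empirical risk $\widehat{L}_\mathcal{D}(\rho)$ and a KL-complexity penalty using Catoni.

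First I would invoke the contraction lemma. Since $Q_\pi$ is the unique fixed point of the $\gamma$-contraction $T_\pi$ in the $P_\pi$-weighted norm, the standard fixed-point estimate gives, for each realization $X$,
\begin{equation*}
||Q_\pi - X||_{P_\pi}^2 \leq \frac{1}{(1-\gamma)^2}\,||T_\pi X - X||_{P_\pi}^2 .
\end{equation*}
Taking $\mdE_{X\sim\rho}$ on both sides and then applying Lemma~\ref{prop:mse2value} to rewrite the expected Bellman residual yields
\begin{equation*}
\mdE_{X\sim\rho}||Q_\pi - X||_{P_\pi}^2 \leq \frac{1}{(1-\gamma)^2}\Big(\widetilde{L}(\rho) - \gamma^2\,\mdE_{X\sim\rho}\varp{s\sim P_\pi}{X(s,\pi(s))}\Big),
\end{equation*}
so that the only remaining task is to control the true stochastic risk $\widetilde{L}(\rho)$ by its empirical counterpart $\widehat{L}_\mathcal{D}(\rho)$.

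For this last step I would set up the PAC-Bayes dictionary: identify each realization $X\sim\rho$ with a hypothesis, each buffer tuple $(s,a,s')$ with an observation, and the squared one-step residual with the loss, which is bounded by $B = R^2/(1-\gamma)^2$. Under this correspondence $\widetilde{L}(\rho)$ and $\widehat{L}_\mathcal{D}(\rho)$ are exactly the Gibbs true and empirical risks of \cref{thm:catoni_generic}, so substituting Catoni's bound into the previous display and collecting the common $(1-\gamma)^{-2}$ prefactor over all terms produces the claimed inequality. The main obstacle is precisely this application: the replay-buffer tuples are \emph{not} i.i.d.\ but are drawn along the time-inhomogeneous Bellman-error Markov chain, so the i.i.d.\ exponential-moment argument underlying Catoni's inequality does not transfer verbatim. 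I would address this by replacing the i.i.d.\ moment-generating-function control with a Markov-chain analogue, bounding the exponential moments of the residual sum through the operator norms $\lambda_t$ of the chain's transition kernels. The price of the dependence is the mixing factor $\bar\lambda = (1 + \max_{t}\lambda_t)/(1 - \max_{t}\lambda_t)$ multiplying the $\nu B^2/(8n)$ variance term, whereas the $\KL{\rho}{\rho_0}$ and $\log\delta$ contributions are untouched, since the change of measure from $\rho_0$ to $\rho$ is independent of the concentration argument. This reproduces \cref{eq:main-result}.
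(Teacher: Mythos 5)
Your proposal is correct and follows essentially the same route as the paper: the per-realization fixed-point/contraction estimate (Lemma~\ref{lem:contraction} combined with the triangle inequality, which is exactly Lemma~\ref{Lem:bellman2value}) supplies the $(1-\gamma)^{-2}$ factor, Lemma~\ref{prop:mse2value} produces the variance correction, and a Catoni-style argument whose i.i.d.\ exponential-moment control is replaced by a sub-Gaussian concentration result for time-inhomogeneous Markov chains (the paper uses the theorem of Fan et al.\ cited in \cref{appsec:proofs}) yields the empirical risk, the KL term, and the $\bar\lambda$ mixing factor. You assemble the pieces in the reverse order from the paper, but the decomposition and every key ingredient are the same.
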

Note that  $\max_{t\in[n]}{\lambda_t} \to 1$ implies a chain with absolute past correlation. We use $\nu = 1$  throughout.
Since $\bar \lambda$ and $B$ are independent of the posterior, we exclude them when deriving the training objective, with further discussion in  \cref{appsec:proofs}.
Our main focus in this work is to derive a valid bound from learning-theoretic results to discover algorithmic principles. 
As such, we leave the task of tightening this bound for additional performance guarantees to future work. 
As a step in that direction, we derive an alternative bound following \citet{fard2012pac}, which is tighter yet also requires theoretical assumptions rarely satisfied in our use cases. 
Fitting $\nu$ by following \citet[][Theorem 2.4 and Corollary 2.5]{alquier2024user} would provide further tightness guarantees.

\subsection{A practical algorithm for this bound}
It is not straightforward how to use the theoretical bound in \eqref{eq:main-result} to train the critic networks of an actor-critic design, as the bound does not prescribe how $\rho$ should be distributed or how the KL divergence between distributions on non-linear function spaces can be computed in practice. %
Below, we provide a recipe to develop  an effective deep reinforcement learning algorithm from our new PAC-Bayes bound.
A pseudocode of the resulting algorithm can be found in \cref{appsec:pseudocode}.

\subsubsection{Non-parametric posterior design}
Given the highly nonlinear and complex structure of the task, we require $\rho$ to be a flexible and powerful posterior distribution.
We learn an ensemble  
of $K$ critics ${\{X_k: \St \times \Ac \rightarrow \mdR\;|\;k \in [K]\}}$ parameterized by weights $\theta_k$
and model ${\rho(A) := \frac1K\sum_{k=1}^K\delta_{X_k}(A)}$ for ${A \in \sigma(\mcS)}$ and Dirac measure $\delta$.
We extend this with a set of target copies, $\bar X_k$, updated by Polyak averaging ${\bar \theta_k \gets (1 - \tau)\bar \theta_k + \tau\theta_k}$ for $\tau \in [0,1]$.
For a single observation, the empirical risk term is ${\frac1K\sum_{k=1}^K\left(r + \gamma \bar X_k(s',\pi(s')) - X_k(s,a)\right)^2}$
where we use~$\bar{X}_k$ to ensure robust training \citep{lillicrap2015continuous}.
To further decorrelate the ensemble members, we use bootstrap ensembles\footnote{Despite the overlapping naming, there is no relation between a \emph{bootstrap} ensemble, where random subsets of a data set are used to train an ensemble, and bootstrapping in reinforcement learning, i.e., the technique of iteratively updating value estimates based on prior estimates.}, which have been shown to provide significant practical benefits in prior work on deep exploration \citep{osband2016deep,osband2018randomized}.
We pass each minibatch through a random binary mask that filters some portion of the data for each ensemble element.

\subsubsection{Computing the bound}
\textbf{From weight space to function space.}
Deep RL uses action-value functions for tasks such as exploration, bootstrapping, and actor training. 
Design choices such as how much to explore and how conservatively to evaluate the Bellman target tend to be simpler to specify in function space compared to weight space.
The data processing inequality \citep[Theorem 6.2]{polyanskiy2014lecture} additionally provides a tighter generalization bound in function space. 
While function-space priors for regularization have garnered attention in the Bayesian deep learning literature \citep{tran2022all,rudner2023function,qiu2023},
we are not aware of any earlier application of function-space priors in the context of PAC-Bayesian learning, especially in reinforcement learning.
The KL divergence between two measures $\rho, \rho_0$ defined on $\mathcal{H}$ is (via \citet{rudner2022tractable}):
\begin{align*}
    &\KL{\rho}{\rho_0} = \\
    &\quad \sup_{\mathcal{D} \in \mathcal{B}} \Ep{s \in \mathcal{D}}{\Ep{X \sim \rho}{\log f_\rho(X|s,\pi(s)) 
    - \log f_{\rho_0}(X|s,\pi(s))}\big.},
\end{align*}
where $f_\rho$ and $f_{\rho_0}$ are the probability density functions of the two measures evaluated at $X$ for a given state-action pair and $\mathcal{B}$ is the space of all possible data sets $\mathcal{D}$. 
This $\sup$ calculation is intractable in practice.
We simplify \citet{rudner2022tractable}'s approximation to
\begin{equation}\label{eq:kl_approx}
    \sum_{j=1}^J\sum_{i=1}^{|B_j|} \log f_\rho(s_i,\pi(s_i)) - \log f_{\rho_0}(s_i,\pi(s_i))
\end{equation}
as the sum of a set of scalars upper bounds their supremum and $B_j$ are batches sampled from the available data.

Since $\rho$ is implicitly defined as an ensemble mixture, it lacks an analytical density function. We approximate it in the KL divergence by a normal density, i.e., ${f_\rho(s,\pi(s)) \deq \Norm(\mu_\pi(s),\sigma_\pi^2(s))}$, where we define ${\mu_\pi(s) = \frac1K \sum_{k=1}^K X_k(s,\pi(s))}$ and ${\sigma_\pi^2 = \frac{1}{K-1}\sum_{k=1}^K(X_k(s,\pi(s)) - \mu_\pi(s))^2}$.

Building on PAC-Bayes research outside reinforcement learning \citep{ambroladze2006tighter,dziugaite2018data}, we implement a data-informed prior $\rho_0$ for a tighter PAC-Bayes-based policy evaluation bound. 
Because the latest action-value information resides in the targets $\bar{X}_k$, we use them for a maximum likelihood normal density estimate. 
The target networks evaluate the next state-action values, and we set ${f_{\rho_0}(s',\pi(s')) = \mathcal{N}(\bar{\mu}_\pi(s'), \sigma_0^2)}$ where ${\bar{\mu}_\pi(s')=\frac{1}{K} \sum_{k=1}^K \bar{X}_k(s',\pi(s'))}$. Projecting backwards one time step yields ${f_{\rho_0}(s,\pi(s)) := \mathcal{N}(r+\gamma \bar{\mu}_\pi(s'), \gamma^2 \bar{\sigma}_0^2)}$ where $r$ is the reward for state $s$. 
This approach maintains the PAC-Bayes bound's validity because $\rho_0$ uses critic targets from the previous gradient step, while the bound applies to the current minibatch. 

We approximate the final KL divergence between $\rho$ and $\rho_0$ as
\begin{align*}
    &\KL{\rho(s,\pi(s))}{\rho_0(s,\pi(s)) \big.}\\
    &\approx \frac{1}{2K} \sum_{k=1}^{K} \Big(\frac{(r+\gamma \bar{\mu}_\pi(s')-X_k)^2}{\gamma^2 \sigma_0^2} - \log \sigma_\pi^2(s) \!\Big ) \!+\! \text{const}
\end{align*}
to ensure that the computation remains closer to the true posterior.
See \cref{appsec:proofs} for the full details of the derivation.

\subsubsection{The critic training objective}
Our complete training objective is given as 
\begin{align}
\begin{split}
  &\mcL(\{X_k:k\in[K]\})  
  := \\
  &\underbrace{\frac{1}{n K} \sum_{i=1}^{n} \sum_{k=1}^K b_{ik} \Big(r_i + \gamma \bar{X}_k(s'_i, \pi(s'_i))-X_k(s_i,\pi(s_i)) \Big)^2}_{\text{Diversity}} \\
  &+\underbrace{\frac{1}{ n K}  \sum_{i=1}^{n}  \sum_{k=1}^{K}  \frac{ b_{ik} \big (r_i+\gamma \bar{\mu}_\pi(s'_i)-X_k(s_i,\pi(s_i)) \big)^2}{2\gamma^2 \sigma_0^2} }_{\text{Coherence}}\\
  &- \underbrace{\frac{2\gamma^2+1}{2n} \sum_{i=1}^{n} \log \sigma_\pi^2(s_i)}_{\text{Propagation}},
  \end{split}
    \label{eq:critic_loss} 
\end{align}
for binary bootstrap masks ${b_{ik} \sim \mathrm{Bernoulli}(1-\kappa)}$, ${\forall [n] \times [K]}$ with a bootstrapping rate~${\kappa \in (0,1)}$.\footnote{We calculate $\bar{\mu}_\pi$ and $\sigma_\pi^2$ after applying the bootstrap masks.} 
Here, we further reduce the influence of the expected variance term in \eqref{eq:main-result} by taking its logarithm.\footnote{The bound remains valid as $\log(x) < x, \forall x \in \mdR^+$.}
The loss is computed by one forward pass through each ensemble member (run in parallel) and their targets per data point with additional constant-time operations. Hence, its computation time matches that of any other actor-critic method. 
The loss contains three interpretable terms. The first induces \emph{diversity} into the ensemble by training each member on individual targets. 
This way, the model quantifies the uncertainty of its predictions, as observations assigned to similar action values can only be those with which all ensemble elements are familiar. 
The second term ensures \emph{coherence} among ensemble elements as it trains them with the same target. 
The third term is a regularizer that repels ensemble elements away from each other in the absence of counter-evidence. This way, uncertainty \emph{propagation} is maintained, as the model cannot find a solution to collapse the ensemble elements into a single solution. 
As the variance grows, the PBAC loss approaches to BootDQN-P \citep{osband2018randomized}. As it shrinks, it converges to the deterministic policy gradient.
See \cref{appsec:lossablation} for a detailed ablation of each term's importance.
Our PBAC critic training loss effectively unifies the advantages of variational Bayesian inference~\citep{blei2017variational} and Bayesian deep ensembles~\citep{lakshminarayanan2017simple} into a single theoretically justified framework. 
Unlike variational Bayes, it operates with density-free posteriors, and unlike Bayesian deep ensembles, it enables a principled application use of prior regularization through the KL-divergence.

\subsection{Actor training and behavior policy} 

We use an actor network with a shared trunk $g(s)$ and individual heads $\pi_k(s):=(h_k \circ g) (s)$ for each ensemble element. 
Each head $h_k$ predicts the mean and variance parameters of a Gaussian distribution subsequently mapped to the $(-1,1)$ interval via a $\tanh$ transformation. Its objective includes an entropy maximization regularization \citep{haarnoja2018soft}.
Since the optimal decision rule under uncertainty is the Bayes predictor, we train the actor on the average value of each state with respect to the learned posterior 
\begin{equation*}
\argmax_\pi \mdE_{s \sim P_\pi} \mdE_{X \sim \rho}  X(s,\pi(s))
\end{equation*}
and implement its empirical approximation as 
\begin{equation*}
\argmax_{g, h_1, \ldots, h_K}  \frac{1}{n K} \sum_{i=1}^n \sum_{k=1}^K X_k(s, \pi_k(s)).
\end{equation*}
We perform posterior sampling for exploration, relying on its demonstrated theoretical benefits \citep{grande2014sample,osband2013more,strens2000bayesian} and practical success in discrete control \citep{fellows2021bayesian,osband2016deep, osband2018randomized}. That is, our behavior policy is randomly chosen among the available $K$ options and fixed for a number of time steps as practiced in prior work \citep{touati2020randomized}. In critic training, the Bellman targets are computed with the active behavior policy. 
See \cref{appsec:hyper} and \ref{appsec:pseudocode} for our hyperparameters and further algorithmic details.

\section{Experiments}\label{sec:experiments}
\begin{table*}
    \centering
    \caption{\emph{DMC and MuJoCo.} InterQuartile Mean (IQM) scores over ten seeds with [lower, upper] quantiles. 
    The three blocks report the DMC, and dense and delayed MuJoCo environments.
    For each task, we mark the IQM values bold if they do not differ significantly ($p < 0.05$) from the highest IQM.
    }
    \vspace{1.5em}
    \label{tab:result_table}
    \adjustbox{max width=0.98\textwidth}{
    \begin{tabular}{lcccccccc}
    \toprule
    & \multicolumn{4}{c}{(a) Final episode reward $(\uparrow)$} & \multicolumn{4}{c}{(b) Area under learning curve $(\uparrow)$} \\
    \cmidrule(lr){2-5} \cmidrule(lr){6-9}
    \textsc{Environment} & BEN & BootDQN-P & DRND & PBAC (ours) & BEN & BootDQN-P & DRND & PBAC (ours) \\
    \cmidrule(lr){1-5}\cmidrule(lr){6-9}
        ballincup   
         &  $\mathbf{977 \sd{[971,980]}}$ & $\mathbf{973 \sd{[965,978]}}$ & $\mathbf{973 \sd{[970,977]}}$ & $\mathbf{978 \sd{[974,981]}}$
         & $\mathbf{954 \sd{[936,963]}}$ & $801 \sd{[778,861]}$ & $863 \sd{[773,951]}$ & $\mathbf{945 \sd{[837,968]}}$
         \\
        cartpole  
         & $\mathbf{796 \sd{[186,836]}}$ & $\mathbf{601 \sd{[379,770]}}$ & $0.0 \sd{[0.0,0.0]}$ & $\mathbf{785 \sd{[584,819]}}$
         & $\mathbf{423 \sd{[179,513]}}$ & $\mathbf{527 \sd{[412,609]}}$ & $0.0 \sd{[0.0,0.0]}$ & $\mathbf{500 \sd{[410,561]}}$
         \\
        reacher 
         &  $903 \sd{[799,975]}$ & $\mathbf{959 \sd{[911,965]}}$ & $846 \sd{[762,875]}$ & $\mathbf{912 \sd{[861,966]}}$
         &  $\mathbf{708 \sd{[628,734]}}$ & $\mathbf{730 \sd{[698,764]}}$ & $624 \sd{[494,695]}$ & $\mathbf{753 \sd{[657,806]}}$
         \\
    \cmidrule(lr){1-5}\cmidrule(lr){6-9}
        ant 
         & $5579 \sd{[5114,5685]}$ & $298 \sd{[114,430]}$ & $3080 \sd{[1172,4243]}$ & $\mathbf{6376 \sd{[6215,6472]}}$
         & $3103 \sd{[2719,3387]}$ & $511 \sd{[486,547]}$ & $1799 \sd{[596,2613]}$ & $\mathbf{4719 \sd{[4215,4827]}}$ 
         \\
        hopper  
         & $1051 \sd{[567,1443]}$ & $\mathbf{1988 \sd{[1260,2494]}}$ & $\mathbf{1864 \sd{[1103,2914]}}$ & $\mathbf{1410 \sd{[1238,1606]}}$
         &  $839 \sd{[733,986]}$ & $974 \sd{[949,1005]}$ & $952 \sd{[736,1365]}$ & $\mathbf{1565 \sd{[1415,1692]}}$
         \\
        humanoid  
         & $1782 \sd{[1443,2008]}$ & $311 \sd{[243,372]}$ & $\mathbf{5144 \sd{[4743,5879]}}$ & $2237 \sd{[1394,2487]}$
         & $1227 \sd{[1118,1295]}$ & $258 \sd{[240,265]}$ & $\mathbf{2677 \sd{[2416,3026]}}$ & $1352 \sd{[1197,1441]}$ 
         \\
    \cmidrule(lr){1-5}\cmidrule(lr){6-9}
        ant (delayed)  
         & $\mathbf{4580 \sd{[3369,4980]}}$ & $-432 \sd{[-542,-339]}$ & $10.0 \sd{[8.3,12.7]}$ & $\mathbf{5138 \sd{[4770,5457]}}$
         & $2705 \sd{[1804,3126]}$ & $-307 \sd{[-319,-288]}$ & $-0.9 \sd{[-3.8,1.6]}$ & $\mathbf{3732 \sd{[3519,4000]}}$
         \\
        ant (very delayed) 
         & $2051 \sd{[-1,3661]}$ & $-169 \sd{[-291,-136]}$ & $-2.8 \sd{[-2.8,-2.4]}$ & $\mathbf{3905 \sd{[3398,4864]}}$
         &  $875 \sd{[-16,1494]}$ & $-16 \sd{[-76,53]}$ & $-7.6 \sd{[-9.0,-6.0]}$ & $\mathbf{2297 \sd{[1783,2973]}}$
         \\
        hopper (delayed)  
         & $\mathbf{813 \sd{[689,928]}}$ & $\mathbf{1067 \sd{[659,1340]}}$ & $440 \sd{[295,525]}$ & $\mathbf{775 \sd{[722,910]}}$
         & $611 \sd{[554,734]}$ & $275 \sd{[170,394]}$ & $309 \sd{[267,368]}$ & $\mathbf{803 \sd{[646,970]}}$
         \\
        hopper (very delayed)  
         & $\mathbf{190 \sd{[6,643]}}$ & $\mathbf{388 \sd{[-0,1579]}}$ & $82 \sd{[-0,347]}$ & $\mathbf{516 \sd{[419,580]}}$
         & $\mathbf{367 \sd{[72,537]}}$ & $62 \sd{[-0,248]}$ & $55 \sd{[-0,250]}$ & $\mathbf{344 \sd{[270,374]}}$
         \\
        humanoid (delayed)
         & $6.0 \sd{[5.9,6.1]}$ & $-3.3 \sd{[-13.6,4.6]}$ & $5.9 \sd{[5.8,5.9]}$ & $\mathbf{281 \sd{[208,551]}}$
         & $0.6 \sd{[0.2,1.2]}$ & $-0.9 \sd{[-3.5,1.9]}$ & $4.9 \sd{[4.9,5.0]}$ & $\mathbf{117 \sd{[98,427]}}$
         \\
    \bottomrule
    \end{tabular}
    }
\end{table*}

We conduct experiments to test whether model-free deep exploration algorithms can be designed using PAC-Bayesian bounds.
Focusing on environments with nonlinear dynamics and continuous state-action spaces, 
we use established sparse benchmarks from the DMC suite~\citep{tassa2018deepmind},
robotic tasks from Meta-World environments with sparse rewards \citep{yu2020meta}, 
and introduce new delayed reward benchmarks based on various MuJoCo environments~\citep{todorov2012mujoco}. 
We evaluate whether PBAC improves performance in these setups with difficult reward characteristics while remaining competitive on their dense-reward counterparts. 
In \cref{appsec:prior_sparsity}, we discuss prior work on such environments.

We build an ensemble of ten Q-functions and use a replay ratio of five to improve sample efficiency. This approach has been shown to achieve the same level of performance as state-of-the-art methods, effectively addressing sample efficiency, as demonstrated by \citet{nauman2024overestimation}. 
More details on the reward structure of each environment and the remaining design choices are provided in \cref{appsec:exp_details}.
We provide a public implementation at \url{https://github.com/adinlab/PAC-Bayes-ActorCritic}.

\textbf{Baselines.}
We compare our method with other state-of-the-art exploration approaches on three standard MuJoCo setups, their delayed versions, three sparse DMC tasks, and seven Meta-World environments. Following the results of \citet{fu2024furl}, we excluded three environments they observed to be (almost) unlearnable for their vision-based RL approach.  
We choose SAC-DRND~\citep{yang2024exploration} as the best representative of the RND family integrated with SAC and ${\text{BootDQN-P}}$~\citep{osband2018randomized} as a close SOTA Bayesian model-free method to our model. BootDQN-P was originally introduced for discrete action spaces. We discuss our adaptation to continuous action spaces in \cref{appsec:baselines}.
Finally, \text{BEN}~\citep{fellows2024ben} serves as the best representative approach that can learn the Bayes-optimal policies. 
See \cref{appsec:exp_details} for further details.

\textbf{Delayed rewards.}
Our chosen MuJoCo environments rely on a reward signal $r$ composed of three terms, 
\begin{equation*}
    r \deq \underbrace{\frac{dx_t}{dt}}_{\mathclap{\text{forward reward}}} - \underbrace{w_a||a_t||^2}_\text{action cost} + \underbrace{H}_{\mathclap{\text{health reward}}}.
\end{equation*}
The \emph{forward reward} is given by the agent's speed towards its destination, while the \emph{action cost} ensures that the agent solves the task as efficiently as possible. The \emph{health reward} $H$ encourages the agent to maintain its balance.
To model delayed rewards, we explore the following two variations of this default reward:

(i) A \emph{delayed} version in which the health reward $H$ is always set to zero, removing any incentive for the agent to maintain stability. This can cause the agent to prioritize speed (minimizing positional delay), potentially leading to inefficient movement patterns such as falling, rolling, or chaotic behavior. Without incentives for stability, the agent may fail to learn proper locomotion, resulting in suboptimal performance, more difficult training, and reduced overall effectiveness.

(ii) A \emph{very delayed} version in which the forward reward proportional to velocity is granted only if the agent reaches a distance threshold $c$, i.e., the reward is given as
$r = \nicefrac{dx_t}{dt}\mathbf{1}_{x_t > c} - w_a||a_t||^2$.
The delay degree, which determines the task difficulty, increases proportionally to~$c$. This delay applies in addition to the zero health reward from above. Although this incentivizes reaching the target, the lack of intermediate progress rewards may lead the agent to struggle with exploration, resulting in slower learning and difficulty discovering effective policies as the agent rarely receives informative signals.

In these settings, the agent does not receive any positive reward and incurs a penalty for using its actuators for failed explorations. Hence, they are well-suited for testing how directed the exploration scheme of a learning algorithm is. We use them to assess whether an exploration strategy can consistently promote effective exploration when the agent encounters a lack of informative feedback during learning. 
We set the hyperparameters $w_a$ and $H$ to the original MuJoCo default values. See \cref{appsec:mujocodelay} for these parameters and further discussion. 
For completeness, we additionally report the dense counterparts.

We summarize our results in \cref{tab:result_table}, reporting the interquartile mean (IQM) \citep{agarwal2021deep} together with the corresponding interquartile range over ten seeds. Reported are the reward on the final episode, and the area under the learning curve (AULC) for the whole training period, which quantifies the convergence speed, i.e., how fast an agent learns its task. See \cref{fig:delayed_curves} for the reward curves of a subset of environments. We provide the remaining curves in \cref{fig:all_curves}.
PBAC is competitive in most environmental settings and excels especially in the learning speed (subtable (b)) in delayed environments, while still outperforming the baselines in two of the three dense reward environments. 
Interquartile means that are not statistically significant at a level $p < 0.05$ compared to the highest IQM in a paired t-test are marked in bold (see \cref{appsec:eval_methodology} for details). 

\begin{figure*}
    \centering
    \includegraphics[width=0.30\textwidth]{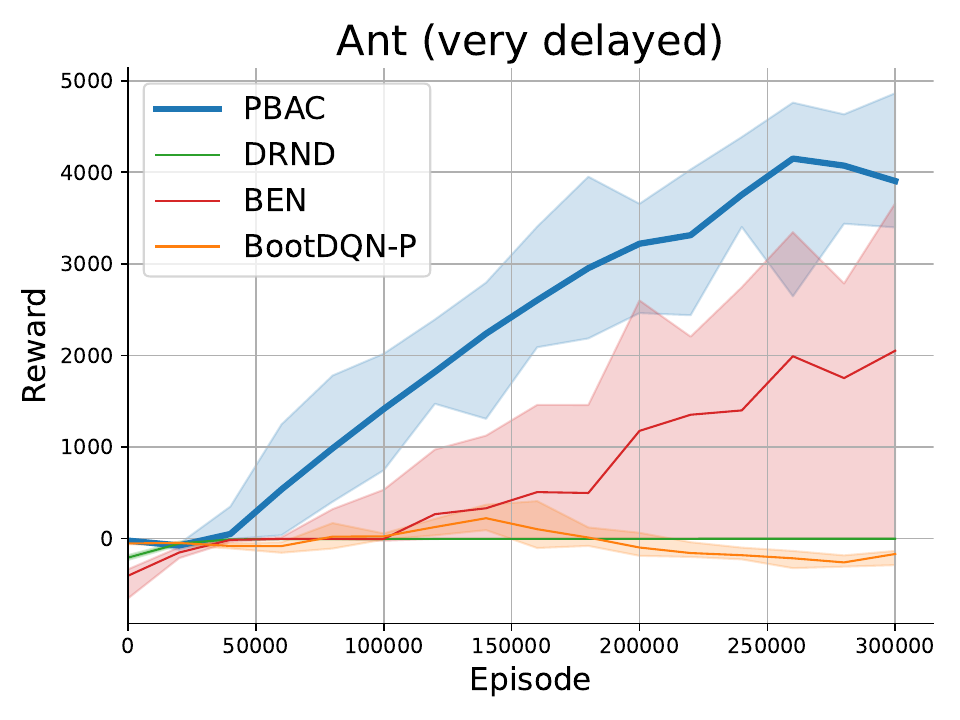}
    \includegraphics[width=0.30\textwidth]{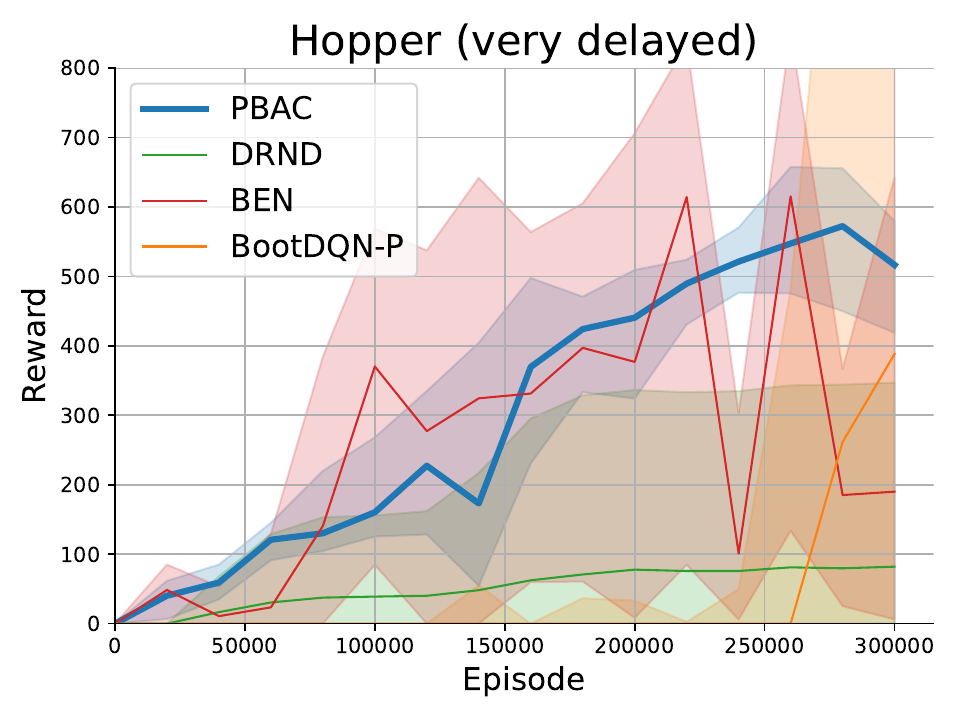}
    \includegraphics[width=0.30\textwidth]{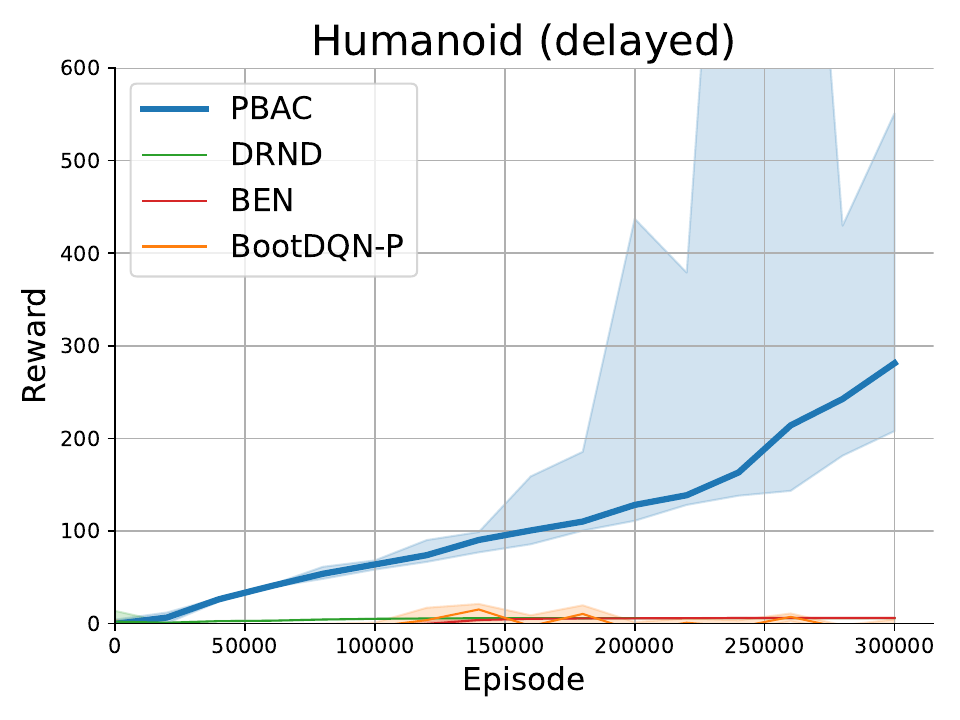}
    \caption{Reward curves for three delayed reward MuJoCo environments corresponding to the results in \cref{tab:result_table}. Visualized are the interquartile mean together with the interquartile range over ten seeds for each method. The remaining reward curves are shown in \cref{fig:all_curves}.} \label{fig:delayed_curves}
    \vspace{1.0em}
\end{figure*}

\textbf{Sparse rewards.}
DMC and Meta-World provide a set of sparse reward scenarios in which the agent only receives reward upon successful completion of its task.
We evaluated all models on Meta-World's random-goal MT benchmark, where the hidden goal position changes with each trajectory. 
We report the final average success rate and the area under the success rate curve. Methods that do not perform significantly worse ($p < 0.05$) than the best performing average are marked in bold.
Our results in \cref{tab:success_result_table} show that PBAC outperforms the other baselines in \emph{average area under success curve} and performs similarly to DRND in the \emph{final episode success rate}, demonstrating its ability to generalize effectively across different goal positions. High average success rates are relevant, e.g., in online learning settings where an agent must learn a task not only successfully but also as quickly as possible.
Note especially the improvements in performance of PBAC compared to the Bayesian baseline, i.e., BEN.
More detailed information is provided in \cref{app:env} as well as on \emph{button} and \emph{door}, which none of the baselines solve.
Three DMC environments are reported in \cref{tab:result_table}.
See \cref{fig:metaworld_curves} for the full reward curves.

\begin{table}
    \centering
    \caption{\emph{Sparse Meta-World.} Average success rate percentage over five seeds on sparse Meta-World environments with hidden goals. The highest mean and those that do not differ significantly from it ($p < 0.05$) are bold. 
    }
    \label{tab:success_result_table}
    \vspace{2.0em}
    \adjustbox{max width=0.95\columnwidth}{
    \begin{tabular}{lccccc}
    \toprule
    &\multicolumn{5}{c}{(a) average of the final episode success rate ($\uparrow$)}\\
    \cmidrule(lr){2-6}
    \textsc{Environment} & BEN & BootDQN-P & DRND  & PBAC (ours)\\
    \midrule
        window close 
 &  $ 0.20 \sd{\pm 0.40}$ & $ 0.00 \sd{\pm 0.00}$ &$ \mathbf{0.98\sd{\pm 0.04}}$  &  $   0.88 \sd{\pm 0.16}$ 
        \\ 
        window open
& $0.00 \sd{\pm 0.00}$ & $0.04 \sd{\pm 0.08}$ & $\mathbf{0.80 \sd{\pm 0.40}}$ &  $\mathbf{0.56 \sd{\pm 0.35}}$
\\
drawer close
&  $0.96 \sd{\pm 0.08}$ & $0.62 \sd{\pm 0.47}$ & $\mathbf{1.00} \sd{\pm 0.00}$ &  $0.92 \sd{\pm 0.12}$
\\
drawer open
& $0.00 \sd{\pm 0.00}$ & $0.00 \sd{\pm 0.00}$ & $\mathbf{1.00} \sd{\pm 0.00}$ &  $0.78 \sd{\pm 0.26}$
\\
reach
& $0.60 \sd{\pm 0.49}$ & $0.00 \sd{\pm 0.00}$ & $0.80 \sd{\pm 0.40}$ & $\mathbf{1.00} \sd{\pm 0.00}$
\\
button press topdown
& $0.00 \sd{\pm 0.00}$  & $0.00 \sd{\pm 0.00}$ &  $0.20 \sd{\pm 0.40}$ & $\mathbf{0.96}\sd{\pm 0.05}$
\\
door open
& $0.00 \sd{\pm 0.00}$  & $0.00 \sd{\pm 0.00}$ & $0.60 \sd{\pm 0.49}$ & $\mathbf{0.98}\sd{\pm 0.04}$
        \\ 
    \midrule
    &\multicolumn{5}{c}{(b) average of the area under success rate curve ($\uparrow)$} \\
    \cmidrule(lr){2-6}
window close 
    & $ 0.18\sd{\pm 0.38}$  & $ 0.06\sd{\pm 0.23}$ &  $\mathbf{0.92}\sd{\pm 0.26}$ &  $\mathbf{0.89}\sd{\pm 0.26}$
    \\ 
    window open  
     &  $0.00\sd{\pm 0.00}$ & $0.15\sd{\pm 0.36}$ & $\mathbf{0.76}\sd{\pm 0.43}$ &  $\mathbf{0.73}\sd{\pm 0.35}$
    \\ 
    drawer close  
     & $0.94\sd{\pm 0.18}$ & $0.61\sd{\pm 0.48}$ & $\mathbf{0.97}\sd{\pm 0.15}$ & $\mathbf{0.88\sd{\pm 0.25}}$  
    \\ 
   drawer open  
     & $0.00\sd{\pm 0.00}$ & $0.00\sd{\pm 0.00}$ & $\mathbf{0.81}\sd{\pm 0.39}$ & $\mathbf{0.80\sd{\pm 0.34}}$  
    \\ 
   reach  
    & $0.56\sd{\pm 0.49}$  & $0.00\sd{\pm 0.00}$ & $\mathbf{0.78}\sd{\pm 0.42}$ & $\mathbf{0.98}\sd{\pm 0.14}$ 
    \\ 
     button press topdown  
     &  $0.00\sd{\pm 0.00}$ & $0.00\sd{\pm 0.00}$ & $0.19\sd{\pm 0.39}$ &  $\mathbf{0.68}\sd{\pm 0.38}$ 
    \\ 
    door open  
     &  $0.00\sd{\pm 0.00}$ & $0.00\sd{\pm 0.00}$  & $0.50\sd{\pm 0.49}$ &  $\mathbf{0.69}\sd{\pm 0.43}$ 
    \\
    \bottomrule
    \end{tabular}
    }
\end{table}

\textbf{Exploration patterns.}
\cref{fig:teaser_exploration} shows how PBAC explores the first two dimensions of the state space (position and angle) in the cartpole environment throughout its training process. We describe this visualization in greater detail and include the remaining methods as well as a second environment (delayed ant) in \cref{appsec:ssvis}.

\textbf{Ablation.}
We provide results on the influence of the three main hyperparameters: bootstrap rate $\kappa$, posterior sampling rate, and prior variance $\sigma_0$ in \cref{appsec:hyper}.
An ablation on the importance of the individual terms in the objective \cref{eq:critic_loss} is available in \cref{appsec:lossablation}.

\section{Conclusion}
We introduced a method to perform deep exploration in delayed reward environmental settings for the first time with a principled PAC-Bayesian approach. Comparing it to state-of-the-art baselines, we demonstrated its superior performance on a wide range of continuous control benchmarks with various delayed and sparse reward patterns. 

PBAC's performance is sensitive to hyperparameters, especially in dense reward environments. This is expected, as it is designed to perform deep exploration in sparse-reward settings.
While performing competitively in two dense environments, \emph{ant} and \emph{hopper}, it struggles in the dense \emph{humanoid}, as do several of the baselines. This is due to the significant amount of reward an agent receives by only remaining alive ($H=5$) compared to $H=1$ in the other two dense environments. It is not surprising that in this situation, random exploration is sufficient and more robust. However, as soon as the reward becomes delayed, PBAC remains the only method to learn the task.

Our current work does not provide any convergence guarantees on PBAC's behavior. This theoretical problem requires, and deserves, dedicated investigation, which we consciously keep outside our focus. 
Global convergence cannot be guaranteed for standard $Q$-learning with continuous state spaces, let alone continuous action spaces. 
The convergence results for SAC \citet{haarnoja2018soft} are too restrictive, as they assume error-free Bellman backups. If such assumptions are deemed acceptable, the convergence of PBAC follows analogously to SAC's derivation.
As PBAC builds on a sum of two Bellman backups and a regularizer, it will inherit similar properties to the guarantees of stochastic iterative $Q$-learning variants.

\begin{ack}
    This work was funded by the Novo Nordisk Foundation (NNF21OC0070621) and
the Carlsberg Foundation (CF21-0250).
\end{ack}

\bibliography{refs-ecai}

\appendix
\onecolumn

\begin{center}
    \Huge 
    \textsc{Appendix}
\end{center}

In this appendix, we include proofs and derivations for all theoretical statements made in the main text in Section~\ref{appsec:proofs}.
Section~\ref{appsec:fard} contains a short discussion on differences between our bound and \citet{fard2012pac}'s result. 
We summarize all experimental details and provide a pseudocode algorithm for our model in Sections~\ref{appsec:exp_details} and \ref{appsec:pseudocode}.
Section~\ref{appsec:results} closes the appendix with a series of further results and ablations.

\section{Proofs and derivations}\label{appsec:proofs}

For practical purposes, we assume in the proofs that all probability measures except $\rho$  have densities. The proofs can be straightforwardly extended by lifting this assumption.

To prove our main theorem, we require the following three lemmas.

\paragraph{\cref{prop:mse2value}.}
For any $\rho$ defined on $X$, the following identity holds
\begin{align*}
\widetilde{L}(\rho)= \mdE_{X \sim\rho}||T_\pi X- X||_{P_\pi}^2  + \gamma^2 \mdE_{X \sim\rho}[ \varp{s \sim P_\pi}{ X(s,\pi(s))}].
\end{align*}

\begin{proof}[Proof of \cref{prop:mse2value}]
For a fixed $X$  we have
\begin{align*}
\widetilde{L}(X)&=\Ep{s \sim P_\pi} {\Ep{s' \sim P(\cdot|s,\pi(s))}  {\left((\widetilde{T}_\pi X)(s,\pi(s),s') - X(s,\pi(s))\right)^2 \Big| s }} \\
  &~=\mdE_{s \sim P_\pi} \Big[\mdE_{s' \sim P(\cdot|s,\pi(s))}  \big[(\widetilde{T}_\pi X)(s,\pi(s),s')^2  + X(s,\pi(s))^2  - 2 (\widetilde{T}_\pi X)(s,\pi(s),s') X(s,\pi(s)) | s \big] \Big] \\
  &~=\mdE_{s \sim P_\pi} \Big[\Ep{s' \sim P(\cdot|s,\pi(s))}  {(\widetilde{T}_\pi X)(s,\pi(s),s')^2| s } \\
  &\qquad \qquad \qquad \qquad + X(s,\pi(s))^2  - 2 \Ep{s' \sim P(\cdot|s,\pi(s))}  {(\widetilde{T}_\pi X)(s,\pi(s),s') |s} X(s,\pi(s))  \Big] \\
  &~=\mdE_{s \sim P_\pi} \Big[\Ep{s' \sim P(\cdot|s,\pi(s))}  {(\widetilde{T}_\pi X)(s,\pi(s),s')| s }^2 \\
  & \qquad \qquad \qquad \qquad+ \varp{s' \sim P(\cdot|s,\pi(s))}{(\widetilde{T}_\pi X)(s,\pi(s),s')|s } + X(s,\pi(s))^2  \\
  &\qquad \qquad \qquad \qquad - 2 \Ep{s' \sim P(\cdot|s,\pi(s))}  {(\widetilde{T}_\pi X)(s,\pi(s),s') |s} X(s,\pi(s)) \Big] \\
  &~=\mdE_{s \sim P_\pi} \Big[\left(\Ep{s' \sim P(\cdot|s,\pi(s))}  { (\widetilde{T}_\pi X)(s,\pi(s),s')| s } - X(s,\pi(s))\right)^2  + \varp{s' \sim P(\cdot|s,\pi(s))}{(\widetilde{T}_\pi X)(s,\pi(s),s')|s} \Big] \\
  &~=\Ep{s \sim P_\pi} {\Big( (T_\pi X)(s,\pi(s)) - X(s,\pi(s))\Big)^2  + \varp{s' \sim P(\cdot|s,\pi(s))}{(\widetilde{T}_\pi X)(s,\pi(s),s')  |s} }\\
  &~=||T_\pi X- X||_{P_\pi}^2   + \Ep{s \sim P_\pi} {\varp{s' \sim P(\cdot|s,\pi(s))}{(\widetilde{T}_\pi X)(s,\pi(s),s')  |s} } \\
  &~=||T_\pi X- X||_{P_\pi}^2   + \Ep{s \sim P_\pi} {\varp{s' \sim P(\cdot|s,\pi(s))}{r(s,\pi(s)) + \gamma X(s',\pi(s'))  |s} } \\
  &~=||T_\pi X- X||_{P_\pi}^2 + \gamma^2 \Ep{s \sim P_\pi} {\varp{s' \sim P(\cdot|s,\pi(s))}{ X(s',\pi(s'))  |s} }.
\end{align*}

Since the Markov process is stationary we get
\begin{align*}
    \Ep{s \sim P_\pi}{ \Ep{s' \sim P(\cdot|s, \pi(s))}{X(s', \pi(s'))|s}} = \Ep{s \sim P_\pi}{X(s, \pi(s))}
\end{align*}
as well as
\begin{align*}
    \Ep{s \sim P_\pi}{ \Ep{s' \sim P(\cdot|s, \pi(s))}{X(s', \pi(s'))^2|s}} = \Ep{s \sim P_\pi}{X(s, \pi(s))^2}.
\end{align*}
Hence, the variance term can be expressed as
\begin{align*}
    \Ep{s \sim P_\pi}{\varp{s' \sim P(\cdot|s, \pi(s))} {X(s', \pi(s'))|s}} &= 
\Ep{s \sim P_\pi}{X(s, \pi(s))^2} - \Ep{s \sim P_\pi}{X(s, \pi(s))}^2\\
&=\varp{s \sim P_\pi}{X(s,\pi(s))}.
\end{align*}
Taking the expectation over $X$ with respect to $\rho$ concludes the proof.
\end{proof}

\begin{lemma} \label{lem:contraction}
For any $Q_1, Q_2: \St \times \Ac \rightarrow \mdR$ and stationary state visitation distribution $P_\pi$, the following inequality holds:
\begin{align*}
    ||T_\pi Q_1 - T_\pi Q_2||_{P_\pi} \leq \gamma  ||Q_1 - Q_2||_{P_\pi}.
\end{align*}
That is, the Bellman operator $T_\pi$ is a $\gamma$-contraction with respect to the $P_\pi$-weighted  $L_{2}$-norm $\lVert\cdot\rVert_{P^{\pi}}$
\end{lemma}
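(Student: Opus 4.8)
The plan is to prove the contraction directly from the definitions, exploiting that the reward term is common to $T_\pi Q_1$ and $T_\pi Q_2$ and therefore cancels, leaving only the $\gamma$-scaled transition-averaged difference of $Q_1$ and $Q_2$. Recalling that the $P_\pi$-weighted norm is taken along the on-policy evaluation $(s,\pi(s))$ (as in the proof of \cref{prop:mse2value}), I would first write out the pointwise difference $(T_\pi Q_1)(s,\pi(s)) - (T_\pi Q_2)(s,\pi(s))$. Since $r(s,\pi(s))$ appears identically in both Bellman operators, it drops out, and what remains is $\gamma\,\Ep{s'\sim P(\cdot|s,\pi(s))}{(Q_1 - Q_2)(s',\pi(s'))}$. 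Thus the factor $\gamma$ is exposed explicitly, and the residual object is a conditional expectation of the difference $Q_1 - Q_2$ over the next state.

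Next I would square this expression and integrate it against $P_\pi$. The key inequality is Jensen applied to the convex map $x \mapsto x^2$, which lets me move the square inside the conditional expectation over $s'$, giving $\big(\Ep{s'\sim P(\cdot|s,\pi(s))}{(Q_1-Q_2)(s',\pi(s'))}\big)^2 \le \Ep{s'\sim P(\cdot|s,\pi(s))}{((Q_1-Q_2)(s',\pi(s')))^2}$. Substituting this bound upgrades the squared norm $\|T_\pi Q_1 - T_\pi Q_2\|_{P_\pi}^2$ to at most $\gamma^2$ times the nested expectation $\Ep{s\sim P_\pi}{\Ep{s'\sim P(\cdot|s,\pi(s))}{((Q_1-Q_2)(s',\pi(s')))^2}}$.

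The crux, and the step I expect to be the main obstacle, is collapsing this nested expectation back into a single $P_\pi$-weighted norm of $Q_1 - Q_2$. This is exactly where stationarity of $P_\pi$ is essential: by the assumed identity $P_\pi(s') = \int P(s'|s,\pi(s))\,P_\pi(s)\,ds$, marginalizing one transition leaves the distribution of $s'$ equal to $P_\pi$, so the nested expectation equals $\Ep{s\sim P_\pi}{((Q_1-Q_2)(s,\pi(s)))^2} = \|Q_1 - Q_2\|_{P_\pi}^2$. This is the same stationarity fact already invoked in the proof of \cref{prop:mse2value}, so I would simply reuse it. Taking square roots then yields the claimed $\gamma$-contraction. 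The only care needed is to keep every expression evaluated at $(s,\pi(s))$ throughout, since the norm is defined along the on-policy action rather than over arbitrary $a$; without stationarity the nested expectation would instead define a pushforward measure and the bound would fail to close against the original norm.
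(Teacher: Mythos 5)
Your proof is correct and follows essentially the same route as the paper's: cancel the common reward term to expose the factor $\gamma$, apply Jensen's inequality to the conditional expectation over $s'$, and invoke stationarity of $P_\pi$ to collapse the nested expectation back into the $P_\pi$-weighted norm of $Q_1 - Q_2$. If anything, your version is cleaner, since the paper's displayed derivation contains a typo (it keeps $T_\pi Q_1$ and $T_\pi Q_2$ inside the inner expectation where $Q_1$ and $Q_2$ are intended), whereas you correctly track the bare difference $Q_1 - Q_2$ throughout.
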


\begin{proof}[Proof of \cref{lem:contraction}]
\begin{align*}
\lVert T_{\pi} & Q_{1} - T_{\pi} Q_{2} \rVert_{P_{\pi}}^{2}  = \Ep{s\sim P_{\pi}}{\Big ((T_{\pi}Q_{1})(s,\pi(s))-(T_{\pi}Q_{2})(s,\pi(s))\Big )^{2}} \\ 
&= \gamma^2 \mdE_{s \sim P_{\pi}}\Big [\Big (\Ep{s' \sim P(\cdot|s,\pi(s))} {(T_{\pi}Q_{1})(s',\pi(s'))-(T_{\pi}Q_{2})(s',\pi(s')) | s}\Big )^{2} \Big ] \\
&\leq \gamma^2 \mdE_{s \sim P_{\pi}}\Big [ \mdE_{s' \sim P(\cdot|s,\pi(s))} \Big [\Big ((T_{\pi}Q_{1})(s',\pi(s'))-(T_{\pi}Q_{2})(s',\pi(s'))  \Big )^{2} \Big | s \Big  ]\Big ] && \textit{\color{white!40!black}(Jensen)} \\
&=\gamma^2 \mdE_{s' \sim P_{\pi}} \Big [\Big ((T_{\pi}Q_{1})(s',\pi(s'))-(T_{\pi}Q_{2})(s',\pi(s'))  \Big )^{2} \Big ]. && \textit{\color{white!40!black}(Stationarity)}
\end{align*}
Taking the square root of both sides yields the result.
\end{proof}
\begin{lemma} \label{Lem:bellman2value}
For any $\rho$ defined on $X$, the following inequality holds:
\begin{equation*}
\mdE_{X \sim \rho} \lVert X- Q_{\pi}\rVert_{P_{\pi}} \leq  \frac{\mdE_{X \sim \rho} \lVert T_{\pi} X- X \rVert_{P_{\pi}}}{1-\gamma}.
\end{equation*}
\end{lemma}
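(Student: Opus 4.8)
The plan is to reduce the statement to a pointwise (i.e., fixed-$X$) inequality and then integrate over $\rho$, exploiting two facts already available: that $Q_\pi$ is the unique fixed point of $T_\pi$ (so $T_\pi Q_\pi = Q_\pi$) and that $T_\pi$ is a $\gamma$-contraction in the $P_\pi$-weighted $L_2$-norm (\cref{lem:contraction}). Since $\gamma \in [0,1)$, the factor $1-\gamma$ is strictly positive, which will let me divide at the end.

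First I would fix an arbitrary realization $X$ and insert $\pm T_\pi X$ inside the norm, then apply the triangle inequality for $\lVert\cdot\rVert_{P_\pi}$:
\begin{align*}
\lVert X - Q_\pi \rVert_{P_\pi}
\leq \lVert X - T_\pi X \rVert_{P_\pi} + \lVert T_\pi X - Q_\pi \rVert_{P_\pi}.
\end{align*}
Next, since $Q_\pi = T_\pi Q_\pi$, I would rewrite the second term as $\lVert T_\pi X - T_\pi Q_\pi \rVert_{P_\pi}$ and bound it by the contraction property of \cref{lem:contraction}, giving $\lVert T_\pi X - T_\pi Q_\pi \rVert_{P_\pi} \leq \gamma \lVert X - Q_\pi \rVert_{P_\pi}$. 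Substituting back yields
\begin{align*}
\lVert X - Q_\pi \rVert_{P_\pi}
\leq \lVert T_\pi X - X \rVert_{P_\pi} + \gamma \lVert X - Q_\pi \rVert_{P_\pi}.
\end{align*}

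Collecting the $\lVert X - Q_\pi \rVert_{P_\pi}$ terms and dividing by the positive quantity $1-\gamma$ gives the pointwise bound $\lVert X - Q_\pi \rVert_{P_\pi} \leq \lVert T_\pi X - X \rVert_{P_\pi}/(1-\gamma)$. Finally, I would take the expectation over $X \sim \rho$ on both sides; by linearity and monotonicity of $\mdE_{X\sim\rho}[\cdot]$, and because the factor $1/(1-\gamma)$ is a deterministic constant, this delivers exactly the claimed inequality.

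I expect no serious obstacle here: this is the classical fixed-point error bound, and the only points requiring care are invoking the contraction in the correct direction (applying it to $T_\pi X$ versus $T_\pi Q_\pi$, not to $X$ versus $Q_\pi$ directly) and confirming that the fixed-point identity $T_\pi Q_\pi = Q_\pi$ holds in the relevant $P_\pi$-a.e. sense so that the substitution is legitimate. The passage from the pointwise bound to the expectation is immediate and does not rely on any property of $\rho$ beyond it being a probability measure on the realizations of $X$.
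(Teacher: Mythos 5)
Your proposal is correct and follows essentially the same route as the paper's proof: insert $\pm T_\pi X$, apply the triangle inequality, use the fixed-point identity $T_\pi Q_\pi = Q_\pi$ together with the contraction property of \cref{lem:contraction}, rearrange, and integrate over $\rho$. The only cosmetic difference is the order of dividing by $1-\gamma$ versus taking the expectation, which is immaterial since the factor is a deterministic constant.
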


\begin{proof}[Proof of \cref{Lem:bellman2value}]
For any fixed $X$, we have
\begin{align*}
\lVert X-Q_{\pi}\rVert_{P_{\pi}} &= \lVert X-T_{\pi}X+T_{\pi}X-Q_{\pi}\rVert_{P_{\pi}} \\
&= \lVert X-T_{\pi}X + T_{\pi} X - T_{\pi} Q_{\pi} \rVert_{P_{\pi}} \\
&\leq \lVert X-T_{\pi} X \rVert_{P_{\pi}} + \lVert T_{\pi} X - T_{\pi} Q_{\pi} \rVert_{P_{\pi}} && \textit{\color{white!40!black}(Triangle~ineq.)} \\
&\leq \lVert X-T_{\pi} X \rVert_{P_{\pi}} + \gamma \lVert X- Q_{\pi} \rVert_{P_{\pi}}. && \textit{\color{white!40!black}(\cref{lem:contraction})}
\end{align*}
Rearranging the terms and integrating over all $X$'s weighted by $\rho$ yields the result.
\end{proof}

\paragraph{\cref{prop:main_result}.}
\emph{For any posterior and prior measures ${\rho, \rho_0 \in \mathcal{P}}$, error tolerance $\delta \in (0,1]$, and deterministic policy $\pi$, the following inequality holds simultaneously with probability at least $1-\delta$:}
\begin{align*}
\begin{split}
\mdE_{X \sim \rho} ||Q_\pi - X||_{P_\pi}^2 &\leq  \Big ( \widehat{L}_\mathcal{D}(\rho) + \frac{\nu \bar \lambda B^2}{8n} + \frac{\KL{\rho}{\rho_0} - \log\delta}{\nu}
 - \gamma^2 \mdE_{X \sim \rho} \varp{s \sim P_\pi}{X(s,\pi(s))}  \Big )\big/(1 - \gamma)^2
\end{split}
\end{align*}
\emph{where $\bar \lambda \deq \frac{1 + \max_{t\in [n]}\lambda_t}{1 - \max_{t \in [n]}\lambda_t}$, 
with $\lambda_t \in [0,1)$ as the operator norm of the transition kernel of the time-inhomogeneous Bellman error Markov chain at time $t$, $B=R^2/(1 - \gamma)^2$ and $\nu > 0$ an arbitrary constant.}

To prove this theorem, we rely on the following result by \citet{fan2021hoeffding}, which we quote below.\footnote{Theorem 5 in \citet{fan2021hoeffding}.}

\paragraph{Theorem \citep{fan2021hoeffding}.}
\emph{Consider a time-inhomogeneous $\pi$-stationary Markov chain $\{X_i\}_{i\geq 1}$. If its transition probability kernel $P_i$ admits absolute spectral gap $1 - \lambda_i$, $\forall i \geq 1$, then, for any bounded function $f_i:\mcX \to [a_i,b_i]$, the sum $\sum_i f_i(X_i)$ is sub-Gaussian with variance proxy 
\begin{equation*}
    \sigma^2 \leq \frac{1 + \max_i \lambda_i}{1 - \max_i \lambda_i}\sum_i\frac{(b_i - a_i)^2}{4}.
\end{equation*}
}

We partially follow \citet{alquier2024user}'s proof on the Catoni bound, adapting it to our specific setup.\footnote{Section 2.1 in \citet{alquier2024user}.} 
\begin{proof}[Proof of \cref{prop:main_result}]
    Consider the Bellman error as a time-inhomogeneous Markov chain, ${\{X_t \deq (\tilde T_{\pi_t}(X(s_t,\pi_t(s_t),s_{t+1})) - X(s_t,a_t))\}_{t\geq 1}}$, whose state transition kernel has an absolute spectral gap $1 - \lambda_t$ for each $t \geq 1$.\footnote{Note that this transition kernel differs from the environmental kernel $P$.} 
    Let $\bar \lambda \deq (1 + \max_{t \in [n]}\lambda_t)/(1 - \max_{t\in [n]}\lambda_t)$.
    For a bounded function $f:\mcX \to [0,B]$, we have with \citet{fan2021hoeffding} that $\sum_t f(X_t)$ is sub-Gaussian\footnote{A random variable $X$ is sub-Gaussian with variance proxy $\sigma^2$ if $\E{X}$ is finite and $\E{\exp(t(X - \E{X}))} \leq \exp(\sigma^2t^2/2)$ for all $t \in \mdR$ \citep{boucheron2013concenetration}.}
    with variance proxy
\begin{equation*}
    \sigma^2 \leq \bar \lambda \frac{nB^2}{4}.
\end{equation*}
Defining $F_n \deq \frac1n \sum_t f(X_t)$, we therefore have
\begin{equation*}
    \E{\exp(\nu \left(F_n - \E{F_n}\right)} \leq \exp\left(\frac{\bar \lambda B^2 \nu^2}{8n}\right).
\end{equation*}

Integrating both sides with respect to $\rho_0$ and applying Fubini's theorem gives us 
\begin{equation*}
    \Ep{\mcD}{\Ep{\rho_0}{\exp(\nu(\E{F_n} - F_n))}} \leq \exp\left(\frac{\nu^2\bar\lambda B^2}{8n}\right),
\end{equation*}
where $\mdE_\mcD$ is the expectation over the data set $\mcD$.
Applying Donsker and Varadhan's variational formula \citep{donsker1976asymptotic}, we get that 
\begin{equation*}
    \Ep{\mcD}{\exp\left(\sup_\rho\Big(\nu \Ep{\rho}{\E{F_n} - F_n} - \KL{\rho}{\rho_0}\Big)\right)} \leq \exp\left(\frac{\nu^2\bar \lambda B^2}{8n}\right).
\end{equation*}
With $f(x) = x^2$, we have that $B = R^2/(1 - \gamma)^2$ and can rewrite the inequality in terms of $L(\rho)$ and $\widehat L(\rho)$, such that 
\begin{equation*}
    \Ep{\mcD}{\exp\left(\sup_\rho\Big(\nu (L(\rho) - \widehat L_\mcD(\rho)) - \KL{\rho}{\rho_0}\Big) - \frac{\nu^2\bar \lambda B^2}{8n}\right)} \leq 1.
\end{equation*}
With Chernoff's bound \citep{chernoff52ameasure} we have for a fixed $s > 0$
\begin{align*}
    &\mdP_\mcD\left(\sup_\rho\Big(\nu (L(\rho)  - \widehat L_\mcD(\rho)) - \KL{\rho}{\rho_0}\Big) - \frac{\nu^2\bar \lambda B^2}{8n} > s\right) \\
    &\qquad\leq
    \Ep{\mcD}{\exp\left(\sup_\rho\Big(\nu (L(\rho) - \widehat L_\mcD(\rho)) - \KL{\rho}{\rho_0}\Big) - \frac{\nu^2\bar \lambda B^2}{8n}\right)} e^{-s}  \leq e^{-s}.
\end{align*}
With a choice of $s = \log(1/\delta)$ and rearranging the terms we get that with probability smaller than $\delta$
\begin{equation*}
    \sup_\rho\Big(\nu (L(\rho) - \widehat L_\mcD(\rho)) - \KL{\rho}{\rho_0}\Big) - \frac{\nu^2\bar \lambda B^2}{8n} > \log\frac1\delta,
\end{equation*}
that is, for all $\rho$
\begin{equation}
    L(\rho) \leq  \widehat L(\rho) + \frac{\nu \bar \lambda B^2}{8n} + \frac{\KL{\rho}{\rho_0} - \log\delta}{\nu},\label{eq:loss_bound}
\end{equation}
with probability $1 - \delta$. Finally, with \cref{prop:mse2value} we have that 
\begin{align*}
    &\Ep{X\sim \rho}{||T_\pi X - X||_{P_\pi}^2} \leq \widehat L(\rho) + \frac{\nu \bar \lambda B^2}{8n} + \frac{\KL{\rho}{\rho_0} - \log\delta}{\nu} - \gamma^2\mdE_{X\sim \rho}\varp{s\sim P_\theta}{X(s,\pi(s)},
\end{align*}
and applying \cref{Lem:bellman2value} on the left-hand side gives the claimed inequality.
\end{proof}

\subsection{Discussion on the bound}\label{appsec:bound}
We close this section with a series of remarks on the theorem.
\begin{enumerate}[label=(\roman*)]
    \item \emph{The chain $\{X_i\}$ and $f_i$.} While we focused on the Bellman error and a squared function $f = f_i(x) = x^2$, the proof can be trivially adapted to other chains and bounded functions.
    \item \emph{The spectral gap factor $\bar \lambda$.} As $\max_t \lambda_t\to 1$, $\bar \lambda \to \infty$. However, this implies increasing auto correlation in the transition kernel, which we consider to be unrealistic in practice. 
    \item \emph{The bound $B^2$.} The bound of $f$ appears squared in the final bound, i.e., $B^2 = R^4/(1-\gamma)^4$. Depending on $R$ and the choice of $\gamma \in (0,1)$, its influence could be large. This is mitigated in practice as the related term disappears with $n\to \infty$ or $\nu \to 0$. Note that the choice of $\nu$ is arbitrary. Theorem 2.4 together with Corollary 2.5 from \citet{alquier2024user} extends the bound with an infimum over a discretized grid of $\nu$ values.
    \item \emph{A justification for minimizing the right-hand side of the bound.} Corollary 2.3 in \citet{alquier2024user} shows that the Gibbs posterior, ${\hat \rho_\nu = \argmin_\rho \left(\hat L(\rho) + \KL{\rho}{\rho_0}/\nu\right)}$, minimizes the right-hand side of the bound. Therefore, we have that 
\begin{equation*}
    \mdP\left(L(\hat\rho_\nu) \leq \inf_\rho\left( \widehat L(\rho) + \frac{\nu \bar \lambda B^2}{8n} + \frac{\KL{\rho}{\rho_0} - \log\delta}{\nu}\right)\right) \geq 1 - \delta, 
\end{equation*}
which justifies our choice of using the right-hand side of the bound as our training objective.
\end{enumerate}

\subsubsection{Derivation of a related bound}\label{appsec:fard}
We conclude the discussion on the generalization bound with an alternative that we obtain by adapting \citet{fard2012pac}.

Using the sub-Gaussianity theorem by \citet{fan2021hoeffding} cited above, we have the following result.

\paragraph{Corollary.}
\emph{
Assuming a Markov chain $\{X_i\}_{i \geq 1}$ and bounded functions $f_i$ following the assumptions of Theorem~5 by \citet{fan2021hoeffding}, we have that}
\begin{equation*}
    \mdP\left(\sum_i f_i(X_i) - \sum_i \E{f_i(X_i)} \geq \veps \right) \leq \exp\left(-\frac{\veps^2}{2\bar\lambda \sum_i(b_i - a_i)^2/4}\right),
\end{equation*}
\emph{where $\bar \lambda = \frac{1 + \max_i\lambda_i}{1 - \max_i\lambda_i}$ and $\veps > 0$. Equivalently,}
\begin{equation*}
    \sum_i f_i(X_i) - \sum_i \E{f_i(X_i)} \leq \sqrt{\bar\lambda \sum_i(b_i - a_i)^2/2\log(1/\delta)},
\end{equation*}
\emph{with probability $1 - \delta$ where $\delta \in (0,1)$.}

The result follows after a straightforward application of Chernoff's bound using the sub-Gaussianity of $\sum_i f_i$.  

Assuming $f = f_i$ for all $i$,  and $f(X) \in [0,B]$, we can further simplify the result of the corollary to  
\begin{equation*}
    \frac1n\sum_i f(X_i) - \frac1n\sum_i \E{f(X_i)} \leq \sqrt{\frac{\bar\lambda B^2}{2n}\log(1/\delta)},
\end{equation*}
with probability $1 - \delta$. Assuming a sufficiently large number of observations such that $n > \bar \lambda B^2 / 2$, the assumptions of \citet{fard2012pac}'s Theorem 1 hold, and 
we have, with probability $1 - \delta$, that
\begin{equation*}
\frac1n\sum_i f(X_i)- \frac1n\sum_i \E{f(X_i)} \leq  \sqrt{\frac{\log(2n/\bar \lambda B^2) - \log \delta + \KL{\rho}{\rho_0}}{2n/\bar \lambda B^2 - 1}},
\end{equation*}
where $\rho$ and $\rho_0$ are distributions over a measurable hypothesis space $\mcF$ and $f \in \mcF$.  
Compared with the bound in $\eqref{eq:loss_bound}$, it suffers from the constraint on the number of data points. Since $n > R^4/(1-\gamma)^4$ will rarely be fulfilled in practice for typical values of $\gamma$, e.g., $\gamma = 0.99$ in our paper, it cannot be used as a theoretically justified training objective. 
However, if its constraints were fulfilled, we could use their approach to obtain a tighter generalization bound. 

\subsection{Derivation of the approximation to the KL divergence term}
The approximation to the Kullback-Leibler divergence term discussed in the main paper is derived via
\begin{align*}
    \text{KL}&\big(\rho(s,\pi(s)) || \rho_0(s,\pi(s)) \big) \approx \mdE_{X \sim \rho} [\log f_\rho(X|s,a) - \log f_{\rho_0}(X|s,a)]\\
    &\approx  \frac{1}{K} \sum_{k=1}^{K} \log f_\rho(X_k|s,\pi_k(s)) - \log f_{\rho_0}(X_k|s,\pi_k(s))\\
    &= \frac{1}{2}\frac{1}{K} \sum_{k=1}^{K} \left (-\log \sigma_\pi^2(s) - \frac{(r+\gamma \mu_\pi(s)-X_k)^2}{\sigma_\pi^2(s)} + \log (\gamma^2 \sigma_0^2) + \frac{(r+\gamma \bar{\mu}_\pi(s')-X_k)^2}{\gamma^2 \sigma_0^2} \right )\\
    &= \frac{1}{2} \Bigg (-\log \sigma_\pi^2(s) -  \frac{1}{\sigma_\pi^2(s)} \underbrace{ \frac{1}{K} \sum_{k=1}^{K} (r+\gamma \mu_\pi(s)-X_k)^2}_{=\frac{K-1}{K}\sigma_\pi^2(s)} + K\log (\gamma^2 \sigma_0^2) + \frac{1}{K} \sum_{k=1}^{K}\frac{(r+\gamma \bar{\mu}_\pi(s')-X_k)^2}{\gamma^2 \sigma_0^2} \Bigg )\\
    &= \frac{1}{2K} \sum_{k=1}^{K} \Bigg(\frac{(r+\gamma \bar{\mu}_\pi(s')-X_k)^2}{\gamma^2 \sigma_0^2} - \log \sigma_\pi^2(s) \Bigg ) + \text{const}.
\end{align*}

\subsection{Comparison with BootDQN-P \citep{osband2018randomized}'s loss}\label{appsec:comp_bootdqn}
Assuming for notational simplicity a single observation and no bootstrapping, our loss in \eqref{eq:critic_loss} simplifies to 
\begin{equation*}
  \mcL_\text{PBAC} \deq
  \underbrace{\frac{1}{K} \sum_{k=1}^K \Big(r + \gamma \bar{X}_k(s', \pi(s'))-X_k(s,\pi(s)) \Big)^2}_\text{Diversity}
  +\underbrace{\frac{1}{ K}   \sum_{k=1}^{K}  \frac{ \big (r+\gamma \bar{\mu}_\pi(s')-X_k(s,\pi(s)) \big)^2}{2\gamma^2 \sigma_0^2} }_{\text{Coherence}}
  - \underbrace{\frac{2\gamma^2+1}{2}  \log \sigma_\pi^2(s)}_{\text{Propagation}}. 
\end{equation*}

Compare this with \citet{osband2018randomized}'s loss (equation (6) therein), translated to our notation as,
\begin{equation*}
    \mathcal{L}_\text{BDQN}^{(k)} \deq \left(r + \gamma(\bar X_k + p_k)(s',\pi(s')) - (X_k + p_k)(s,\pi(s))\right)^2,
\end{equation*}
where $p_k$ is the $k$-th fixed prior function, and $\bar X_k$ the $k$-th target. 
It can be interpreted as a perturbed version of the \emph{diversity} component of our loss term, lacking the coherence component term, which as our ablation shows, is the more important one.

\section{Experimental details} \label{appsec:exp_details}

\subsection{Prior work on delayed rewards and reward sparsity in continuous control}\label{appsec:prior_sparsity}

How to structure a delayed reward environment is a matter of debate. 
We identify two broad groups of environments in the current state-of-the-art literature, without claiming this to be an exhaustive survey.
These consist of either of native, i.e., inherent sparsity in the original environment, or are custom modifications of dense reward environments made by the respective authors.
We follow the naming convention for each of the environments as used in the Gymnasium~\citep{towers2024gymnasium}, DMControl~\citep{tassa2018deepmind}, and Meta-World \citep{yu2020meta} libraries unless otherwise noted. 

\paragraph{(i) Binary rewards based on proximity to a target state.} \citet{houthooft2016vime} and \citet{fellows2021bayesian} study Mountain Car continuous and sparse cartpole swing up. 
\citet{fellows2021bayesian} reach a reward of around 500 in cartpole within more than 1.5 million environment interactions, far behind the levels we report in our experiments. \citet{curi2020efficient} study reacher, pusher, a custom sparsified version of reacher with a reward squashed around the goal state, and action penalty with an increased share. \citet{houthooft2016vime}, \citet{mazoure19leveraging}, and \citet{amin2021locally} sparsify various MuJoCo locomotors by granting binary reward based on whether the locomotor reaches a target $x$-coordinate. This reward design has limitations: (i) Since there is no action penalty, the locomotor does not need to use its actuators in the most efficient way. (ii) Since there is no forward reward proportional to velocity, the locomotor's performance after reaching a target location becomes irrelevant. For instance, a humanoid can also fall down or stand still after reaching the target.

\paragraph{(ii) Increased action penalties.} \citet{curi2020efficient} study cartpole and MujoCo HalfCheetah with increased action penalties.  \citet{luis2023model} use pendulum swingup with nonzero reward only in the close neighborhood of the target angle. They also apply an action cost and perturb the pendulum angle with Gaussian white noise. They also report results on PyBullet Gym locomotors HalfCheetah, Walker2D, and Ant with dense rewards. Their follow-up work \citep{luis2023value} addresses a larger set of DMC environments where locomotors are customized to receive action penalties. The limitation of this reward design is that increasing the action penalty is not sufficient to sparsify the reward, as the agent can still observe relative changes in the forward reward from its contributions to the total reward. In some MuJoCo variants, the agent also receives rewards on the health status, which is a signal about intermediate success against the sparse-reward learning goal. Furthermore, action penalties are typically exogenous factors determined by energy consumption in the real world, making the challenge artificial.

In \cref{sec:experiments}, we report results on the most difficult subset of some natively sparse environments and devise new locomotion setups that overcome the aforementioned limitations. We discuss them in the next subsection.

\subsection{Experiment pipeline design} \label{app:env}

The entire experimental pipeline is implemented in PyTorch \citep[version $2.4.1$]{paszke2019pytorch}. The experiments are conducted on eleven continuous control environments from two physics engines, MuJoCo \citep{todorov2012mujoco,brockman2016openai} and DeepMind Control (DMControl) Suite \citep{tassa2018deepmind}, along with  seven additional robotic tasks from sparse Meta-World \citep{yu2020meta}. %
All MuJoCo environments used are from version 4 of the MuJoCo suite (V4), and sparse DMControl environments. Multi-task sparse Meta-World environments are from version 2. 

\subsubsection{MuJoCo}\label{appsec:mujocodelay}
From the two locomotors that can stand without control, we choose \emph{ant} as it is defined in $3D$ space compared to the $2D$ defined \emph{halfcheetah}. From the two locomotors that must learn to maintain their balance, we choose \emph{hopper} instead of \emph{walker2d} as hopping is a more dexterous locomotion task than walking. We also choose \emph{humanoid} as an environment where a $3D$ agent with a very large state and action space must learn to maintain its balance.  The reward functions of the MuJoCo locomotion tasks have the following generic form:
\begin{equation*}
     r \deq \underbrace{\frac{dx_t}{dt}}_{\text{Forward reward}}   - \underbrace{w_a ||a_t||^2}_{\text{Action cost}} + \underbrace{H}_{\text{Health reward}}.
\end{equation*}
\emph{Forward reward} comes from the motion speed of the agent toward its target direction and drives the agent to move efficiently toward the goal. 
\emph{Health reward} is an intermediate incentive an agent receives to maintain its balance. The \emph{action cost} ensures that the agent solves the task using minimum energy. We implements sparsity to the locomotion environments in the following way using the following reward function template:
\begin{equation*}
     r_{\mathrm{delayed}} := \underbrace{\frac{dx_t}{dt} \mathbf{1}_{x_t> c}}_{\text{Forward reward}}   - \underbrace{w_a ||a_t||^2}_{\text{Action cost}}. 
\end{equation*}
This function delays forward rewards until the center of mass of the locomotor $x_t$ reaches a chosen target position $c$, which is called the {\it positional delay}. This reward also removes the healthy reward to ensure that the agent does not get any incentive by solving an intermediate task. The sparsity of a task can be increased by increasing the positional delay. Detailed information on the sparsity levels used in our experiments is listed in \cref{tab:propmujoco}. For each environment, the largest positional delay is chosen, i.e., maximum sparsity, where at least one model can successfully solve the task. Beyond this threshold, all models fail to collect positive rewards within $300000$ environment interactions. The structure of these experiments follows the same structure as the $n$-chains thought experiment, which is studied extensively in theoretical work. The essential property is that there is a long period of small reward on which an agent can overfit. See, e.g., the discussion by \citet{strens2000bayesian} or \citet{osband2018randomized} for further details. Information on the state and action space dimensionalities for each of the three environments is available in \cref{tab:prop}.

\begin{table}
\centering
\caption{\emph{MuJoCo environment reward hyperparameters.} See the description in the text for an explanation on each of the parameters.}
\vspace{1.0em}
\label{tab:propmujoco}
\adjustbox{max width=0.98\textwidth}{
\begin{tabular}{lccc}
\toprule
\textsc{Task} & Positional delay $c$  & Action cost weight $w_a$ & Health reward $H$ \\ 
\midrule
 ant &0 &5e-1&1\\
 ant (delayed) &0 &5e-1&0\\
 ant (very delayed) & 2 &5e-1&0\\
\cmidrule(lr){1-4}
 hopper &0 & 1e-3&1\\
 hopper (delayed) & 0&1e-3&0\\
 hopper (very delayed) & 1&1e-3&0\\
\cmidrule(lr){1-4}
  humanoid & 0 &1e-1&5\\ 
 humanoid (delayed) &0 &1e-1&0\\
\bottomrule
\end{tabular}
}

\end{table}

\subsubsection{Deepmind control}

From DMControl (DMC), we choose \emph{ballincup}, \emph{cartpole}, and \emph{reacher}, as they have sparse binary reward functions given based on task completion. In \emph{ballincup}, the task is defined as whether the relative position of the ball to the cup centroid is below a distance threshold. In \emph{cartpole}, it is whether the cart position and pole angle are in respective ranges $(-0.25, 0.25)$ and $(0.995 , 1)$.  Finally, in \emph{reacher}, it is the distance between the arm and the location of a randomly placed target coordinate. Information on the state and action space dimensionalities for each of the three environments is available in \cref{tab:prop}. We did not consider the DMC locomotors as they use the same physics engine as MuJoCo and their reward structure is less challenging due to the absence of the action penalty and the diminishing returns given to increased velocities.

\subsubsection{Meta-World}
We conducted our experiments using robotic tasks from the multi-task Meta-World environment, originally introduced by \citet{yu2020meta}, with sparse rewards where the agent receives a reward of one only by reaching the goal and zero otherwise. In all environments, the goal location is hidden. The experiments were conducted over \num{1000000} environment steps across five different seeds, with success rates evaluated for all baselines. Our experimental scheme follows \citet{fu2024furl}.
Three environments were excluded from our evaluation compared to them because none of the baseline methods were able to solve these three tasks. These tasks are particularly challenging under sparse reward conditions, as they require mastering multiple subtasks, which significantly increases the difficulty \cite{fu2024furl}.
See \cref{tab:hyper} for additional details on the hyperparameters. We provide a description of each task in \cref{tab:TaskDescriptions}. The area under the success rate curve and final episode success rate results are summarized in \cref{tab:success_result_table} and visualized in \cref{fig:metaworld_curves}.

\begin{table}
\centering
\caption{\emph{State and action space dimensionalities for MuJoCo (MJC) and DMControl (DMC).}}
\vspace{1.0em}
\label{tab:prop}
\adjustbox{max width=0.98\textwidth}{
\begin{tabular}{llcc}
\toprule
& \textsc{Task} & $|\mathcal{S}|$ & $|\mathcal{A}|$ \\
 \midrule
\parbox[t]{2mm}{\multirow{3}{*}{\rotatebox[origin=c]{90}{MJC}}}
& ant &27& 8\\
& hopper &11&3\\
& humanoid&376&17\\
\cmidrule(lr){2-4}
\parbox[t]{2mm}{\multirow{3}{*}{\rotatebox[origin=c]{90}{DMC}}}
& ballincup &8& 2\\
& cartpole & 5&1\\
& reacher&11&2\\

\bottomrule
\end{tabular}
}
\end{table}

\subsection{Evaluation methodology}\label{appsec:eval_methodology}

\paragraph{Performance metrics.}
We calculate the \emph{Interquartile Mean (IQM)} of the final episode reward and of the  \emph{area under learning curve (AULC)} as our performance scores, where the former indicates how well the task has been solved and the latter is a measure of learning speed. The AULC is calculated using evaluation episodes after every \num{20000} steps.
We calculate these rewards over ten repetitions on different seeds, where each of the methods gets the same seeds. All methods, including our approach and the baselines, utilize the same warmup phase of \num{10000} steps to populate the replay buffer before initiating the learning process. 
For evaluating sparse Meta-World environments, we provide average success rate evaluations. In this setup, we consider both the average of the final episode success values and the average area under the success rate curve, computed over five seeds.
\paragraph{Statistical Significance.}
We always highlight the largest IQM in bold in each of the tables. For each of the alternatives, we run a one-sided paired t-test on the null hypothesis that the two models have the same mean. If this null is not rejected at a significance level $p<0.05$, we also highlight it in bold.

\subsection{Hyperparameters and architectures}\label{appsec:hyper}

\paragraph{PBAC specific hyperparameters.}
PBAC has three hyperparameters: bootstrap rate, posterior sampling rate, and prior variance. We observe PBAC to work robustly on reasonably chosen defaults. 
See \cref{appsec:ablation} for an ablation on a range of these for the cartpole and delayed ant environments. We list the hyperparameters we used for each environment in \cref{tab:hyper}.

\begin{table}
\centering

\caption{\emph{Hyperparameters specific to PBAC.} The chosen bootstrap rate (BR), posterior sampling rate (PSR), prior variance (PV) , and Coherence Propagation rate  (CPR) for each of the environments.}
\vspace{1.0em}
\label{tab:hyper}
\adjustbox{max width=0.98\textwidth}{
\begin{tabular}{lcccc}
\toprule
\textsc{Task} & BR & PSR & PV\\
 \midrule
 ant & 0.05& 5& 1.0\\
 hopper &0.01 &5 & 1.0 \\
 humanoid& 0.05& 5& 1.0 \\
 ballincup& 0.05&5 &1.0 \\
 cartpole & 0.05& 10 & 1.0\\
 reacher& 0.05&5 & 1.0\\
 ant (delayed) & 0.05&5 &1.0 \\
 ant (very delayed) &0.05 &5 &5.0 \\
 hopper (delayed) & 0.05&5 & 1.0 \\
 hopper (very delayed) & 0.05&5 & 1\\
 humanoid (delayed) & 0.05&5 &  1.0 \\
 window close (sparse) &0.01&10&5.0 \\
 window open (sparse) &0.05&10&1.0 \\
 drawer close (sparse) &0.05&10&1.0  \\
 drawer open (sparse)&0.05&10&1.0 \\
 reach hidden (sparse)&\textbf{0.01}&10&1.0 \\
 button press topdown (sparse)&0.05&10&1.0 \\
 door open (sparse)&0.05&10&1.0 \\
\bottomrule
\end{tabular}
}
\end{table}

\begin{table}
\centering
\caption{\emph{Task description}. Description of all tasks for Meta-World sparse environments.}
\vspace{1em}
\label{tab:TaskDescriptions}
\adjustbox{max width=0.98\textwidth}{
\begin{tabular}{lcccc}
\toprule
\textsc{environment} & \textsc{instruction}\\
 \midrule

 window-close-v2  & Push and close a window.\\
 window-open-v2  & Push and open a window.\\
 drawer-close-v2  & Push and close a drawer.\\
 drawer-open-v2 & Open a drawer.\\
 reach-hidden-v2 & Reach a goal position.\\
 button-press-topdown-v2 & Press a button from the top.\\
 door-open-v2 & Open a door with a revolving joint.\\
\bottomrule
\end{tabular}
}
\end{table}

\paragraph{Shared hyperparameters and design choices.}
We use \emph{layer normalization} \citep{ball2023efficient} after each layer to regularize the network, and a \emph{concatenated ReLU (CReLU)} activation function ~\citep{shang16understanding} instead of the standard ReLU activation, which enhances the model by incorporating both the positive and negative parts of the input and concatenating the results. 
This activation leads to potentially better feature representations and the ability to learn more complex patterns. 
Moreover, we rely on a high \emph{replay ratio (RR)} and a small \emph{replay buffer} size, which reduce the agent’s dependence on long-term memory and encourage it to explore different strategies. 
These are employed to improve the plasticity of the learning process. Recently, \citet{nauman2024overestimation} showed that a combination of these design choices can overall greatly improve the agent's learning ability.
Additionally, we opted to use the Huber loss function for all baseline models after observing in our preliminary trials that it consistently provided performance advantages across different baselines. The temperature parameter $\alpha$ is automatically adjusted during training to balance exploration and exploitation by regulating policy entropy, following \citet{haarnoja2018soft-app}.
All design choices found advantageous for our model and not harmful to others have also been applied to the baselines.
\cref{tab:hyperparameters} provides details on the hyperparameters and network configurations used in our experiments.

\begin{table}
\centering
\caption{\emph{Shared hyper-parameters.} Hyperparameters used by all methods.}
\vspace{1em}
\label{tab:hyperparameters}
\adjustbox{max width=0.98\textwidth}{
\begin{tabular}{lcc}
\toprule
Hyper-parameter & Value \\
\midrule
Evaluation episodes &10\\
Evaluation frequency & Maximum timesteps / 100\\
Discount factor $(\gamma)$ & 0.99\\
$n$-step returns &1 step\\
Replay ratio &5\\
number-of-critic-networks&10\\
Replay buffer size &100,000\\
Maximum timesteps$^{*}$ &300,000\\ 
Number of hidden layers for all networks &2 \\
Number of hidden units per layer &256\\
Nonlinearity & CReLU\\
Mini-batch size $(n)$ & 256\\
Network regularization method & Layer Normalization (LN) \citep{ball2023efficient}\\
Actor/critic optimizer &Adam \citep{kingma2014adam} \\
Optimizer learning rates $(\eta_{\phi},\eta_{\theta})$ &3e-4\\
Polyak averaging parameter $(\tau)$ &5e-3\\
\bottomrule
\end{tabular}}\\
{\tiny ${}^{*}$ Ballincup, reacher, and cartpole use a reduced number of maximum steps. The former two use 100.000 and the latter 200.000.}
\end{table}

\paragraph{Actor and critic networks.} 
Our implementation of PBAC along with the proposed baselines shares the architectural designs provided in \cref{tab:arch} for each critic network in the ensemble and actor network. The quantities $d_s$ and $d_a$ denote the dimensions of the state space and the action space, respectively. 
The output of the actor network is passed through a $\tanh(\cdot)$ function for the deterministic actor networks used in BEN and BootDQN-P. 
We implement the probabilistic actors of DRND and PBAC as a \emph{squashed Gaussian head} that uses the first $d_a$ dimensions of its input as the mean and the second $d_a$ dimensions as the variance of a normal distribution.
A squashed Gaussian refers to a Gaussian distribution transformed to the $(-1,1)$ interval via $\tanh$. The network predicts the mean and log variance of this Gaussian distribution.

\begin{table}
\centering
\caption{\emph{Actor and critic architectures.} Here, $d_s$ and $d_a$ are the dimensionalities of the state and action spaces.} 
\vspace{1em}
\label{tab:arch}
\adjustbox{max width=0.98\textwidth}{
\begin{tabular}{cc}
\toprule
\multicolumn{1}{c}{\small Actor network} & \multicolumn{1}{c}{\small Critic network} \\ \midrule
\begin{tabular}[c]{@{}c@{}}\small\texttt{Linear($d_s$, 256)}\\ \small\texttt{Layer-Norm}\\\small\texttt{CReLU()}\end{tabular} & \begin{tabular}[c]{@{}c@{}}\small\texttt{Linear($d_s+d_a$, 256)} \\ \small\texttt{Layer-Norm}\\ 
\small\texttt{CReLU()}  \end{tabular} \\ 
\begin{tabular}[c]{@{}c@{}}\small\texttt{Linear(256, 256)}\\ \small\texttt{Layer-Norm}\\ \small\texttt{CReLU()}\end{tabular}  & \begin{tabular}[c]{@{}c@{}}\small\texttt{Linear(256, 256)}\\ \small\texttt{Layer-Norm}\\ \small\texttt{CReLU()}\end{tabular} \\ 
\begin{tabular}[c]{@{}c@{}}\small\texttt{Linear(256,  $2\times d_a$)}\\ \end{tabular} & \small\texttt{Linear(256,1)} \\ 
\begin{tabular}[c]{@{}c@{}}\small\texttt{SquashedGaussian($2\times d_a$,  $ d_a$)}\\ \end{tabular} &  \\
 \bottomrule
\end{tabular}}
\end{table}

\subsubsection{Baselines} \label{appsec:baselines}
We compare PBAC against a range of state-of-the-art general-purpose actor-critic methods that are all empowered by ensembles.
All design choices mentioned above that were found advantageous for our model and not harmful to others have also been applied to the baselines. Below, we also explain further changes compared to the original works that we found to be beneficial in preliminary experiments.

\paragraph{BEN.}
Bayesian Exploration Networks (BEN), introduced by \citet{fellows2024ben} serves as the best representative that can learn a Bayesian optimal policy and handle the exploration versus exploitation tradeoff. 
We modify BEN by relying on Bayesian deep ensembles~\citep{lakshminarayanan2017simple} instead of the normalizing flow-based approach used in the original work. 
An ensemble of $K-1$ heteroscedastic critics learns a heteroscedastic univariate normal distribution over the Bellman target, while the $K$th critic, regularized by the ensemble, guides the actor network.

\paragraph{BootDQN-P.}
Bootstrapped DQN with randomized prior functions \citep{osband2018randomized}, a Bayesian model-free approach, serve as a close relative to our method. 
As BootDQN-P was introduced for discrete action spaces rather than continuous ones, we adapt it as follows. 
We use a deterministic version of PBAC's actor, i.e., an actor ensemble with a shared trunk and $K$ \emph{deterministic} heads.
As with PBAC a random head is chosen and fixed for a number of steps. See the \emph{posterior sampling rate} parameter in \cref{tab:hyper_bdqnp} and the references to posterior sampling provided in the main paper.
Additionally, instead of a fixed bootstrap mask as proposed by \citet{osband2018randomized}, we sample a mask at every objective computation as discussed in Step (vi) in the main paper.
Throughout all environments we share most parameters with PBAC. Changes for specific parameters are discussed in \cref{tab:hyper_bdqnp}.
BootDQN-P's randomized priors allow the model to explore even in the presence of sparse rewards.
A prior scaling (PS) parameter regulates their influence.

\begin{table}
\centering
\caption{\emph{Hyperparameters specific to BootDQN-P.} The chosen bootstrap rate (BR), posterior sampling rate (PSR), and prior scaling (PS) for each of the eleven environments. Changes from the defaults are marked in bold.}
\vspace{1em}
\label{tab:hyper_bdqnp}
\adjustbox{max width=0.98\textwidth}{
\begin{tabular}{lcccc}
\toprule
\textsc{Task} & BR & PSR & PS\\
 \midrule
 ant & 0.05& 5& 5.0\\
 hopper &0.05 &5 & 5.0 \\
 humanoid& \textbf{0.1}& \textbf{1}& 5.0 \\
 ballincup& 0.05&5 &5.0 \\
 cartpole & 0.05&\textbf{10} & \textbf{1.0} \\
 reacher& 0.05& \textbf{1} & \textbf{1.0} \\
 ant (delayed) & \textbf{0.1}&\textbf{1} &\textbf{1.0} \\
 ant (very delayed) &0.05 &5 &5.0 \\
 hopper (delayed) & 0.05&5 & 5.0 \\
 hopper (very delayed) & 0.05&5 & 5.0 \\
 humanoid (delayed) & \textbf{0.1}&\textbf{1} &  \textbf{9.0} \\
 window close (sparse) &0.05&5&5.0 \\
 window open (sparse) &0.05&5&5.0  \\
 drawer close (sparse) &0.05&5&5.0   \\
 drawer open (sparse)&0.05&5&5.0  \\
 reach hidden (sparse)&0.05&5&5.0  \\
 button press topdown (sparse)&0.05&5&5.0  \\
 door open (sparse)&0.05&5&5.0  \\ 
\bottomrule
\end{tabular}
}
\end{table}

\paragraph{DRND.}
Distributional randomized network distillation (DRND)~\citep{yang2024exploration} models the distribution of prediction errors from a random network. This distributional information is used as a signal to guide exploration. As in the original work, we integrate it into a soft actor-critic (SAC)~\citep{haarnoja2018soft} framework.
This random predictor network is trained via the same objective and uses the same architecture as proposed by \citet{yang2024exploration}. Actor and critic networks follow the architectural choices described above. Additionally, we optimize the $\alpha$ scaling parameter in SAC, as is common practice.

\section{Pseudocode}\label{appsec:pseudocode}
For illustrative purposes, we demonstrate the algorithmic steps for a deterministic actor training setting in \cref{alg:PBAC}. However, in the experiments, we use its soft actor version, where the policy network follows a squashed Gaussian distribution, Bellman targets are accompanied by a policy entropy bonus, and the entropy coefficient is learned following the scheme introduced in Soft Actor Critic applications \citep{haarnoja2018soft}.
The automatic entropy tuning mechanism dynamically adjusts the temperature parameter $\alpha$, balancing exploration and exploitation by controlling entropy’s influence on the objective function.

\begin{algorithm}%
  \caption{PAC-Bayesian Actor Critic (PBAC) }
  \begin{algorithmic}[1]
    \label{alg:PBAC}
    \STATE {\bfseries Input:} Polyak parameter $\tau\in(0,1)$, mini-batch size $n\in\mdN$, bootstrap rate $\kappa$, posterior sampling rate $\texttt{PSR}$,  prior variance $ \sigma_0^2$, number of ensemble elements $K$
    \STATE {\bfseries Initialize:} replay buffer $\mathcal{D}\gets \emptyset$, critic parameters $\{\theta_{k}\}$ and targets $\bar\theta_k\leftarrow \theta_k$, actor network trunk $g$ and heads $h_1, \ldots h_K$.
    \STATE \texttt{$s \gets $ env.reset()} and $e \gets 0$ (interaction counter)
    \WHILE{training}
    \IF{ $\bmod(e,\texttt{PSR})=0$}
      \STATE $j\sim \mathrm{Uniform}(K)$ 
      \STATE $\pi \gets \pi_{g \circ h_j}(s)$  (Update active critic)
    \ENDIF
    \STATE $a \gets \pi_j(s)$ and $(r,s') \gets \texttt{env.step}(a)$ and $e\gets e + 1$
    \STATE Store new observation: $\mathcal{D}\gets\mathcal{D} \cup (s,a,r,s')$
    \STATE Sample minibatch: $B \sim \mathcal{D}$ with $|B| = n$
    \STATE Sample a bootstrap mask: $b_{ik} \sim \mathrm{Bernoulli}(1-\kappa),\quad \forall [n] \times  [K]$ 
    \STATE Compute prior mean and posterior moments: $\forall (s_i,a_i,r_i,s'_i) \in B$ do
    \begin{align*}  
     \bar{\mu}_{\pi_{g \circ h_j}}(s'_i) &\gets \frac{1}{K} \sum_{k=1}^K b_{ik}(\bar{X}_k(s'_i,\pi_{g \circ h_j}(s'_i)))\\   
      \mu_{\pi_{g \circ h_j}}(s_i) &\gets \frac{1}{K} \sum_{k=1}^K b_{ik}(X_k(s_i,{\pi_{g \circ h_j}}(s_i)))\\
     \sigma_{\pi_{g \circ h_j}}^2(s_i) & \gets \frac{1}{K-1} \sum_{k=1}^K b_{ik}(X_k(s_i,{\pi_{g \circ h_j}}(s_i))-\mu_{\pi_{g \circ h_j}}(s_i))^2
    \end{align*}
    \STATE Update critics $k \in  [K]$: \begin{align*}
        &\theta_k \gets  \arg \min_{\theta_k} \Bigg \{ \frac{1}{n K} \sum_{i=1}^{n} \sum_{k=1}^K b_{ik} \Big(r_i + \gamma \bar{X}_k(s'_i, {\pi_{g \circ h_j}}(s'_i))-X_k(s_i,{\pi_{g \circ h_j}}(s_i)) \Big)^2\\
        &+ \frac{1}{ n K}  \sum_{i=1}^{n}  \sum_{k=1}^{K}  \frac{ b_{ik} \Big (r_i+\gamma \bar{\mu}_{\pi_{g \circ h_j}}(s'_i)-X_k(s_i,{\pi_{g \circ h_j}}(s_i)) \Big)^2}{2\gamma^2 \sigma_0^2} - \frac{\gamma^2+1/2}{n} \sum_{i=1}^{n} \log \sigma_{\pi_{g \circ h_j}}^2(s_i)\Bigg\} 
    \end{align*}   
    
    \STATE Update actor: \begin{align*} (g, h_1, \ldots, h_K) \gets \argmax_{g, h_1, \ldots, h_K}  \left[ \frac{1}{n K} \sum_{i=1}^n \sum_{k=1}^K X_k(s_i, \pi_{g \circ h_k}(s_i))\right]\\ \end{align*}
    \STATE Update critic targets: $\bar{\theta}_{k}\gets \tau \theta_{k} + (1-\tau)\bar{\theta}_{k} \; \text{ for } \; k \in [K]$
    \STATE {\bf if} episode end {\bf then} \texttt{$s \gets $env.reset()} {\bf else} $s \gets s'$

    \ENDWHILE
  \end{algorithmic}
\end{algorithm}

\section{Further results}\label{appsec:results}

\subsection{Reward curves}\label{appsec:curves}
See \cref{fig:all_curves} for the full reward curves of all MuJoCo and DMControl environments corresponding to the results presented in \cref{tab:result_table} in the main text. Success rate for Meta-World environments are available in \cref{fig:metaworld_curves}, and their corresponding results are available in \cref{tab:success_result_table}

\begin{figure}
    \centering
    \includegraphics[width=0.32\textwidth]{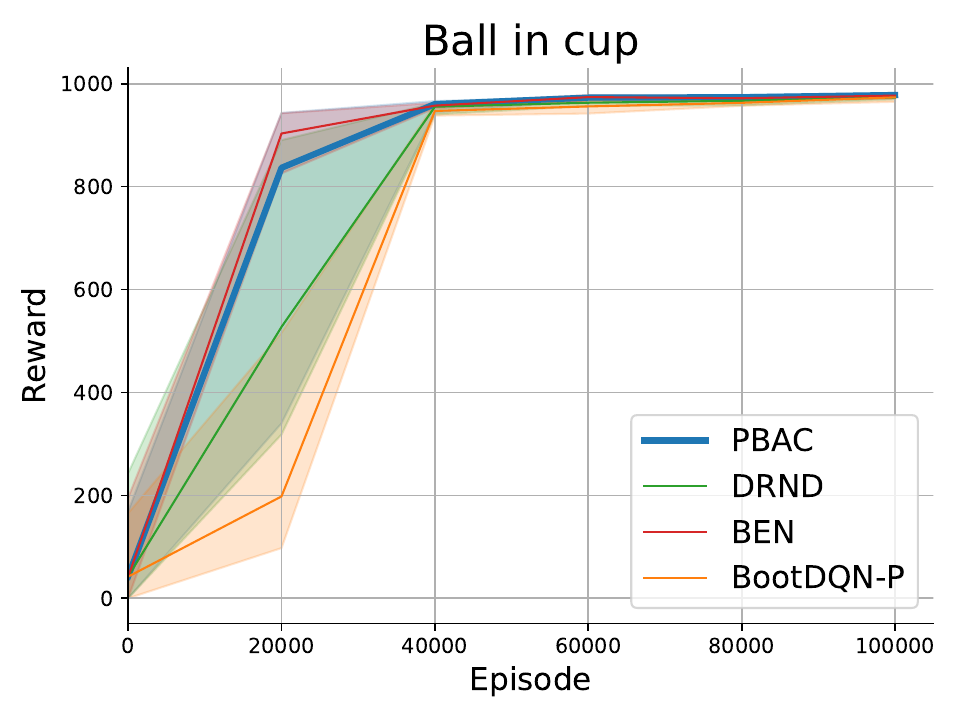}
    \includegraphics[width=0.32\textwidth]{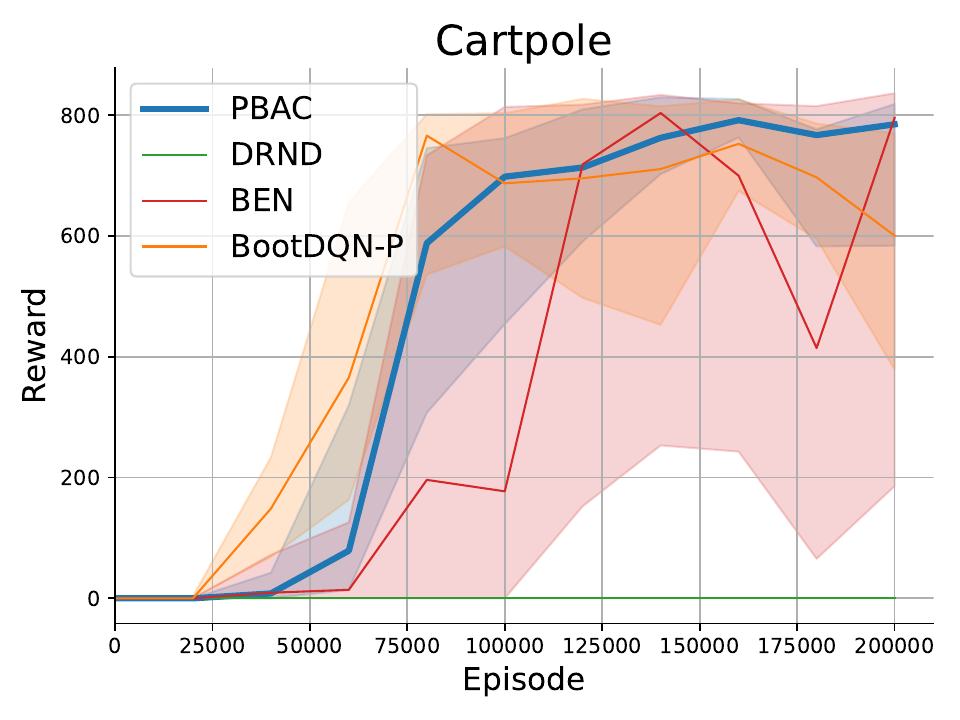}
    \includegraphics[width=0.32\textwidth]{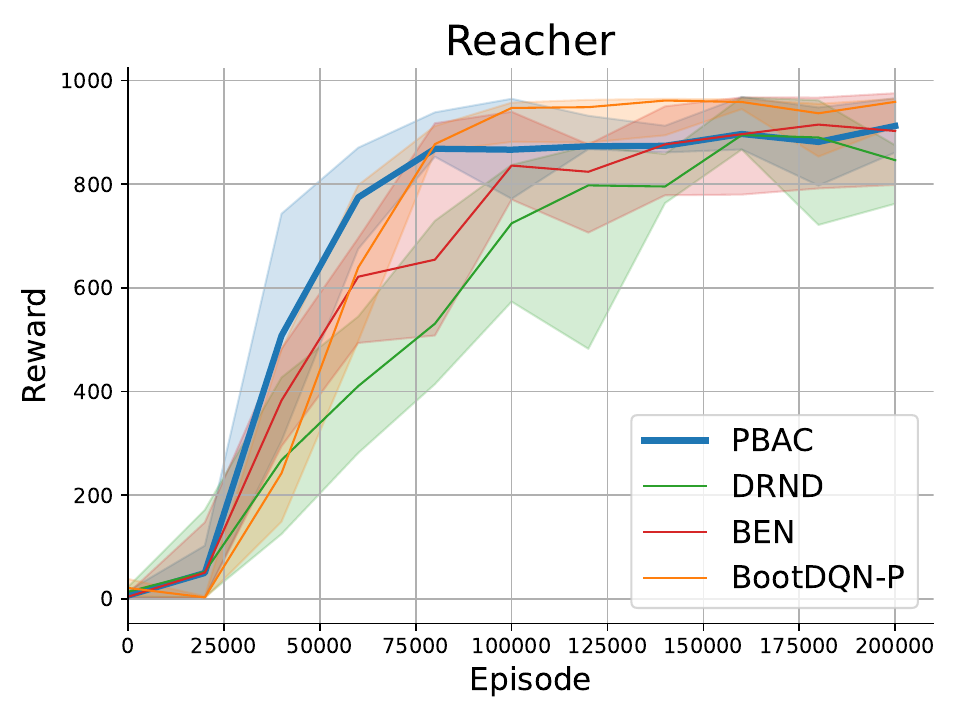}
    \includegraphics[width=0.32\textwidth]{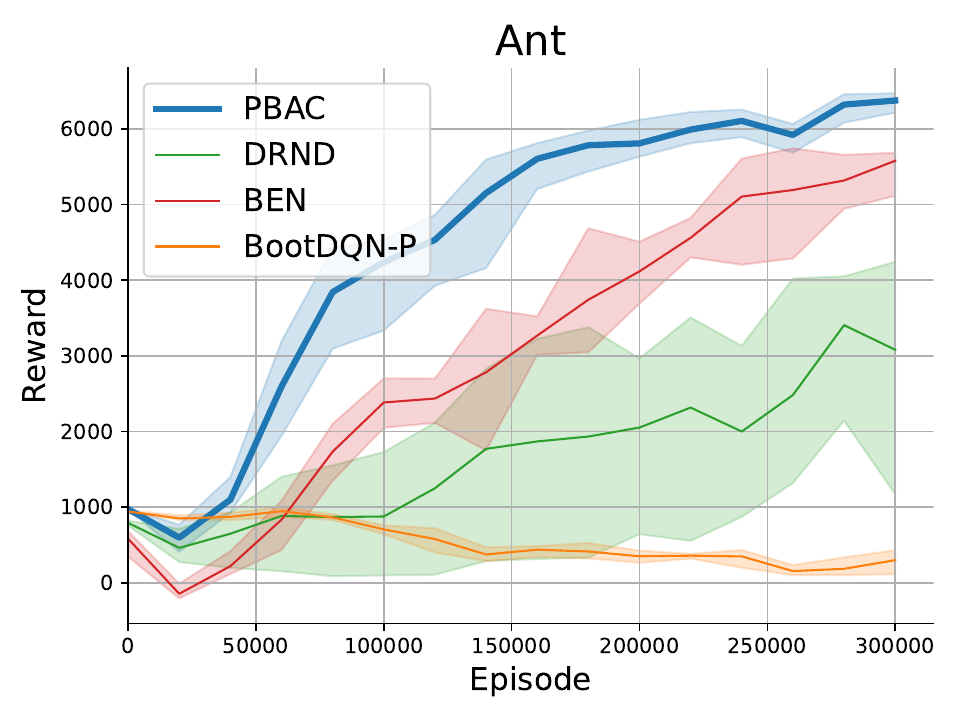}
    \includegraphics[width=0.32\textwidth]{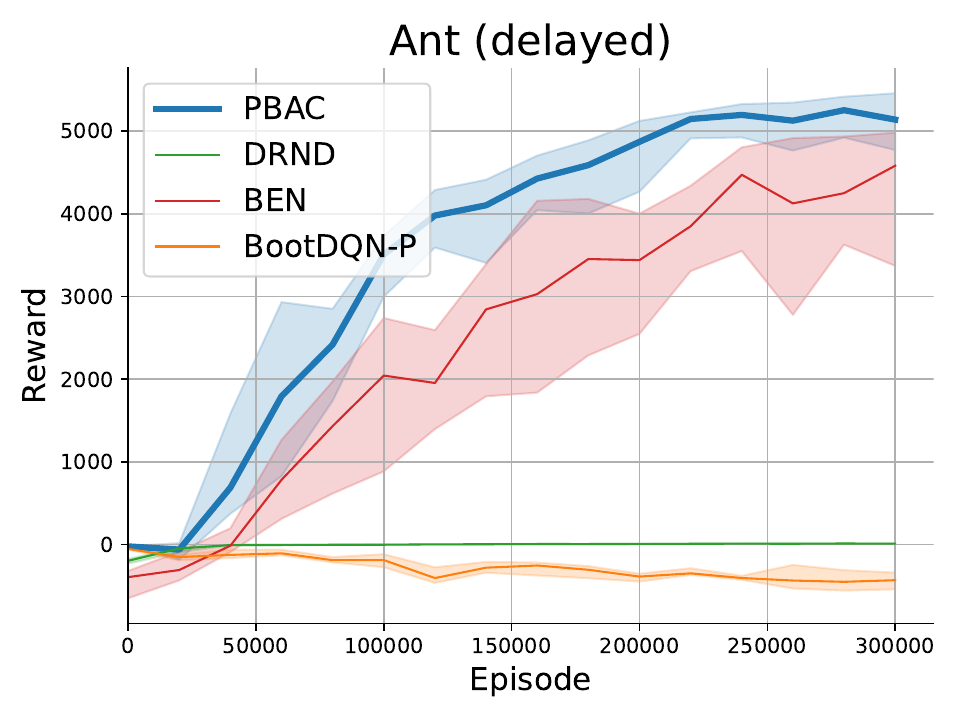}
    \includegraphics[width=0.32\textwidth]{figures/upd_result-sp-2.0-ant.pdf}
    \includegraphics[width=0.32\textwidth]{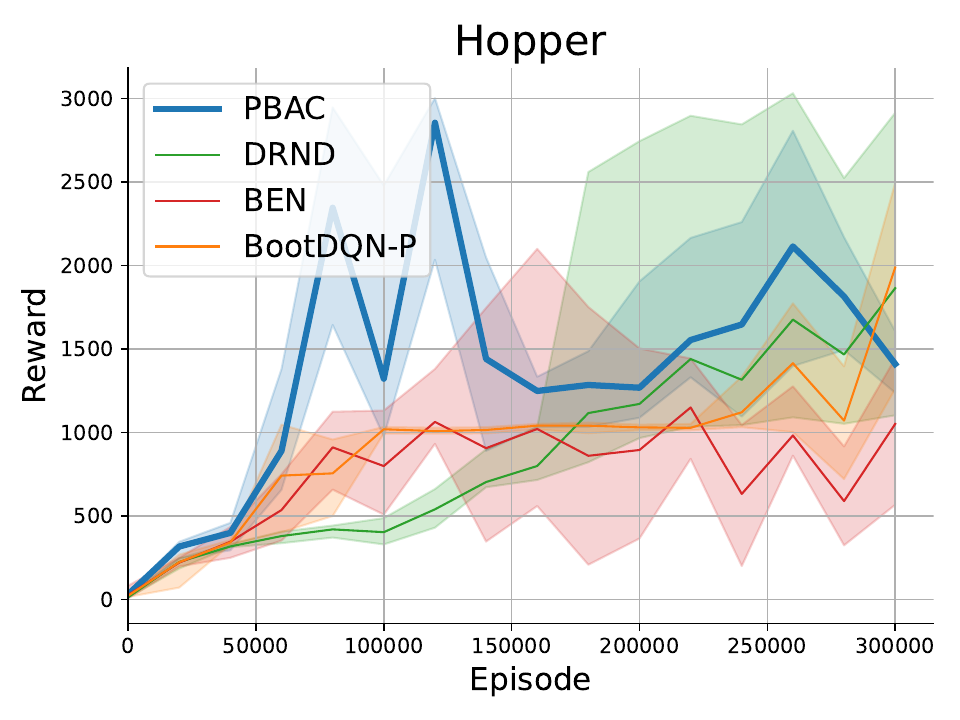}
    \includegraphics[width=0.32\textwidth]{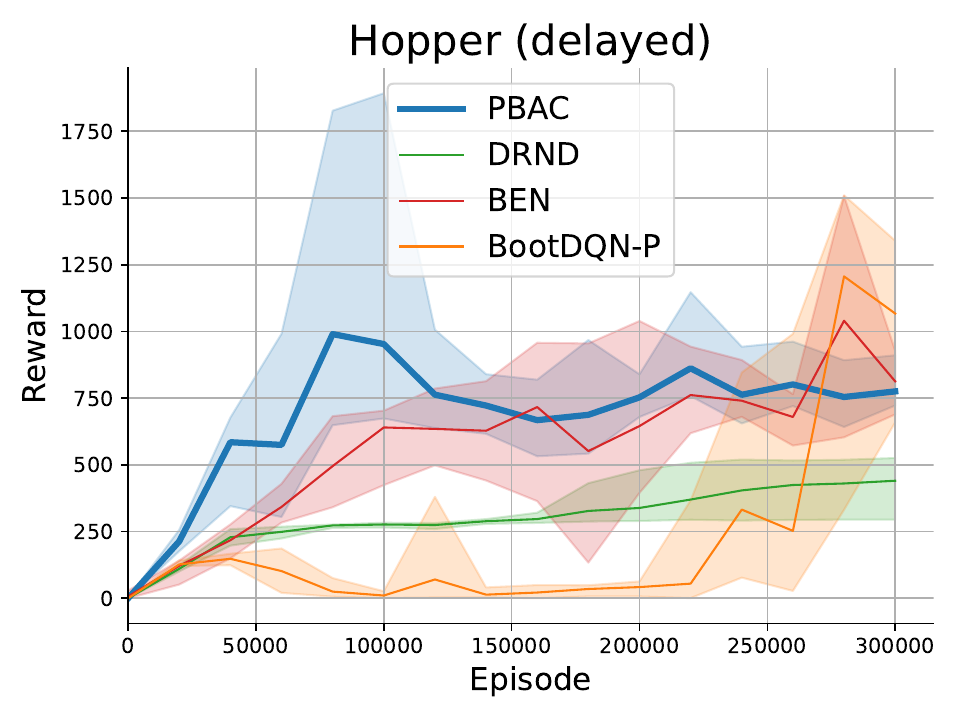}
    \includegraphics[width=0.32\textwidth]{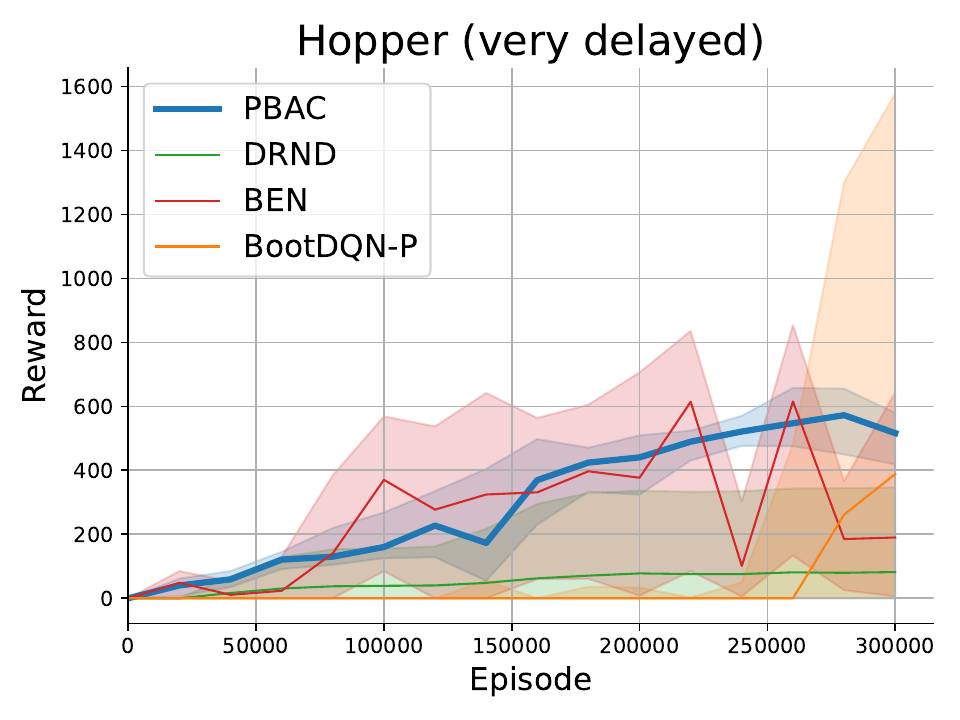}
    \includegraphics[width=0.32\textwidth]{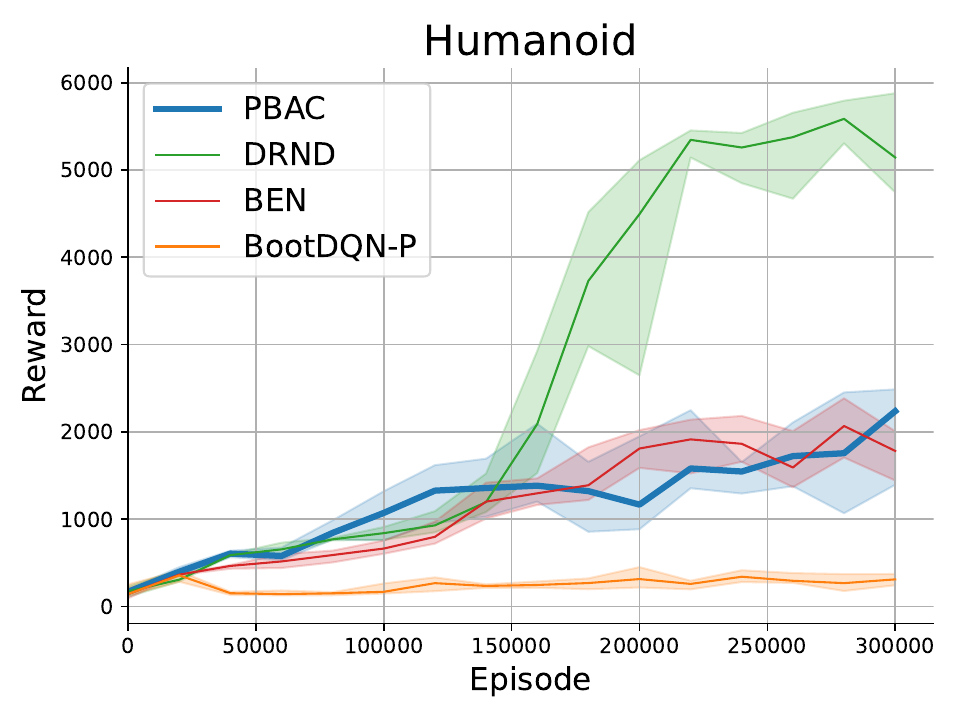}
    \includegraphics[width=0.32\textwidth]{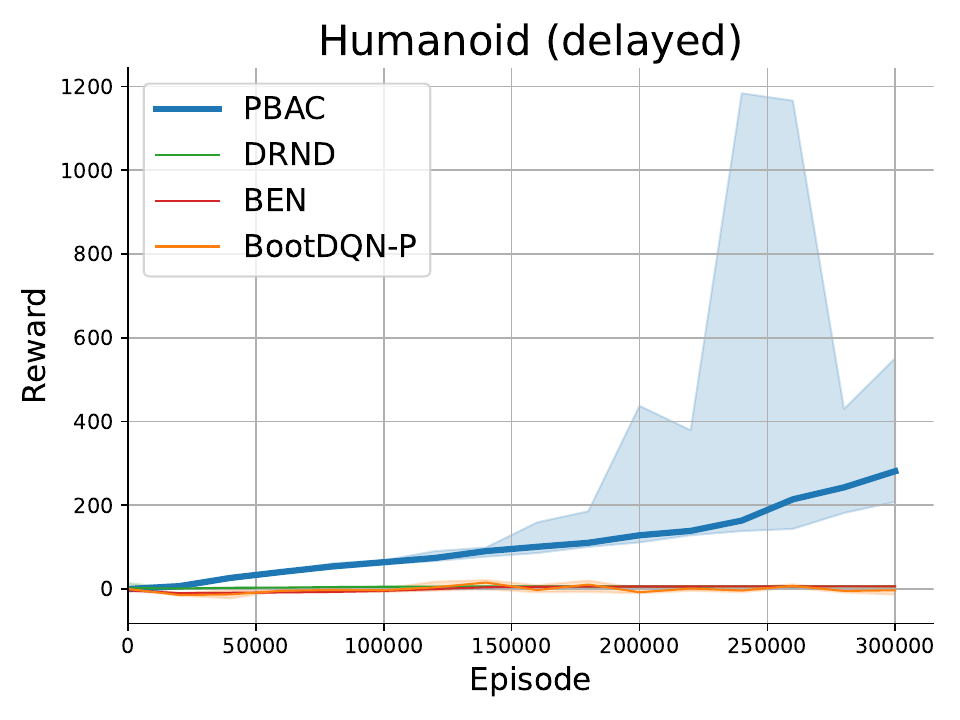}
    \caption{
    \emph{Reward curves.} The reward curves for all environments throughout training corresponding to the results presented in \cref{tab:result_table}. 
    Visualized are the interquartile mean together with the interquartile range over ten seeds.
    }
    \label{fig:all_curves}
\end{figure}

\begin{figure}
    \centering

    \includegraphics[width=0.33\textwidth]{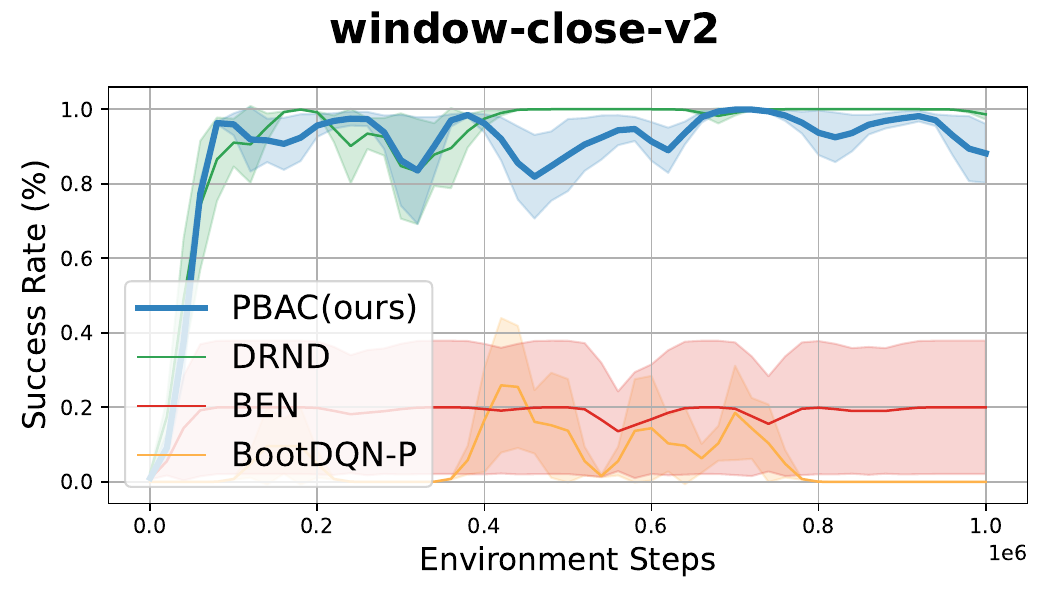}
    \includegraphics[width=0.33\textwidth]{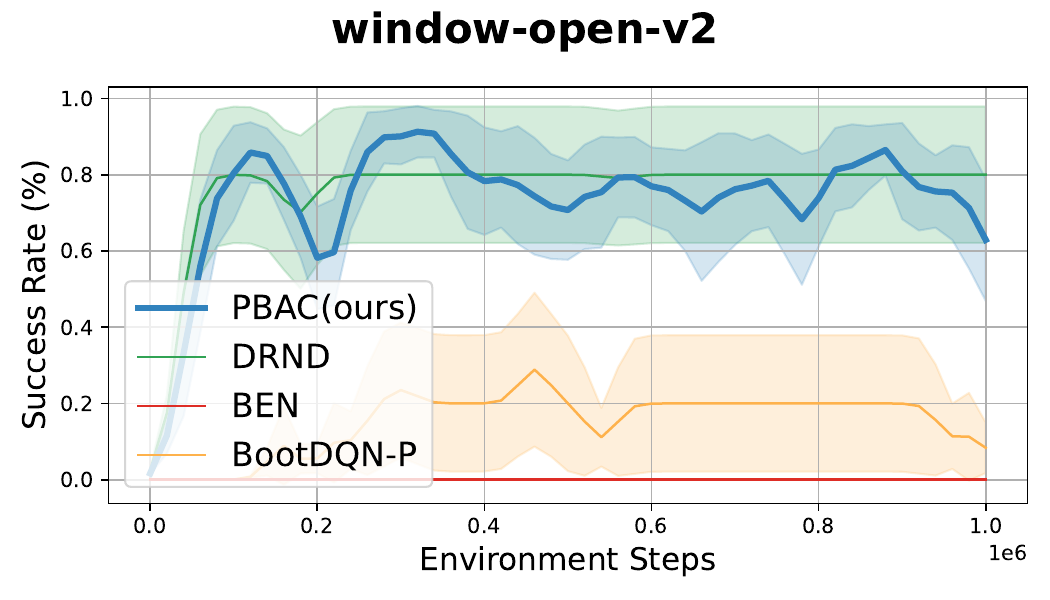}
    \includegraphics[width=0.33\textwidth]{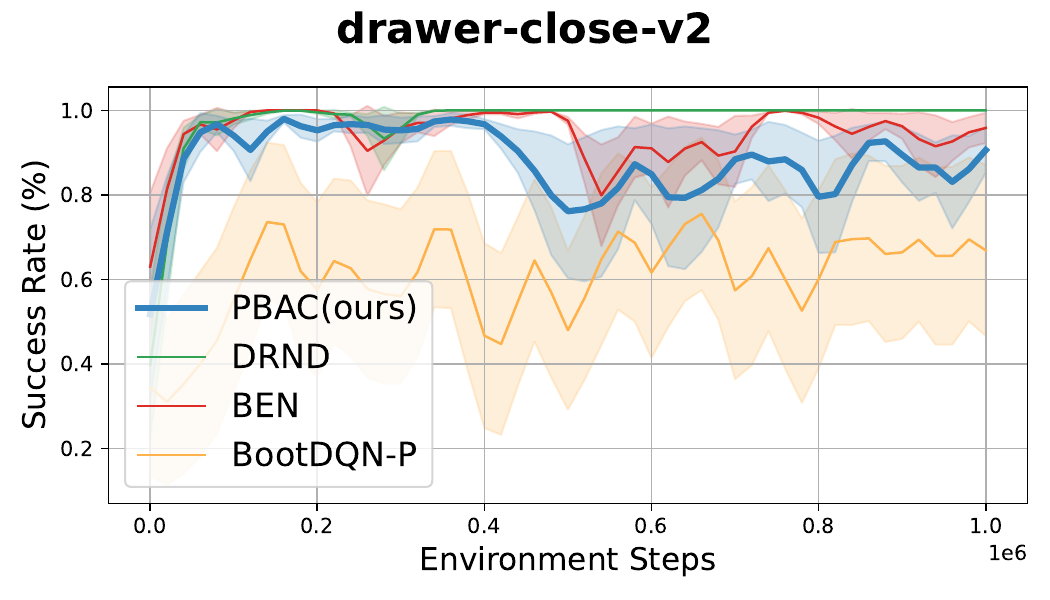}
    \includegraphics[width=0.33\textwidth]{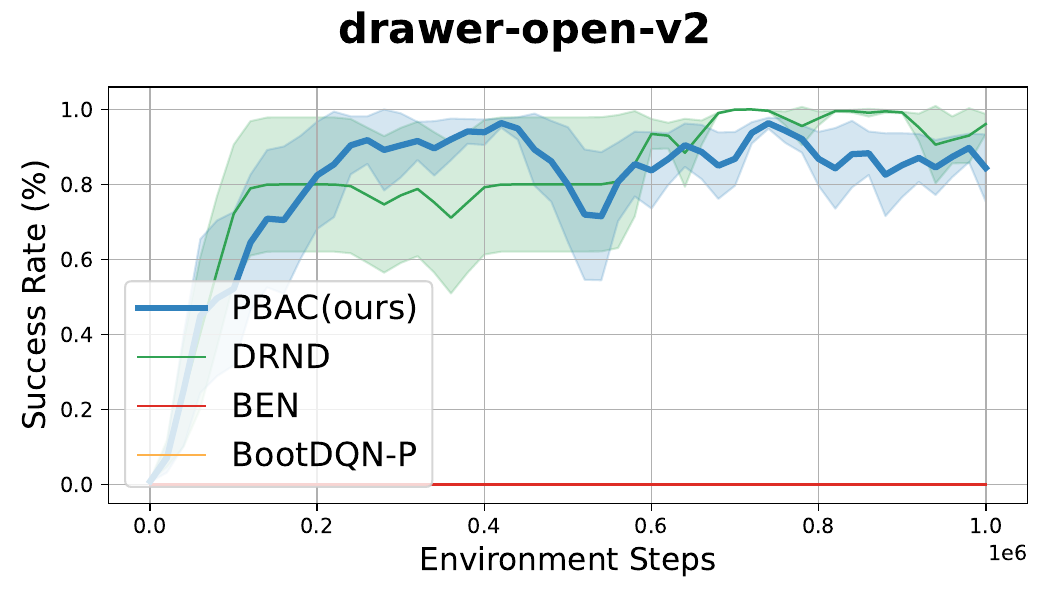}
    \includegraphics[width=0.33\textwidth]{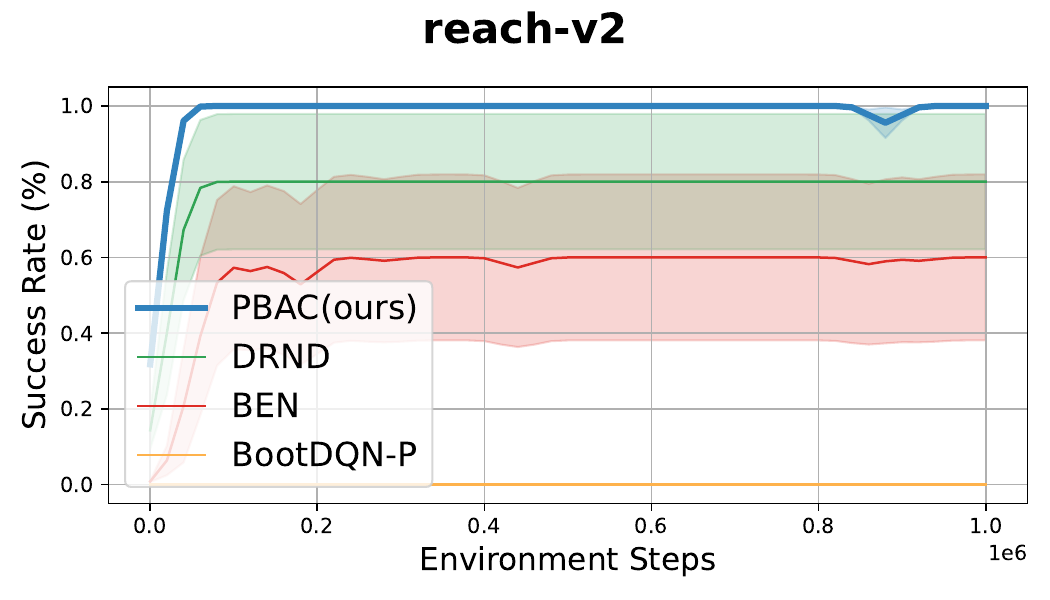}
    \includegraphics[width=0.33\textwidth]{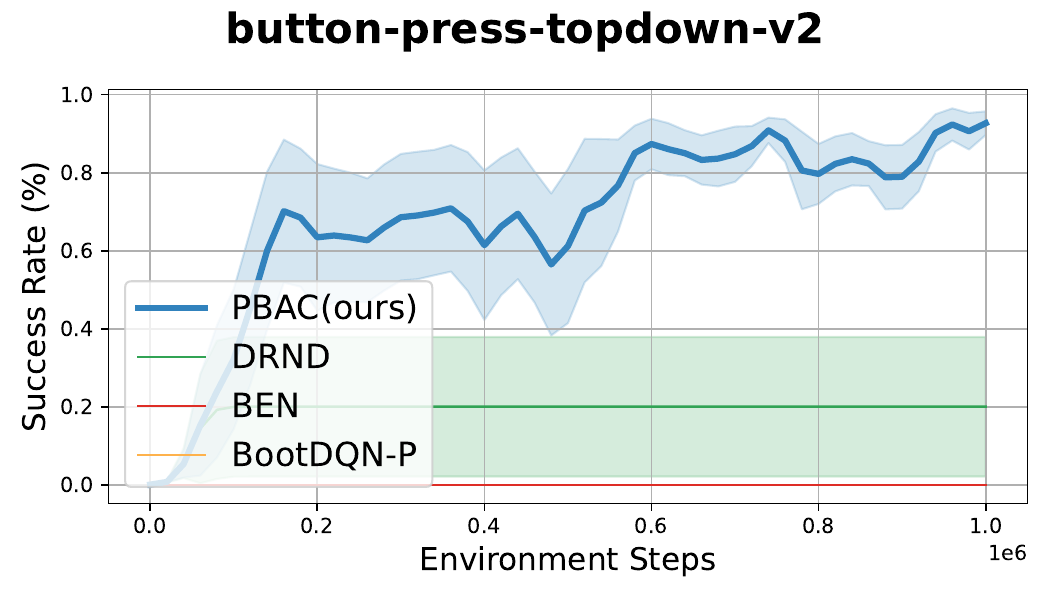}
    \includegraphics[width=0.33\textwidth]{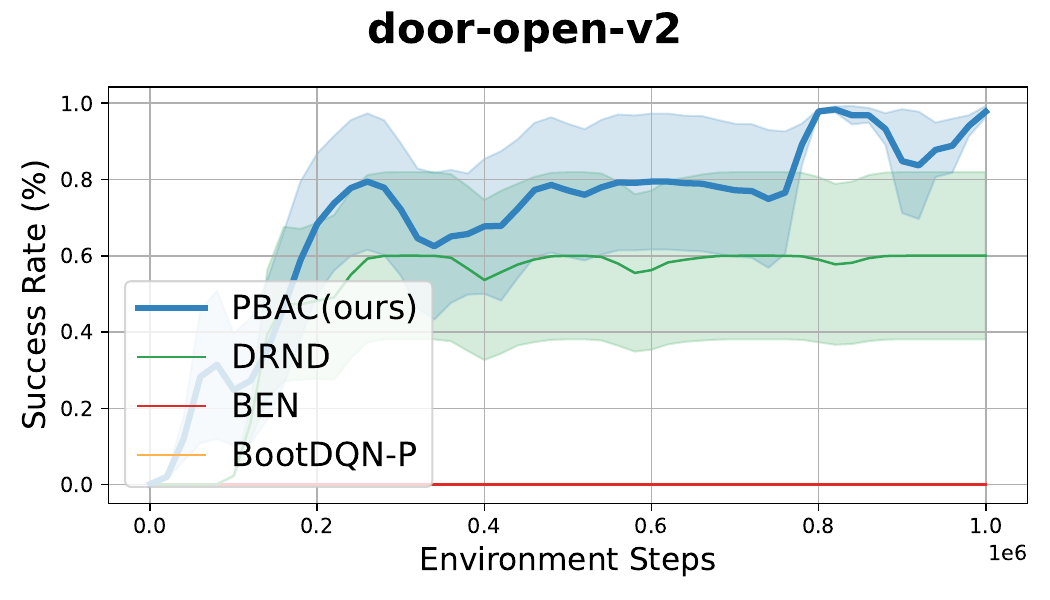}
    
    \caption{
    \emph{Success rate curves.} The success rate curves for sparse Meta-World
    environments throughout training corresponding to the results presented in \cref{tab:success_result_table}. 
    Visualized are the mean over five seeds.
    }
    \label{fig:metaworld_curves}
    \vspace{2.0em}
\end{figure}

\subsection{State space visualizations}\label{appsec:ssvis}
Throughout the training, we record the currently visited state at regular intervals and plot them in five groups, for PBAC and each of the baselines.
We visualize two environments: cartpole and delayed ant. 
In each case, we record every 500th step over the whole training process and record the corresponding state. The whole set is split into five groups, and each is plotted as its own scatter plot.

\subsubsection{Cartpole}

Of cartpole's five state dimensions, only the first two are interpretable, providing the position of the cart and the cosine of the angle of the pole. We visualize them in \cref{fig:teaser_exploration}. As discussed above, the agent receives rewards only if it manages to rmains close to zero with an upward pole, i.e., an angle close to one. 

PBAC is able to quickly explore the state space and then concentrate on visiting the states with high reward (red zone), while remaining flexible enough to explore further regions.
BEN similarly is able to quickly identify the target region but begins to fully exploit it and loses its flexibility in the process. As reported in \cref{tab:result_table}, they end each episode with similar performance. 

Similar to BEN, BootDQN-P has no problem exploring the state space; however, it never manages to find the narrow target and thus never converges.
BEN starts exploring a wide range of states but ultimately becomes stuck in this seed without being able to find the target. As shown in \cref{fig:all_curves}, BEN's performance varies greatly depending on the random initial seed in this environment. As such, this represents a random example of a failure case, not of its general performance on cartpole.
DRND quickly becomes stuck as well within a small subset of the state space, essentially exploring only the position of the cart without ever being able to significantly change the angle of the pole.

\subsection{Effects of the hyperparameters}\label{appsec:ablation}
We evaluate the sensitivity of the training process of PBAC with respect to its three main hyperparameters: bootstrap rate (BR) $\kappa$, posterior sampling rate (PSR), and prior variance (PV) $\sigma_0$ of $\rho_0$, on two environments, cartpole and delayed ant. 
Depending on the environment, PBAC is either sensitive to their choice (see \cref{fig:ablation_pbac_cartpole} on the cartpole environment) and shows clearly interpretable patterns, or it remains insensitive to their choice as in the delayed ant environment visualized in \cref{fig:ablation_pbac_ant}.

\begin{figure}
    \centering
    \includegraphics[width=0.95\linewidth]{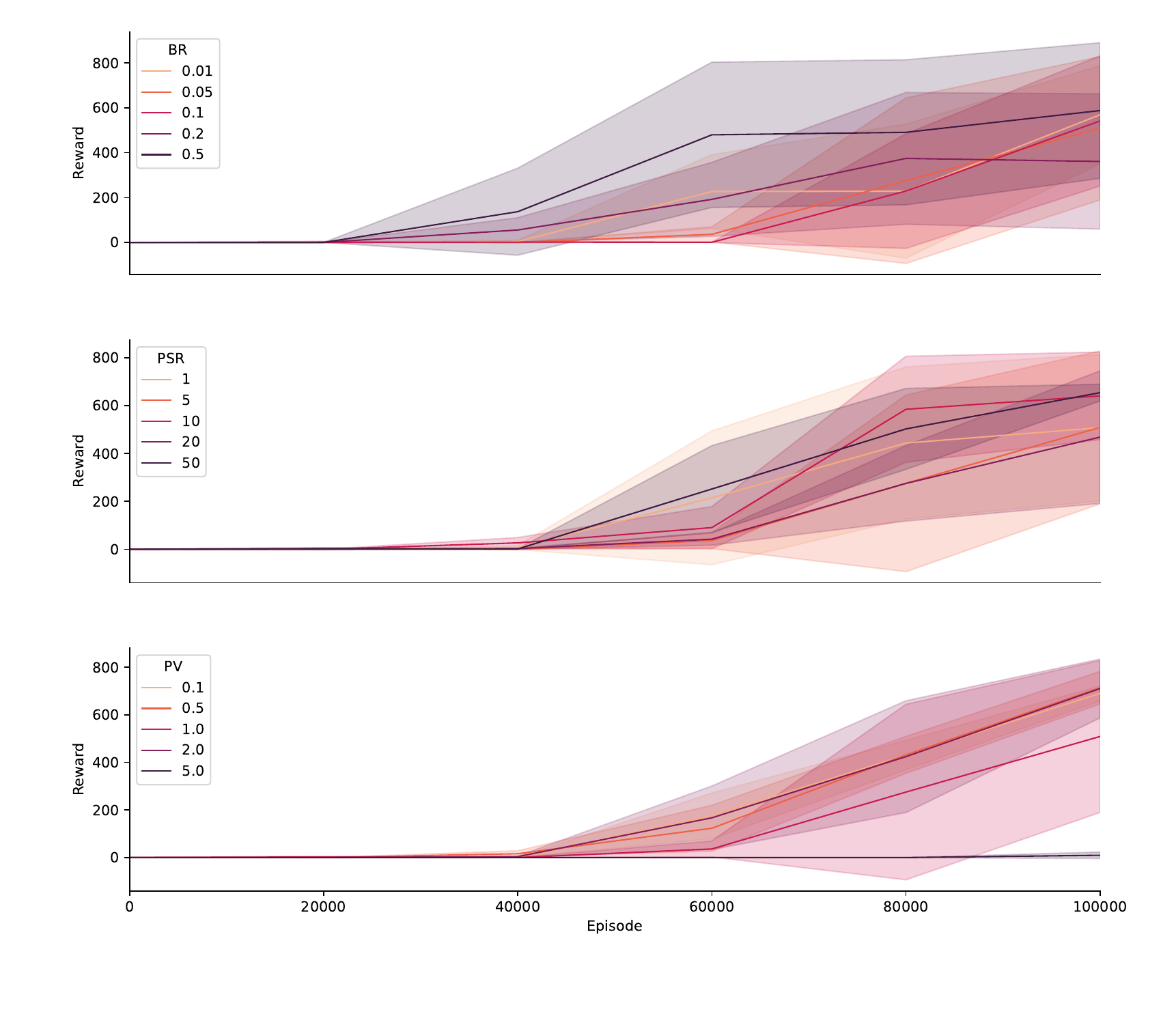}
    \caption{\emph{Ablation results on cartpole.} We show varying bootstrap rates (BR), posterior sampling rates (PSR), and prior variances (PV). While PBAC is mostly robust in terms of varying prior variances, increases in the bootstrap rate and decreases in the posterior sampling rate delay the learning process. Visualized are the interquartile mean together with the interquartile range over three seeds.
    }
    \label{fig:ablation_pbac_cartpole}
\end{figure}

\begin{figure}
    \centering
    \includegraphics[width=0.95\linewidth]{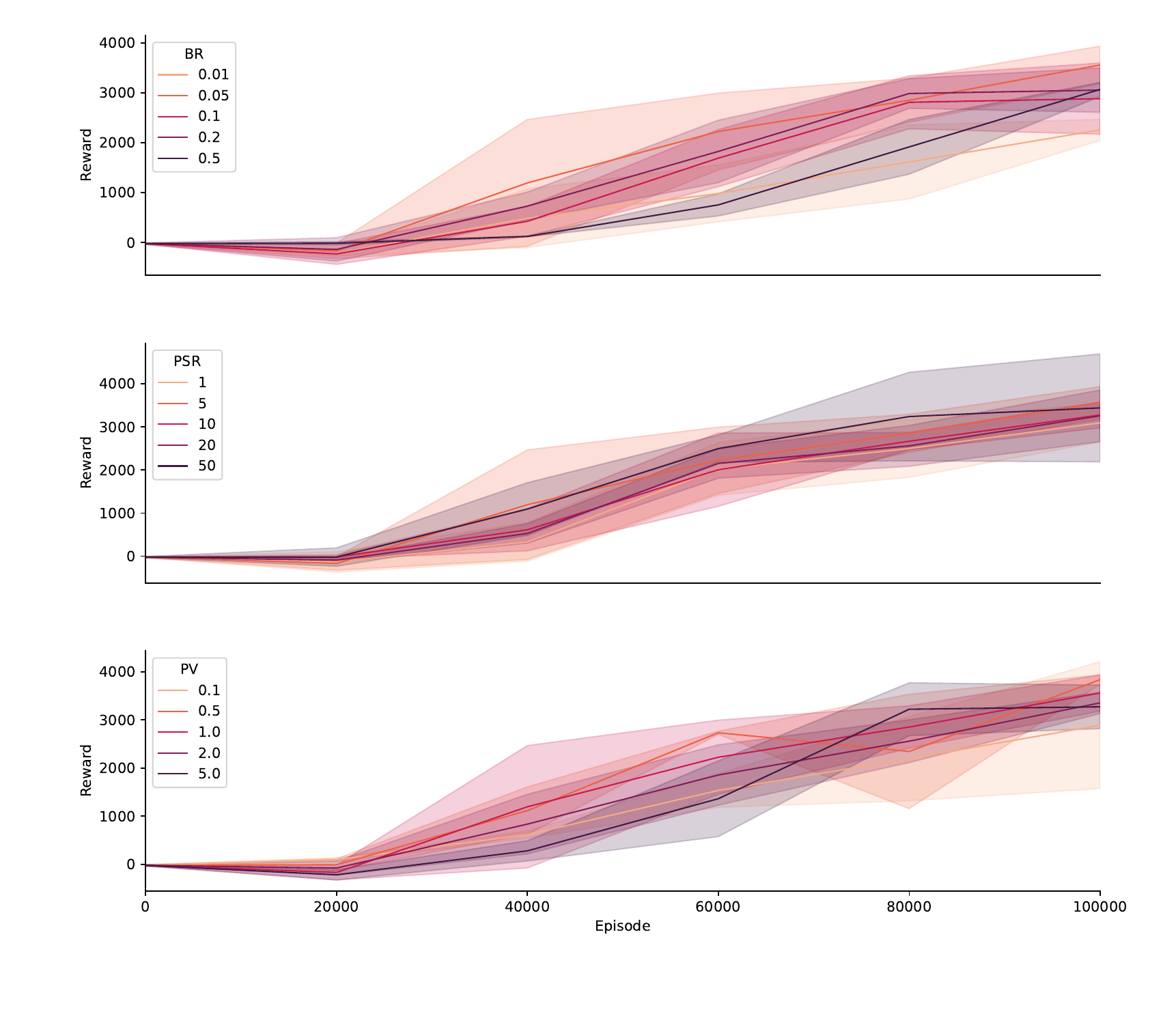}
    \caption{\emph{Ablation results on delayed ant.} We show varying bootstrap rates (BR), posterior sampling rates (PSR), and prior variances (PV). Compared to the ablation on cartpole (see \cref{fig:ablation_pbac_cartpole}), PBAC is robust against variations in all three hyperparameters in the delayed ant environment. Visualized are the interquartile mean together with the interquartile range over three seeds.
    }
    \label{fig:ablation_pbac_ant}
\end{figure}

\subsection{Effects of the individual loss terms}\label{appsec:lossablation}
The results are provided in \cref{fig:individual_term}, \cref{tab:combined_table}, and \cref{tab:area_under_curve}. Of the individual terms, the coherence term is the most important, often even surpassing the full bound in performance. However, usage of this term by itself would reduce the principled bound to using a heuristic.

\begin{table}
    \centering
    \caption{Combined Performance Table: Final IQM reward values over five seeds}
     \vspace{2.0em}
    \label{tab:combined_table}
    \resizebox{\textwidth}{!}{%
    \begin{tabular}{lccccccc}
    \toprule
    & \textbf{Full} & \textbf{Coh + Pro} & \textbf{Div + Pro} & \textbf{Div + Coh} & \textbf{Coh} & \textbf{Div} & \textbf{Pro} \\
    \midrule
    ant & 6486.38 $\sd{\pm 46.34}$ & 6443.09 $\sd{\pm 69.78}$ & 6114.77 $\sd{\pm 88.27}$ & 6235.94 $\sd{\pm 52.95}$ & \textbf{6575.82 $\sd{\pm 106.43}$} & 6430.10 $\sd{\pm 57.56}$ & -28.73 $\sd{\pm 13.38}$ \\
    ant (delayed) & 5370.94 $\sd{\pm 47.58}$ & 5285.30 $\sd{\pm 106.44}$ & 5512.53 $\sd{\pm 54.91}$ & 5482.34 $\sd{\pm 52.85}$ & 5496.36 $\sd{\pm 68.61}$ & \textbf{5580.13 $\sd{\pm 48.50}$} & -74.01 $\sd{\pm 28.26}$ \\
    ant (very delayed) & 4584.05 $\sd{\pm 463.67}$ & 2265.29 $\sd{\pm 758.75}$ & 4795.91 $\sd{\pm 147.57}$ & 3504.50 $\sd{\pm 1581.00}$ & \textbf{5062.85 $\sd{\pm 129.49}$} & 4071.91 $\sd{\pm 1509.62}$ & -82.74 $\sd{\pm 25.41}$ \\
    \midrule
    hopper & 1445.37 $\sd{\pm 174.89}$ & 1757.47 $\sd{\pm 330.15}$ & 1740.61 $\sd{\pm 97.33}$ & 1669.02 $\sd{\pm 94.84}$ & \textbf{2395.87 $\sd{\pm 451.57}$} & 1473.30 $\sd{\pm 307.96}$ & 12.64 $\sd{\pm 3.61}$ \\
    hopper (delayed) & 804.54 $\sd{\pm 67.49}$ & 883.97 $\sd{\pm 112.21}$ & 820.58 $\sd{\pm 80.15}$ & 869.30 $\sd{\pm 86.83}$ & \textbf{1016.46 $\sd{\pm 123.19}$} & 837.01 $\sd{\pm 88.56}$ & -0.05 $\sd{\pm 0.04}$ \\
    hopper (very delayed) & 522.52 $\sd{\pm 39.47}$ & 627.71 $\sd{\pm 59.22}$ & 625.09 $\sd{\pm 22.20}$ & 633.29 $\sd{\pm 90.05}$ & \textbf{723.93 $\sd{\pm 59.10}$} & 672.12 $\sd{\pm 154.96}$ & -0.01 $\sd{\pm 0.00}$ \\
    \midrule
    humanoid & 1627.70 $\sd{\pm 381.57}$ & 1751.76 $\sd{\pm 985.24}$ & 1362.18 $\sd{\pm 395.08}$ & 867.66 $\sd{\pm 314.19}$ & 650.86 $\sd{\pm 146.60}$ & \textbf{3023.49 $\sd{\pm 1423.07}$} & 86.38 $\sd{\pm 6.77}$ \\
    humanoid (delayed) & 723.77 $\sd{\pm 727.71}$ & 1184.91 $\sd{\pm 833.83}$ & \textbf{2324.15 $\sd{\pm 158.99}$} & 579.67 $\sd{\pm 249.60}$ & 924.13 $\sd{\pm 449.09}$ & 607.18 $\sd{\pm 471.98}$ & -10.52 $\sd{\pm 1.36}$ \\
    \bottomrule
    \end{tabular}%
    }
\end{table}

\begin{table*}
    \centering
    \caption{Area under learning curve over five seeds.}
     \vspace{2.0em}
    \label{tab:area_under_curve}
    \begin{tabular}{lccccccc}
    \toprule
    & \textbf{Full} & \textbf{Coh + Pro} & \textbf{Div + Pro} & \textbf{Div + Coh} & \textbf{Coh} & \textbf{Div} & \textbf{Pro} \\
    \midrule
    ant & 4296.69 & 4367.53 & 3499.05 & 4181.68 & \textbf{4567.02} & 4026.58 & -59.30 \\
    ant (delayed) & 3447.80 & 3553.92 & 3476.81 & 3622.39 & 3634.59 & \textbf{3671.60} & -433.75 \\
    ant (very delayed) & 2106.61 & 1028.84 & 2441.88 & 2107.06 & \textbf{2548.01} & 2156.64 & -440.14 \\
    \midrule
    hopper & 1406.96 & 1629.28 & 1546.43 & 1558.10 & 1415.29 & \textbf{1724.74} &  21.51 \\
    hopper (delayed) & 777.06 &  772.01 & \textbf{871.38} & 792.61 & 683.21 & 759.15 & 2.34 \\
    hopper (very delayed) & 319.34 & 429.80  & 565.81 & \textbf{588.71} & 365.99 & 366.56 & -0.02 \\
    \midrule
    humanoid & 1164.12 & 1244.46 & 1314.70 & 1104.44 & 999.07  & \textbf{1448.97} & 120.41 \\
    humanoid (delayed) & 301.35 & 384.02 & \textbf{580.31} & 223.59 & 268.41 & 310.43 & -9.02 \\
    \bottomrule
    \end{tabular}%
\end{table*}

\begin{figure}
    \centering
    \includegraphics[width=0.49\textwidth]{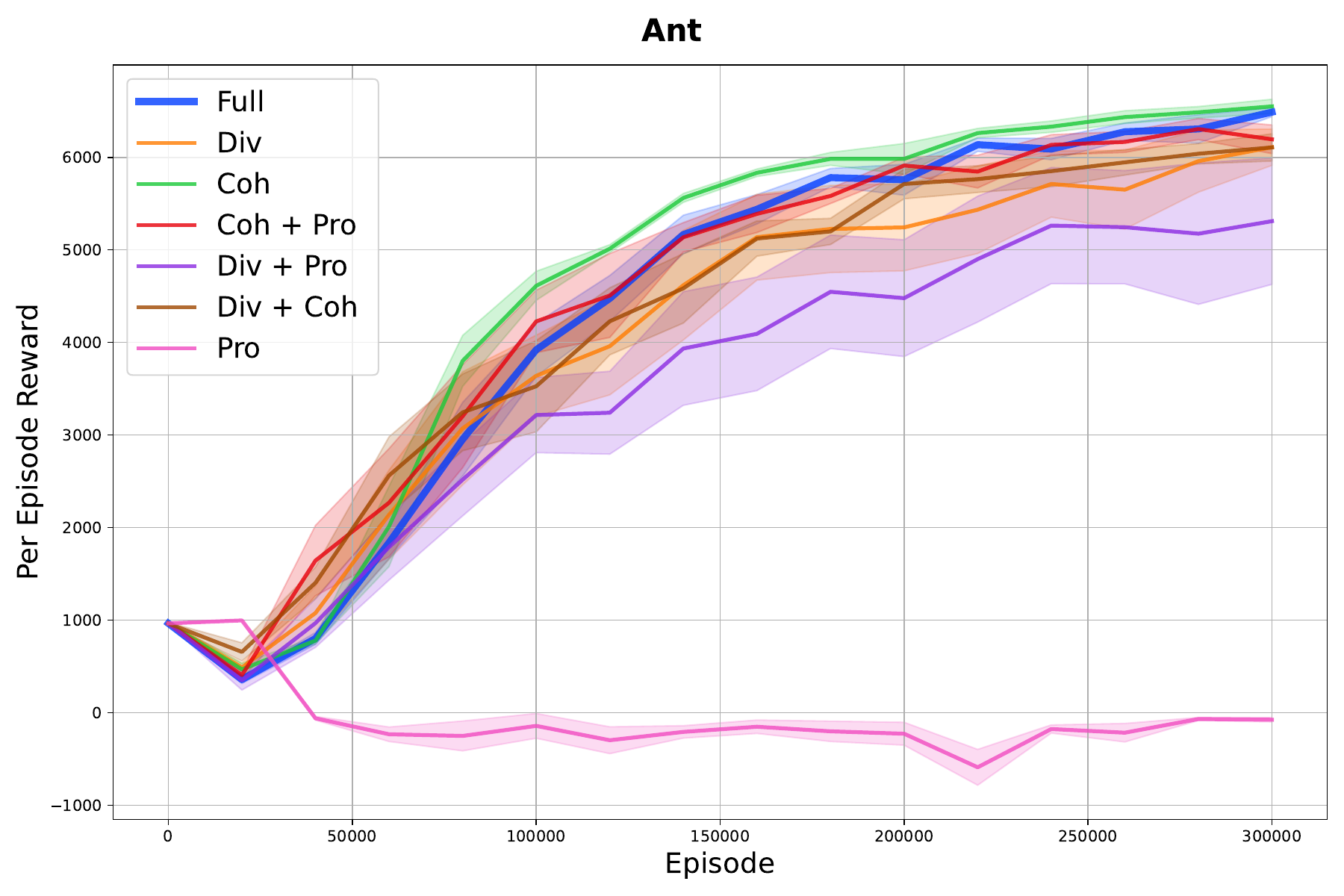}
    \includegraphics[width=0.49\textwidth]{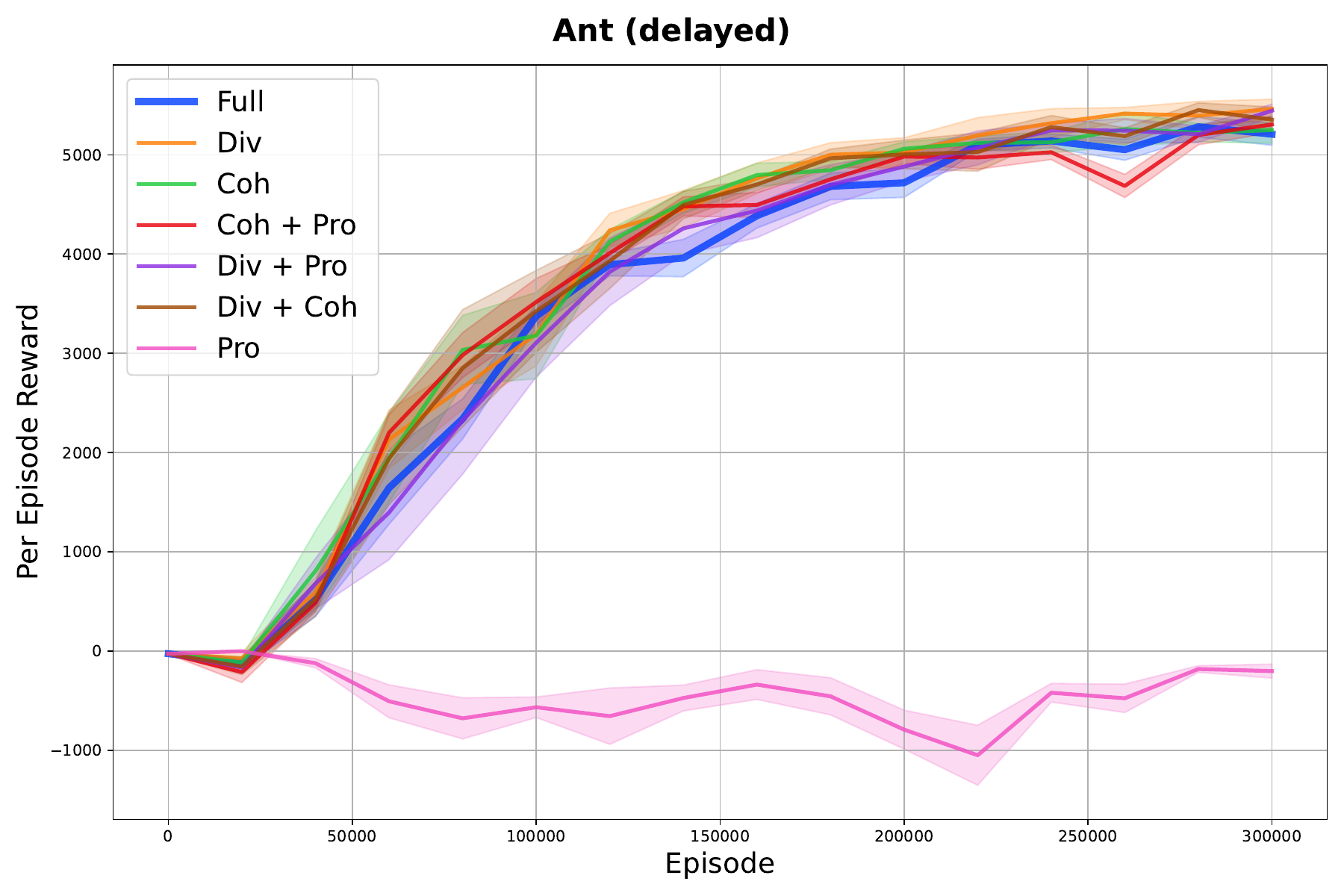}
    \includegraphics[width=0.49\textwidth]{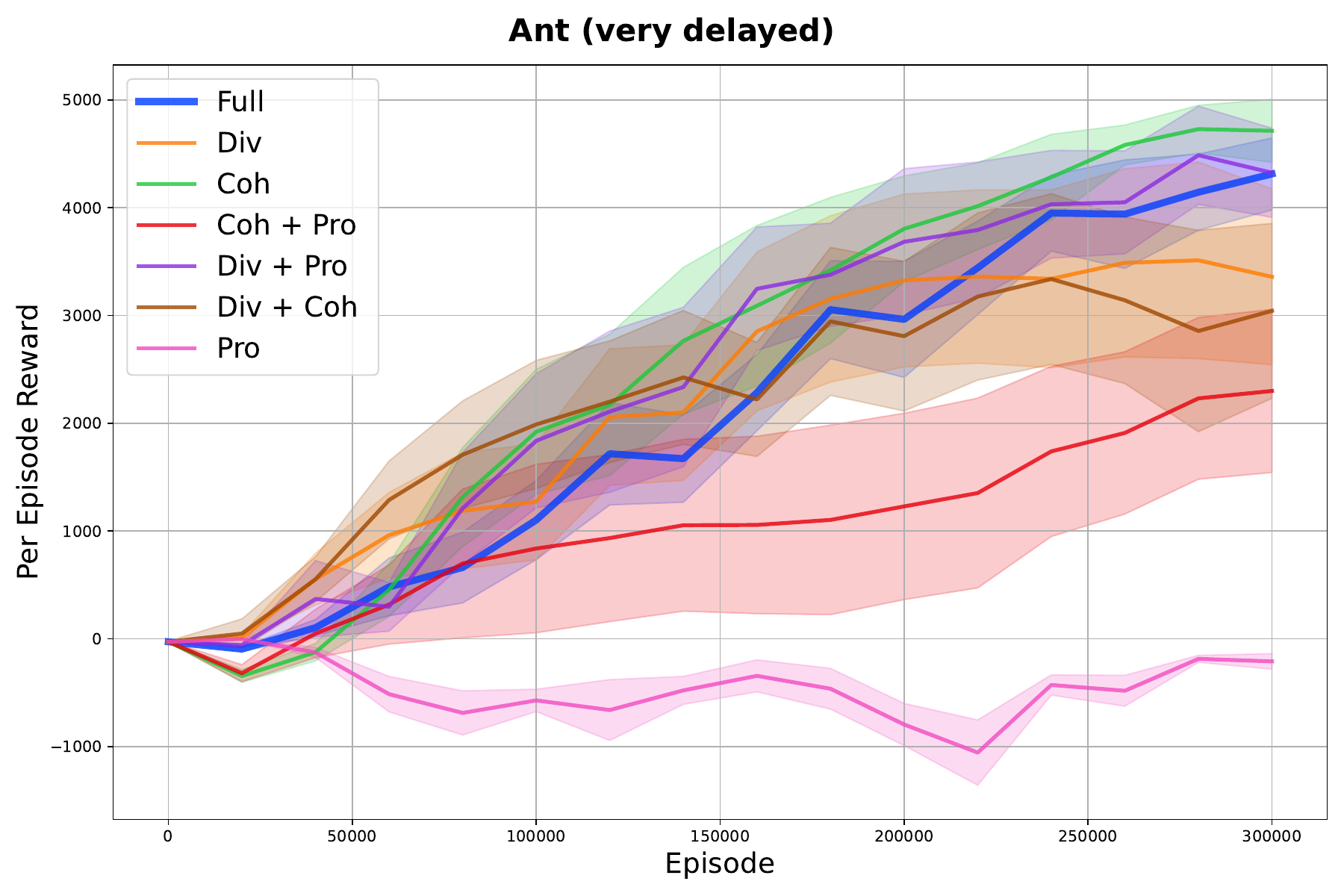}
    \includegraphics[width=0.49\textwidth]{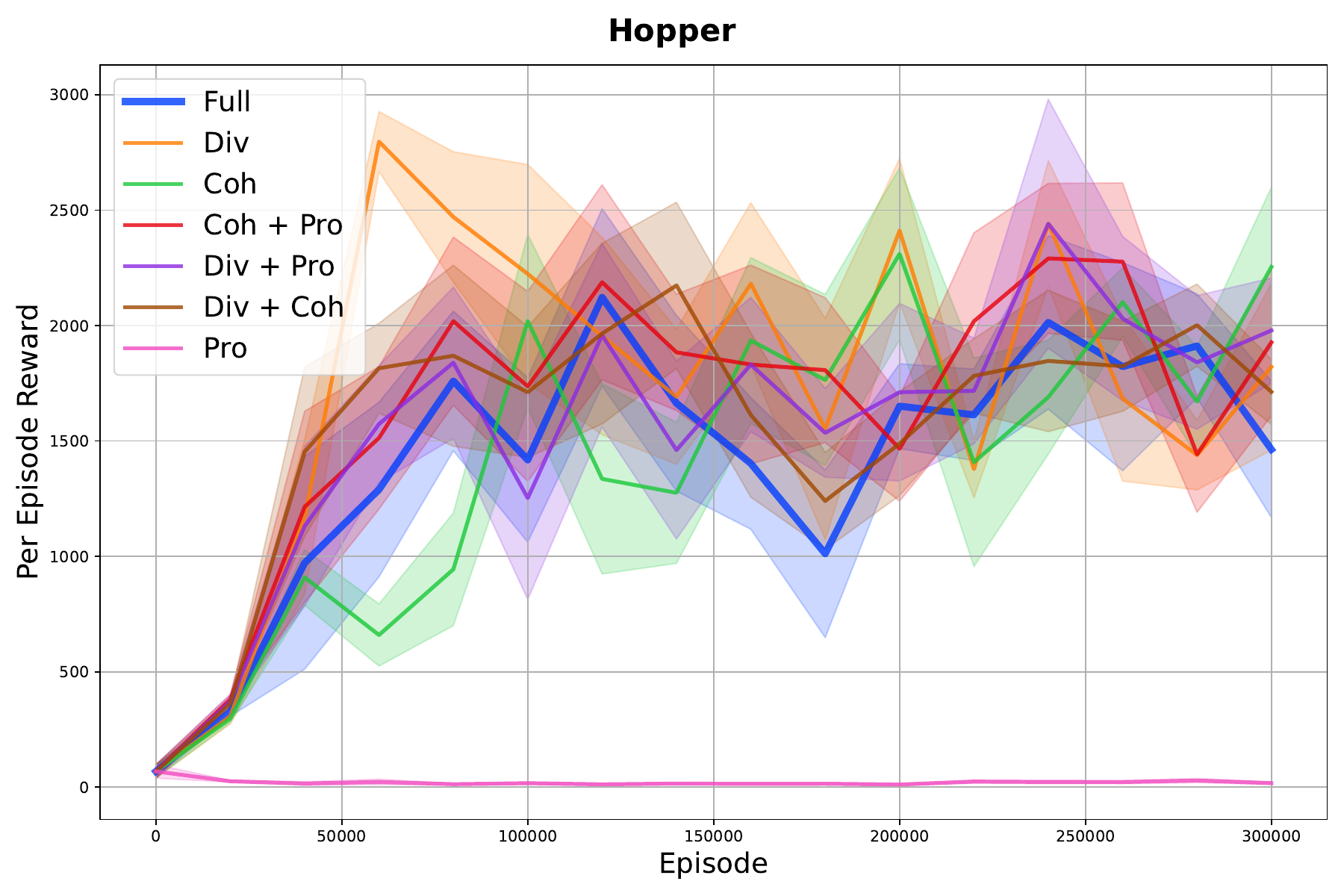}
    
    \caption{
    \emph{Reward curves.} Effect of individual loss terms on performance
    Visualized where the interquartile mean together with the interquartile range over five seeds are shown.
    }
    \label{fig:individual_term}
\end{figure}

\begin{figure}
    \centering
    \includegraphics[width=0.49\textwidth]{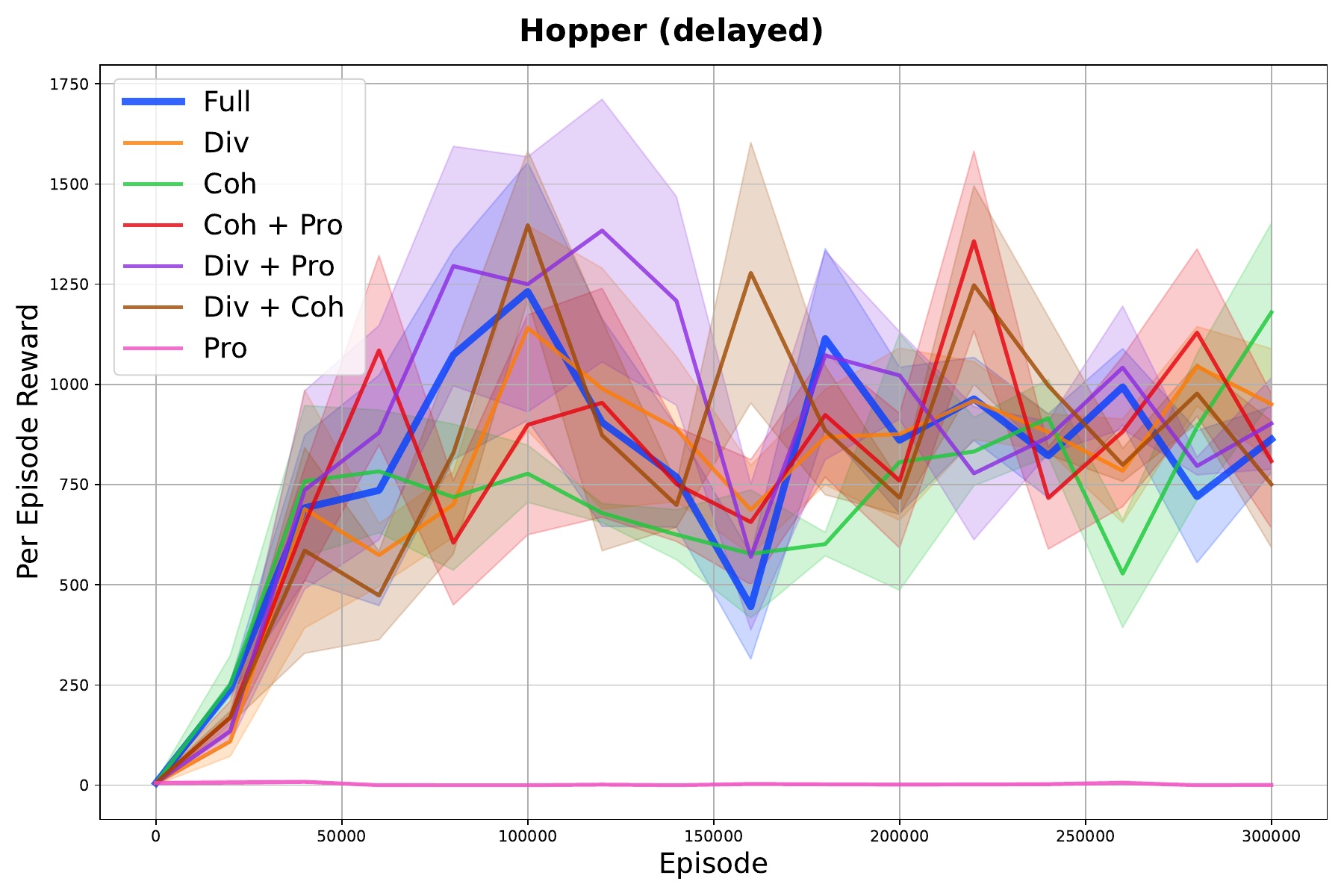}
    \includegraphics[width=0.49\textwidth]{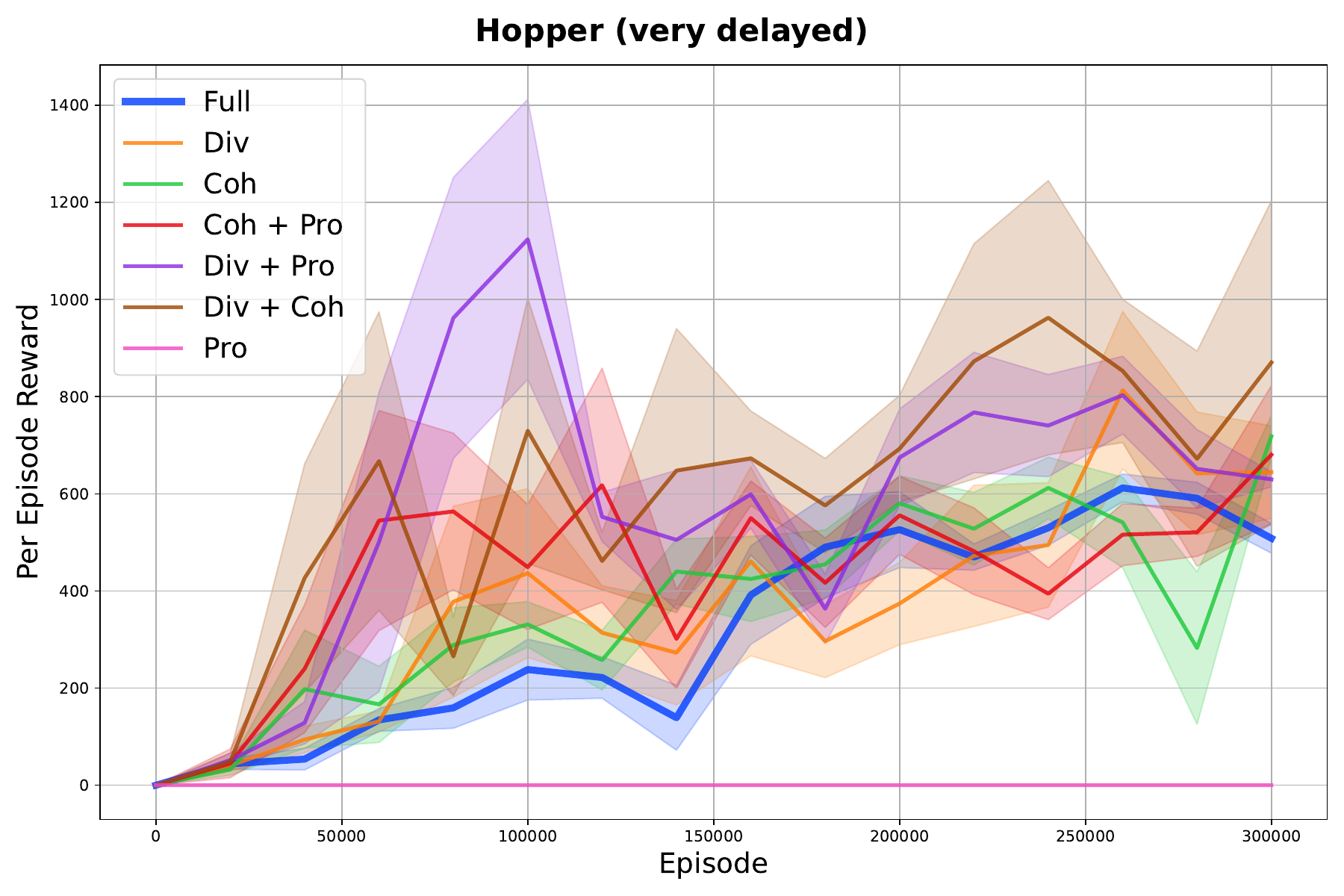}
    \includegraphics[width=0.49\textwidth]{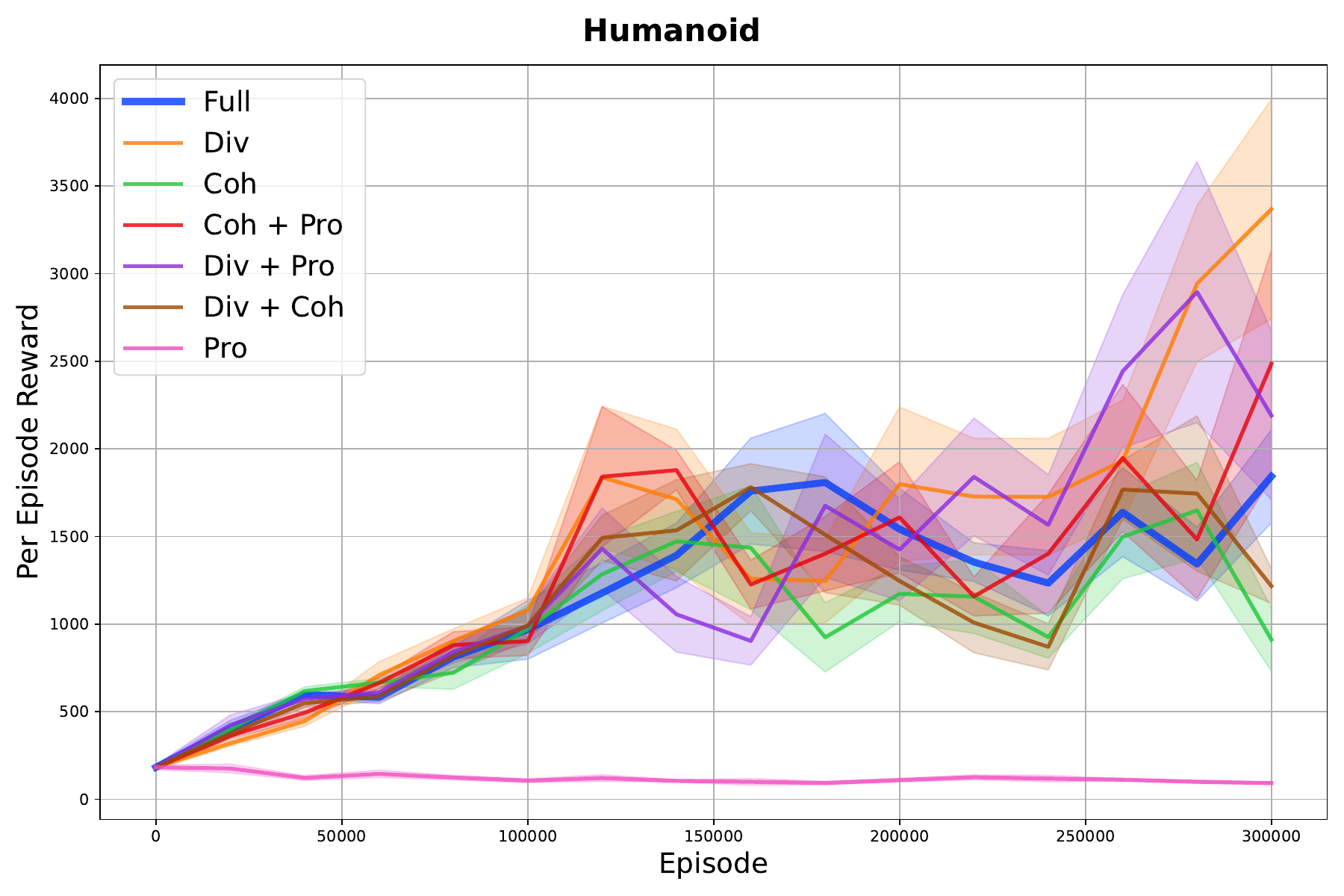}
    \includegraphics[width=0.49\textwidth]{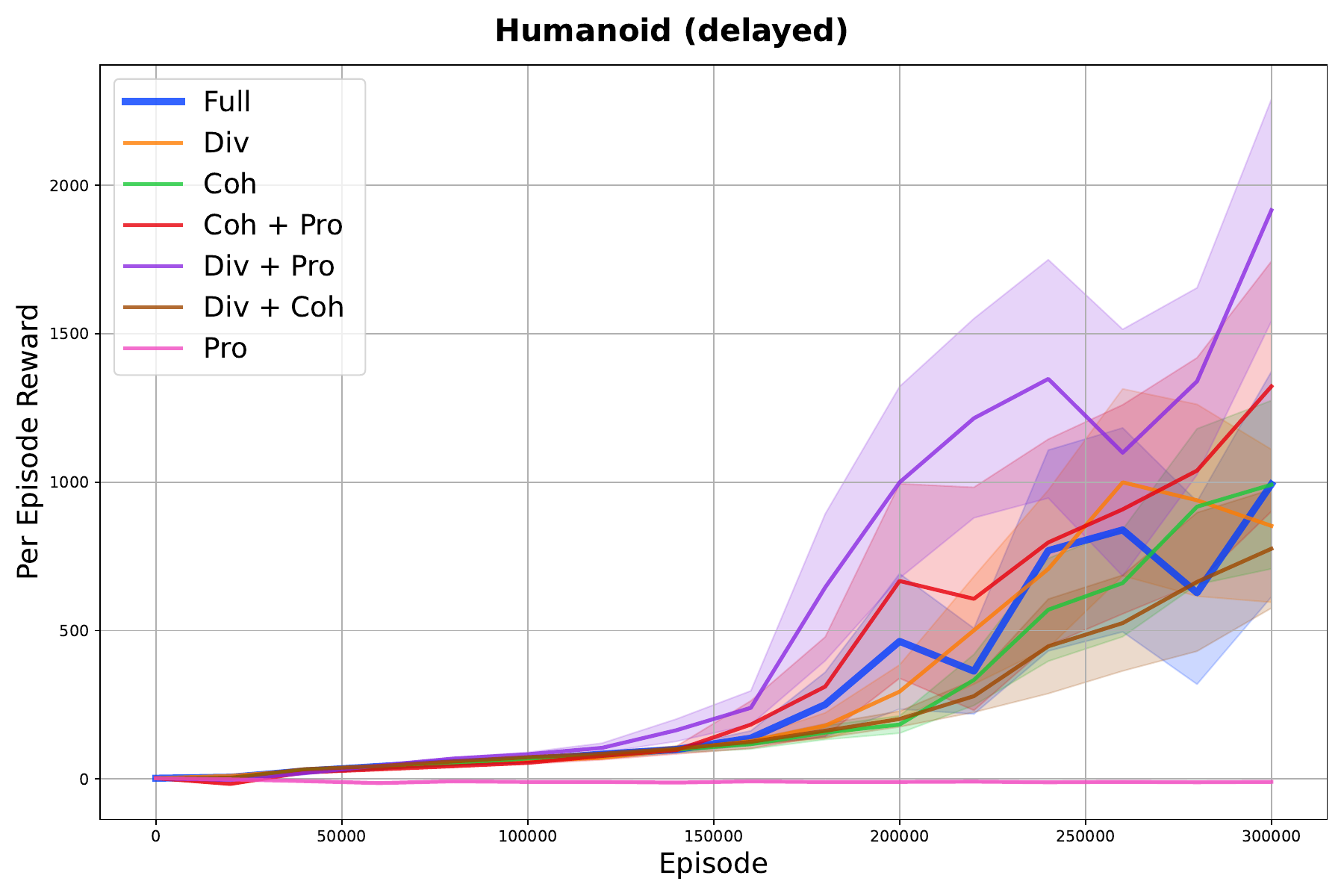}
    
    \caption{
    \emph{Reward curves (continued).} Effect of individual loss terms on performance
    Visualized where the interquartile mean together with the interquartile range over five seeds are shown.
    }
    \label{fig:individual_term}
\end{figure}

\subsection{Computational cost}
To compare the computational efficiency of the proposed method with existing baselines, we ran all four models for \num{50000} environment steps on the Ant benchmark task. This short-run evaluation was not intended to assess final performance, but rather to illustrate the relative training cost over a fixed period. The goal was to demonstrate that our method does not impose significant additional computational overhead. The wall-clock time (in seconds) required to complete 50k steps was as follows: PBAC: \SI{998}{\second}, BEN: \SI{997}{\second}, BootDQN-P:\SI{1154}{\second}, and DRND:\SI{1677}{\second}. These results indicates that PBAC maintains a runtime that is in line with other baseline methods, indicating that its additional theoretical structure does not result in noticeably higher computational demands.

\end{document}